\def\c{c}
\def\d{d}
\def\D{D}
\def\M{M}
\def\R{\mathbb{R}}
\def\V{V}
\def\W{W}
\def\y{y}
\def\z{z}
\def\tT{\top}
\def\Dground{\Lambda}
\def\Wstar{\widehat{W}}
\def\tr{\text{tr}}
\newcommand{\cc}{C}
\renewcommand{\l}{\ell}
\renewcommand{\L}{L}
\def\tdisc{{k}}
\def\tcont{{t}}
\newcommand{\diag}{\mathrm{diag}}
\renewcommand{\epsilon}{\varepsilon}
\newcommand{\NN}{{\mathbb{N}}}
\newcommand{\RR}{{\mathbb{R}}}
\renewcommand{\Re}{{\operatorname{Re}}}
\renewcommand{\Im}{{\operatorname{Im}}}
\newcommand{\Id}{{\operatorname{Id}}}
\newcommand{\abs}[2]{\left| #1 \right|_{#2}}
\newcommand{\norm}[2]{\left\| #1 \right\|_{#2}}
\newcommand{\round}[1]{\left( #1 \right)}
\newcommand{\curly}[1]{\left\{ #1 \right\}}
\newcommand{\red}[1]{ {\color{red} #1}}
\newcommand{\HR}[1]{{#1}}
\newcommand{\revision}[1]{{ #1}}
\providecommand{\keywords}[1]{\textbf{\textit{Keywords --- }} #1}
\newtheorem{theorem}{Theorem}[section]
\newtheorem{definition}[theorem]{Definition}
\newtheorem{lemma}[theorem]{Lemma}
\theoremstyle{remark}
\newtheorem{remark}[theorem]{Remark}
\newtheorem{example}[theorem]{Example}
\title{Gradient Descent for Deep Matrix Factorization:\\ Dynamics and Implicit Bias towards Low Rank}
\author[1]{Hung-Hsu Chou}
\author[2]{Carsten Gieshoff}
\author[1]{Johannes Maly}
\author[1]{Holger Rauhut}
\affil[1]{Department of Mathematics, LMU Munich, Germany}
\affil[2]{Chair for Mathematics of Information Processing, RWTH Aachen University, Germany}
\date{\today}
\begin{document}

\maketitle

\begin{abstract}
    In deep learning, it is common to use more network parameters than training points. 
    In such scenario of over-parameterization, there are usually multiple networks that achieve zero training error so that the training algorithm induces an implicit bias on the computed solution.
    In practice, (stochastic) gradient descent tends to prefer solutions which generalize well, which provides a
    possible explanation of the success of deep learning. In this paper we analyze 
    the dynamics of gradient descent in the simplified setting of linear networks and of an estimation problem.
    \HR{Although we are not in an overparameterized scenario, our analysis nevertheless provides insights into the phenomenon of implicit bias.} In fact, we derive a rigorous analysis of the dynamics of vanilla gradient descent, and characterize the dynamical convergence of the spectrum. We are able to accurately locate time intervals where the effective rank of the iterates is close to the effective rank of a low-rank projection of the ground-truth matrix. In practice, those intervals can be used as criteria for early stopping if a certain regularity is desired. We also provide empirical evidence for implicit bias in more general scenarios, such as matrix sensing and random initialization. \HR{This suggests that deep learning prefers trajectories whose complexity (measured in terms of effective rank) is monotonically increasing, which we believe is a fundamental concept for the theoretical understanding of deep learning.}
\end{abstract}

\keywords{Gradient Descent, Implicit Bias/Regularization, Matrix Factorization, Neural Networks}



\section{Introduction}
\label{sec:Introduction}


Deep learning has become the standard machine learning technology in recent years, celebrating breakthroughs in many areas ranging from face recognition over medical imaging to autonomous driving. Despite all its successes it is still mysterious why deep learning works so well. Often deep neural networks have significantly more parameters than the number of examples used in training. As studied systematically via numerical experiments, for instance in \cite{keskar2017,neyshabur2015,zhang2017}, 
(stochastic) gradient descent usually results in zero training error so that the resulting neural networks interpolate the training samples exactly. Nevertheless and somewhat surprisingly, the trained deep networks generalize very well, although classical statistics would suggest that one is in a regime of overfitting.
It was remarked already in \cite{zhang2017} that the employed optimization algorithms induce an implicit bias towards certain solutions. Apparently, those solutions often behave very nicely in realistic situations. Providing an understanding of the nature of such implicit bias 
seems to be a key task for the development and understanding of deep learning in general.
While a general theory for the implicit bias in deep learning seems presently out of reach, first theoretical works \cite{arora2019implicit,geyer2019implicit,gunasekar2019implicit,gunasekar2017implicit,razin2020implicit,soudry2018implicit} concentrate on \textit{linear} networks and suggest that (stochastic) gradient descent converges to a linear network, i.e., a linear function described by a matrix, which is of \textit{low rank}. Nevertheless, even for the linear case, the settings considered in these works are rather restrictive and many open questions remain.

In this article, we consider a matrix estimation problem, see \eqref{eq:MF} below, where the desired matrix $W~\in~\mathbb{R}^{n \times n}$ (describing the linear network) 
is factorized into $N$ matrices $W_1,\hdots,W_N \in \mathbb{R}^{n \times n}$. 
We provide a precise analysis of the dynamics of the gradient descent/flow for each of the individual matrices $W_j$, which are initialized by $\alpha I$, for some suitable small constant $\alpha > 0$. We show that for desired rank $L$ and explicitly given time-intervals the time-dependent product matrix approximates very well the best 
rank $L$ approximation of the ground truth matrix.
In this way, we provide another indication of the role of low rank in the study of implicit bias in deep learning. We are convinced that our proof methods can be extended to study this problem in more general scenarios.

\medskip


Let us describe the setting of our article in more details. The archetype of deep networks is the so-called \emph{feed-forward} neural network of $N$ layers, $N \ge 1$, defined as $h \colon \mathbb{R}^{n_0} \rightarrow \mathbb{R}^{n_N}$ with
\begin{align} \label{eq:NN}
    h(x) = g_N \circ \cdots \circ g_1(x),
\end{align}
where the functions $g_k \colon \mathbb{R}^{n_{k-1}} \rightarrow \mathbb{R}^{n_k}$ are of the form
\begin{align*}
    g_k(x) = \sigma( \W_k x + b_k )
\end{align*}
and model the $N$ layers. They are determined by \emph{weight matrices} $\W_k \in \mathbb{R}^{n_{k}\times n_{k-1}}$, \emph{bias terms} $b_k \in \mathbb{R}^{n_k}$ (which we for simplicity set to $b_k = 0$ from here on), and an \emph{activation function} $\sigma \colon \mathbb{R} \rightarrow \mathbb{R}$ that is in general non-linear and acting component-wise. To guarantee good performance, both the weight matrices and the bias terms are design parameters that are optimized according to the concrete data and application. 
For a fixed activation function $\sigma$ and given training data $(x_i,y_i)_{i=1}^M$, where $x_i \in \mathbb{R}^{n_0}$ and $y_i \in \mathbb{R}^{n_N}$ model training input resp.\ output, the standard approach in supervised learning is to solve
\begin{align} \label{eq:LossMin}
    \min_{\W_1,...,\W_N} \frac{1}{M} \sum_{i = 1}^M \mathcal{L}( h(x_i),y_i),
\end{align}
where $\mathcal{L} \colon \mathbb{R}^{n_N} \times \mathbb{R}^{n_N} \rightarrow \mathbb{R}_+ = \{ x \in \mathbb{R}: x \geq 0\}$ is called \emph{loss function} and should exhibit "distance like" properties (a popular choice being the squared $\ell_2$-norm). The number of free parameters commonly dominates the number of training samples. In this case, we speak of \textit{overparametrization}.

The optimization problem \eqref{eq:LossMin} is normally solved via variants of (stochastic) gradient descent (using back propagation).
As already described above, this often leads to decent solutions, even in the overparametrized setting and a recent research hypothesis claims
an implicit bias of gradient descent towards low-complexity solutions -- although no regularization term is added to \eqref{eq:LossMin}.
It was observed in \cite{Heckel2018,Ulyanov2017} that early stopping may produce beneficial solutions 
while omitting unfavorable local and global minima. 
Since \eqref{eq:LossMin} is hard to analyze in general due to the non-linear structure of \eqref{eq:NN}, recent theoretical works concentrate on the simplified case of \emph{linear neural networks} \cite{arora2018optimization,arora2019implicit,bah2019learning,bartlett2018gradient,Gissin2019Implicit,li2018algorithmic}, where $\sigma(x) = x$ and $b_k=0$, i.e., \eqref{eq:NN} becomes 
\begin{align} \label{eq:NNlinear}
    h_\text{Linear}(x) = \W_N \cdots \W_1 x.
\end{align}
Choosing $\mathcal{L}$ to be the quadratic loss, equation \eqref{eq:LossMin} then takes the more accessible shape
\begin{align}\label{eq:LossMinLinear}
    \min_{\W_1,...,\W_N} \frac{1}{M} \sum_{i = 1}^M \norm{\W_N \cdots \W_1 x_i - y_i}{2}^2.
\end{align}
This problem reminds of classical formulations of principal component analysis (PCA) in \cite{wold1987principal}. Indeed, if the samples $x_1,\dots,x_M$ span the input space $\mathbb{R}^{n_0}$, any minimizer of \eqref{eq:LossMinLinear} has to satisfy $\W_N \cdots \W_1 = YX^\dagger =: \Wstar \in \mathbb{R}^{n_N \times n_0}$, where $X \in \R^{n_0 \times M}$ and $Y \in \R^{n_N \times M}$ have columns $x_i$ and $y_i$, for $i \in [M]$, and $X^\dagger$ denotes the pseudo-inverse of $X$. The set of minimizers of $\eqref{eq:LossMinLinear}$ thus equals the set of minimizers of the matrix factorization problem
\begin{align}\label{eq:MF}
    \min_{\W_1,...,\W_N} \big\| \W_N \cdots \W_1 - \Wstar \big\|_{F}^2
\end{align}
for $\widehat{W} = YX^\dagger$, where $\norm{\cdot}{F}$ denotes the Frobenius norm. Note that, although the trajectories of gradient descent on \eqref{eq:LossMinLinear} and \eqref{eq:MF} do not agree in general, the iterates are biased towards similar low-dimensional structures. 
Indeed, the recent works \cite{arora2019implicit,Bach2019implicit,gunasekar2017implicit,li2018algorithmic} point out that, when applied to factorized problems like \eqref{eq:LossMinLinear}, \eqref{eq:MF}, or matrix sensing and initialized close to zero, gradient descent exhibits an implicit bias towards solutions of low-rank resp.\ low nuclear norm even if the architecture is not imposing any rank constraints on the solution, i.e., $n_0 = n_1 = \cdots = n_N = n \in \mathbb{N}$. It was, moreover, observed that higher order factorizations $\W = \W_N \cdots \W_1$ can strengthen the regularizing effect \cite{arora2019implicit}. We are convinced that understanding this phenomenon in detail will also help to gain theoretical insights into the success of gradient descent in the general training model \eqref{eq:LossMin}. Let us mention that the trajectories of gradient descent on \eqref{eq:LossMinLinear} and \eqref{eq:MF} do not agree in general. We focus on \eqref{eq:MF} in this work to allow a tighter theoretical analysis. For similar reasons, we make the simplifying assumption that all the matrices $W_j$ are initialized by a small constant times the identity matrix although random initializations are often used in practice. We plan to extend this case to more general initializations in future work.

Hence, we are interested in analyzing the discrete dynamics defined by
\begin{align} 
    \W_{j}(\tdisc+1)
    &= \W_{j}(\tdisc) - \eta \nabla_{\W_{j}}
    \mathcal{L}(\W(\tdisc))
    \label{eq:MatrixFactorization_GradientDescent}\\
    \W_{j}(0)
    &= \alpha \W_0, \label{eq:MatrixFactorization_IdenticalInitialization}
\end{align}
for $j \in [N] := \curly{1,\dots,N}$, where 
\begin{align} \label{eq:L_MatrixFactorization}
    \mathcal{L}(\W) = \frac{1}{2} \big\| \W - \Wstar \big\|_{F}^2 
    \quad
    \text{with}
    \quad
    \nabla_\W \mathcal{L} (\W) = (\W - \Wstar),
\end{align}
$\eta > 0$ is the step-size, $\W_0$ is some initialization matrix, and $\alpha > 0$ is assumed to be small. Note that the gradient of $\mathcal{L}$ with respect to a single factor $\W_j$ is given by
\begin{align} \label{eq:L_gradient}
    \nabla_{\W_j} \mathcal{L}(\W) = (\W_N\cdots \W_{j+1})^{\tT} \nabla_{\W} \mathcal{L}(\W) (\W_{j-1}\cdots \W_{1})^{\tT}.
\end{align}

\subsection{Contribution and Outline}
\label{subsec:ContributionAndOutlines}


\begin{table}[]
    \centering
    \begin{tabular}{| c | c | c |}
    \hline 
    & Shallow ($N = 2)$ & Deep ($N>2$) \\
    \hline
    Gradient descent & \cite{Bach2019implicit}, \cite{li2018algorithmic} & \textbf{Our work} \\ 
    \hline
    Gradient flow & \cite{gunasekar2017implicit,gunasekar2019implicit} & \textbf{Our work}, \cite{arora2018optimization,arora2019implicit} \\
    \hline
    \end{tabular}
    \caption{Comparison of our work with related literature in terms of which method is analyzed and whether the analysis is restricted to shallow factorizations.}
    \label{tab:Table1}
\end{table}

\begin{table}[]
    \centering
    
    \begin{tabular}{| c | c | c | c | }
    \hline 
    & Initialization & Ground-truth & Model\\
    \hline
    \textbf{Our work} & identity & \textbf{symmetric} & \eqref{eq:MF}\\
    \hline
    \cite{Bach2019implicit} & aligned & \makecell{$XX^T$ and $X^TYY^TX$ \\ almost commuting}& \eqref{eq:LossMinLinear}\\
    \hline
    \cite{arora2018optimization,arora2019implicit} & identity & \makecell{symmetric \\ PSD} & \makecell{matrix sensing with \\ commuting measurements} \\
    \hline
    \cite{gunasekar2017implicit,gunasekar2019implicit} & identity & \makecell{symmetric \\ PSD} & \makecell{matrix sensing with \\ commuting measurements}\\
    \hline
    \cite{li2018algorithmic} & orthogonal & \makecell{symmetric \\ PSD} & matrix sensing\\
    \hline
    \end{tabular}
    \caption{Comparison of our work with related literature in terms of considered initialization, ground-truth, and training model. Aligned initialization under the training model \eqref{eq:LossMinLinear} means that $W_N(0)$ has the same singular vectors as $YX^T$ and $W_1(0)$ has the same singular vectors as $XX^T$. Orthogonal initialization means that all $W_j(0)$ are orthogonal matrices. Note that PSD stands for positive semi-definite ground-truths.}
    \label{tab:Table2}
\end{table}

In this paper, we provide a precise analysis of the dynamics in \eqref{eq:MatrixFactorization_GradientDescent}-\eqref{eq:MatrixFactorization_IdenticalInitialization} and their underlying continuous counterparts, for $\W_0 = I$ and symmetric ground-truths $\Wstar \in \mathbb{R}^{n\times n}$. \HR{
Our analysis extends/generalizes the one started in
\cite{arora2019implicit,Gissin2019Implicit,li2018algorithmic},
which consider more sophisticated training models than \eqref{eq:MF} but basically restrict themselves to the gradient flow and gradient descent with $N=2$, cf.\ Tables~\ref{tab:Table1}~and~\ref{tab:Table2}.
In particular, our contributions are the following.}
\begin{enumerate}
    \item 
    \HR{We analyze the gradient descent dynamics for all $N\in\mathbb{N}$ (rather than only for $N=2$ or rather than restricting only to the gradient flow.)}
    \item We derive a sharp upper bound on the step size $\eta$ \HR{ensuring convergence}, for all choices of $\alpha$ and $N$.
    \item We only assume symmetry of the ground-truth, which is considerably weaker \HR{than the often used assumption of positive semidefiniteness (PSD)}.
    \item We prove that negative eigenvalues can only be recovered under a suitable perturbation of the initialization. This has not yet been discussed in the standard setting of identity initialization since the prior works \cite{arora2019implicit,Gissin2019Implicit,li2018algorithmic} only consider positive semi-definite ground-truths, cf.\ Table \ref{tab:Table2}.
\end{enumerate}
Our main results consist of three central observations
(note that we \HR{always} use explicit constants in the statements).\\

\textbf{I. Recovery of positive eigenvalues:} Initializing with $\W_0 = I$, as done in \cite{li2018algorithmic} in a matrix sensing framework, the dynamics in \eqref{eq:MatrixFactorization_GradientDescent}-\eqref{eq:MatrixFactorization_IdenticalInitialization} can solely recover non-negative eigenvalues of $\Wstar$. In addition, the following theorem provides a quantitative analysis that shows how fast eigenvalues of $\Wstar$ are approximated depending on their magnitude and other model parameters like $N$, $\eta$, and $\alpha$.
\begin{theorem}
\label{theorem:IdenticalInitialization_ContinuousNonAsymptoticRate}
Let $N \geq 2$, $\alpha>0$, $\epsilon>0$. Let $\Wstar = V\Lambda V^\tT \in \mathbb{R}^{n\times n}$ be an eigendecomposition of the symmetric matrix $\Wstar$ with $\Lambda = \diag(\lambda_i : i \in [n])$. Consider $W_1(\tdisc), \hdots, W_N(\tdisc) \in \mathbb{R}^{n\times n}$ and $W(\tdisc) = W_N(\tdisc) \cdots W_1(\tdisc)$ defined by the gradient descent iteration \eqref{eq:MatrixFactorization_GradientDescent} with loss function \eqref{eq:L_MatrixFactorization} and identical initialization \eqref{eq:MatrixFactorization_IdenticalInitialization}. Define $\M = \max(\alpha,\|\Wstar\|^{\frac{1}{N}})$.  Suppose 
\begin{equation}\label{eta:cond:mthm}
    0 < \eta < \frac{1}{(3N-2) M^{2N-2}}.
\end{equation}
Then $\W(\tdisc)$ converges to $ V \Lambda_+ V^\tT$ as $k \to \infty$, where $\Lambda_+ = \diag(\max\{\lambda_{i},0\}: i \in [n])$. Moreover, the error $E(\tdisc) = (V^\tT \W(\tdisc) V  - \Lambda_+)$ is a diagonal matrix, whose entries satisfy
\begin{equation}\label{eq:error-estimate}
|E_{ii}(\tdisc)| \leq 
\begin{cases} 
\epsilon  N \lambda_i^{1-\frac{1}{N}} & \mbox{ if } \lambda_i >0,\\
\epsilon^N & \mbox{ if } \lambda_i \leq 0.
\end{cases}
\end{equation}
for all $k \geq T^\Id_N(\lambda_i, \epsilon, \alpha, \eta)$, where $T^\Id_N$ is defined in \eqref{def:TId} below.
%
\end{theorem}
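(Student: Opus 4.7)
The plan is to exploit the identical initialization $W_j(0) = \alpha I$ to reduce the coupled matrix dynamics to $n$ independent scalar iterations, one per eigenvalue of $\widehat{W}$, and then to carry out an explicit analysis of the resulting scalar map.

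\textbf{Step 1 (reduction to scalar dynamics).} Since $W_j(0) = \alpha I$ commutes with $\widehat{W} = V\Lambda V^\top$ and is symmetric, a short induction based on \eqref{eq:L_gradient} shows that every iterate $W_j(k)$ remains symmetric and simultaneously diagonalizable with $\widehat{W}$. By the permutation symmetry of the update in the identical factors, one further proves inductively that $W_1(k) = \cdots = W_N(k) =: A(k)$ for all $k$. Using $[A(k),\widehat{W}] = 0$, the gradient \eqref{eq:L_gradient} collapses to $A(k)^{N-1}(A(k)^N - \widehat{W})$, so the matrix dynamics reduce to $A(k+1) = A(k) - \eta\, A(k)^{N-1}(A(k)^N - \widehat{W})$. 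Diagonalizing in $V$, each diagonal entry $w_i(k)$ evolves independently via
\[
w_i(k+1) = \varphi_{\lambda_i}(w_i(k)),\quad \varphi_\lambda(w) = w - \eta\, w^{N-1}(w^N - \lambda),\quad w_i(0) = \alpha,
\]
and $E(k) = V^\top W(k) V - \Lambda_+$ is therefore diagonal with $E_{ii}(k) = w_i(k)^N - \max(\lambda_i, 0)$, which already supplies the structural part of the statement.

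\textbf{Step 2 (invariance, monotonicity, and the sharp step-size bound).} A direct computation gives $\varphi'_\lambda(w) = 1 - \eta\bigl[(N-1)w^{N-2}(w^N - \lambda) + Nw^{2N-2}\bigr]$. Over $(w,\lambda) \in [0,M]\times[-M^N,M^N]$ the bracket is maximized at $(w,\lambda) = (M, -M^N)$ and equals exactly $(3N-2)M^{2N-2}$. Hence \eqref{eta:cond:mthm} is precisely tuned so that $\varphi'_\lambda(w) > 0$ on the relevant domain and $\varphi_\lambda$ is strictly monotone on $[0,M]$. I would use this to prove the invariance $w_i(k) \in [0,M]$ for all $k$ and the monotonicity of the orbit: non-decreasing if $\lambda_i > 0$ (with $w_i(k) \in [\alpha, \lambda_i^{1/N}]$) and non-increasing if $\lambda_i \leq 0$ (with $w_i(k) \in [0,\alpha]$).

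\textbf{Step 3 (explicit convergence rates).} For $\lambda_i > 0$ I would substitute $v = w^N$ to rewrite the continuous analogue as $\dot v = -N v^{(2N-2)/N}(v - \lambda_i)$ and integrate it explicitly, partitioning the orbit into an \emph{escape phase} from $w = \alpha$ up to $w = \lambda_i^{1/N}/2$, rate-limited by $w^{2N-2}$ being small and costing $\mathcal{O}\bigl((\eta N \lambda_i \alpha^{N-2})^{-1}\bigr)$ steps for $N \geq 3$ (logarithmic in $1/\alpha$ for $N=2$), and a \emph{local phase} in which $w$ contracts toward $\lambda_i^{1/N}$ at rate $\approx \eta N \lambda_i^{(2N-2)/N}$, costing $\mathcal{O}(\eta^{-1}\lambda_i^{-(2N-2)/N}\log(1/\epsilon))$ steps to reach $|w_i - \lambda_i^{1/N}| \leq \epsilon$. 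The mean value bound $|w^N - \lambda_i| \leq N \lambda_i^{(N-1)/N}|w - \lambda_i^{1/N}|$ on $[\alpha, \lambda_i^{1/N}]$ then delivers the advertised error $\epsilon N \lambda_i^{1-1/N}$, and summing the two phase times yields the explicit $T^\Id_N$. For $\lambda_i \leq 0$ the slowest case is $\lambda_i = 0$, for which the iteration reduces to $w(k+1) = w(k)\bigl(1 - \eta w(k)^{2N-2}\bigr)$; a discrete comparison with the continuous bound $\dot w = -w^{2N-1}$ gives polynomial decay $w(k)^{-(2N-2)} \gtrsim \eta k$, hence $w(k)^N \leq \epsilon^N$ after the corresponding threshold captured in $T^\Id_N$.

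The main obstacle I expect is twofold. First, obtaining the sharp step-size bound with prefactor exactly $(3N-2)$ relies on the worst-case analysis of $\varphi'_\lambda$ being tight simultaneously at $w = M$ and at the sign-changing extreme $|\lambda| = M^N$; any slack here would either break the monotonicity used in Step 2 or blow up one of the phase times in Step 3. Second, the escape phase is the genuine bottleneck and controls the leading term in $T^\Id_N$: the multiplicative growth factor $1 + \eta\, \lambda_i\, w^{N-2}$ is very close to $1$ when $w$ is of order $\alpha$, so a careful discrete-to-continuous comparison on a non-stationary time scale is needed to avoid losing polynomial factors in $\alpha^{-1}$. Once the scalar convergence is in hand, reassembling the bound into the matrix statement is immediate from the diagonal structure established in Step 1.
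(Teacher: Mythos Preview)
Your proposal is essentially the paper's approach: decouple to scalars via diagonalization (Lemma~2.1), establish monotonicity and invariance of the one-dimensional map from the derivative bound you compute in Step~2 (Lemma~2.2), and run a two-phase rate analysis combining a discrete-to-continuous comparison for the escape phase with direct geometric contraction near the fixed point (Lemmas~2.3--2.4 and Theorem~2.5). Two small technical deviations are worth flagging: the paper splits at the inflection point $(c_N\lambda)^{1/N}$ with $c_N=(N-1)/(2N-1)$ rather than $\lambda^{1/N}/2$, because that is exactly where the convexity hypothesis $ff'\geq 0$ of its comparison lemma ceases to hold; and for $\lambda_i<0$ it does not reduce to the $\lambda_i=0$ worst case but instead compares to the auxiliary iteration $d_-(k+1)=d_-(k)+\eta\lambda_i d_-(k)^{N-1}$, which is what yields the $|\lambda_i|^{-1}$ factor appearing in the $\lambda<0$ branch of $T_N^{\Id}$---your worst-case reduction would give a valid but strictly larger threshold and hence would not reproduce the stated $T_N^{\Id}$ exactly.
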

The proof of Theorem \ref{theorem:IdenticalInitialization_ContinuousNonAsymptoticRate} is given in Section \ref{subsec:IdenticalInitialization}.
\begin{remark}
\label{rem:main-thm}
    The exact form of $T_N^\text{Id}$ involves some additional notation, which will be introduced later, see
    \eqref{def:TId}. Let us nevertheless give some simplified approximate expressions in the relevant case that $N \geq 3$, $0 < \epsilon^N \ll \lambda_i \leq \lambda_1$ and $0 < \alpha^N \ll \lambda_i$, so that the initial matrix $W(0) = \alpha^N \Id$ has small enough spectral norm compared to the $i$-th eigenvalue of the ground truth, which in turn is larger than the desired accuracy $\epsilon^N$. In this case $M = \|\widehat{W}\|^{\frac{1}{N}} = \lambda_1^{\frac{1}{N}}$ and we assume that
    \[
    \eta = \frac{\kappa}{N \lambda_1^{2-\frac{2}{N}}}
    \]
    for some $\kappa \leq \frac{1}{3}$ so that \eqref{eta:cond:mthm} is satisfied. The quantity $T^\Id_N$ then takes the form (the interested reader is referred to the more detailed derivation and discussion in Appendix \ref{sec:Remark_main-thm})
    \begin{equation}\label{TID-form}
    T^\text{Id}_N(\lambda_i,\epsilon,\alpha,\eta) 
    = \frac{1}{\kappa(N-2)} \left(\frac{\lambda_1}{\alpha^N}\right)^{1-\frac{2}{N}} \frac{\lambda_1}{\lambda_i}
    + \mathcal{O} \big( \log(1/\epsilon) \big),
    \end{equation}
    where we ignore the additional term $s_N(\lambda_i,\alpha)$ appearing in \eqref{def:TId} since it is neglectable for small $\kappa$ and large $N$ (and probably resembles a proof artefact). As \eqref{TID-form} shows, $T^\text{Id}_N$ consists of two main terms. One depends on the ratio between $\lambda_i$ and $\lambda_1$ and one on the desired accuracy $\epsilon$. $T^\text{Id}_N$ increases for smaller $\lambda_i$ and $\epsilon$. 
\end{remark}
\begin{figure}
\begin{subfigure}[c]{0.45\textwidth}
\includegraphics[width=\textwidth]{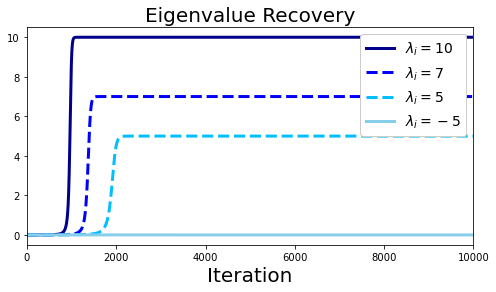}
\subcaption{Convergence of eigenvalues with identical initialization.}
\label{fig:IdenticalVSPerturbedA}
\end{subfigure} \quad
\begin{subfigure}[c]{0.45\textwidth}
\includegraphics[width=\textwidth]{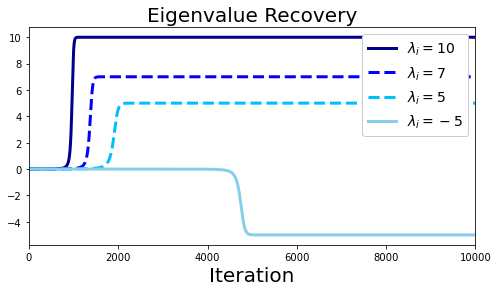}
\subcaption{Convergence of eigenvalues with perturbed identical initialization.}
\label{fig:IdenticalVSPerturbedB}
\end{subfigure}
\caption{Depicted is a comparison of gradient descent dynamics for different types of initialization. Depicted is the evolution of the entries $(V^\tT \W V)_{ii}$, where $V$ diagonalizes $\Wstar$. In the case of identical initialization, only positive eigenvalues can be recovered, while in the case of perturbed identical initialization, the full spectrum is recovered. In addition, the recovering rate is positively correlated with the magnitude of eigenvalues. Here $N=3$, $\alpha = 10^{-1}$, $\beta = \frac{\alpha}{2}$, and $\eta = 10^{-3}$. }
\label{fig:IdenticalVSPerturbed}
\end{figure}
\begin{figure}
\begin{subfigure}[c]{0.45\textwidth}
\includegraphics[width=\textwidth]{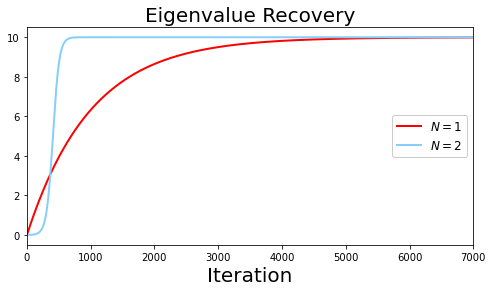}
\subcaption{The dynamics between $N=1$ and $N=2$ are quite different, in the sense that the transition for $N=2$ is more concentrated.}
\label{fig:IdenticalwithDifferentNA}
\end{subfigure} \quad
\begin{subfigure}[c]{0.45\textwidth}
\includegraphics[width=\textwidth]{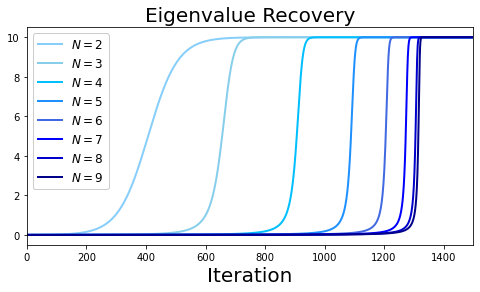}
\subcaption{For $N\geq2$, the dynamics are more similar. As $N$ increases, the recovery rate becomes slower initially, but sharper once the transition occurs.}
\label{fig:IdenticalwithDifferentNB}
\end{subfigure}
\caption{Depicted is a comparison of eigenvalue recovery rate for different number of layers $N$ with identical initialization. To make the comparison fair we fixed $\alpha^N= 3\cdot 10^{-3}$. Here $\lambda=10$ and $\eta = 10^{-3}$. }
\label{fig:IdenticalwithDifferentN}
\end{figure}
%


\textbf{II. Recovery of arbitrary eigenvalues:} Perturbing the initialization slightly, i.e., setting 
\begin{equation}\label{eq:PerturbedInitialization_MatrixInitialization}
    \W_j(0)
    = \begin{cases}
    (\alpha - \beta)I & \text{ if }j=1\\
    \alpha I & \text{ otherwise},
    \end{cases}
\end{equation}
allows the dynamics in \eqref{eq:MatrixFactorization_GradientDescent} to recover the whole spectrum of $\Wstar$, cf.\ Figure \ref{fig:IdenticalVSPerturbedB}. This suggests that the spectral cut-off observed in Theorem \ref{theorem:IdenticalInitialization_ContinuousNonAsymptoticRate} is a pathological case and thus hardly observed in practice. As before, the following theorem also quantifies the approximation rates of eigenvalues of $\Wstar$ in terms of their magnitude, their sign, $N$, $\eta$, $\alpha$, and $\beta$.
\begin{theorem}\label{theorem:PerturbedInitialization_NonAsymptoticConvergence}
Let $N\geq 2$ and let $\Wstar = V\Lambda V^\tT \in \mathbb{R}^{n \times n}$ be an eigendecomposition of the symmetric matrix $\Wstar$ with $\Lambda = \diag(\lambda_i : i \in [n])$. Consider $W_1(\tdisc), \hdots, W_N(\tdisc) \in \mathbb{R}^{n\times n}$ and $W(\tdisc) = W_N(\tdisc) \cdots W_1(\tdisc)$ defined by the gradient descent iteration defined in
\eqref{eq:MatrixFactorization_GradientDescent} with loss function \eqref{eq:L_MatrixFactorization} and perturbed identical initialization \eqref{eq:PerturbedInitialization_MatrixInitialization}, where $0 < \frac{\beta}{\c-1} < \alpha $, and $\c\in(1,2)$ is the maximal real solution to the polynomial equation $1=(\c-1)\c^{N-1}$. Define $\M = \max(\alpha,\|\Wstar\|^{\frac{1}{N}})$. If
\begin{equation}\label{cond:stepsize:thm}
    0 < \eta <  
    \frac{1}{9N (\c M)^{2N-2}},
\end{equation}
then $\W(\tdisc)$ converges to $\Wstar$. Moreover, the error $E(\tdisc) = (V^\tT \W(\tdisc) V  - \Lambda)$ is a diagonal matrix, whose entries satisfy
\begin{equation}
    |E_{ii}(\tdisc)| \leq \left\{
    \begin{aligned}
        \epsilon N|\lambda_i|^{1-\frac{1}{N}} 
        &\mbox{ for all } \tdisc \geq T^{\operatorname{P}}_N(\lambda_i,\epsilon,\alpha,\beta,\eta)
        &\text{ if }|\lambda_i|\geq\alpha^N,\\
        \alpha^N \phantom{BLA} 
        &\mbox{ for all } \tdisc\in\mathbb{N}_0
        &\text{ if }0 \leq \lambda_i<\alpha^N, \\
        2 \alpha^N \phantom{BLA} 
        &\mbox{ for all } \tdisc\in\mathbb{N}_0
        &\text{ if } - \alpha^N \leq \lambda_i<0. \\
    \end{aligned}
    \right.
\end{equation}
for all $\epsilon\in(0,|\lambda_i|^{\frac{1}{N}})$, and
\begin{equation}
    T_N^{\operatorname{P}}(\lambda,\epsilon,\alpha,\beta, \eta) :=
    \begin{cases}
    T_N^{\Id}(\lambda,\epsilon,\alpha,\eta)
    &\text{ if } \lambda \geq \alpha^N,\\
    \displaystyle{
    T^\Id_N\left(|\lambda|, \epsilon, \beta, \eta\right) + \frac{\alpha}{\eta \left( \frac{9N-2(c-1)}{9N}\beta|\lambda|^{\frac{1}{N}}\right)^{N-1} \left(|\lambda|^{\frac{1}{N}}-\frac{\beta}{c-1}\right)}}
    &\text{ if } \lambda \leq -\alpha^N,
    \end{cases}
\end{equation}
where $T^{\Id}_N$ is defined in \eqref{def:TId}.
\end{theorem}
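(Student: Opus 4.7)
My plan is to leverage the fact that the perturbed identical initialization \eqref{eq:PerturbedInitialization_MatrixInitialization} still consists of scalar multiples of the identity, so the iterates $W_j(k)$ commute with each other and with $\widehat{W}$ throughout. Simultaneous diagonalization by $V$ then reduces the matrix recursion to $n$ decoupled scalar recursions, one per eigenvalue $\lambda_i$ of $\widehat{W}$. Fixing $i$ and writing $\lambda=\lambda_i$ and $w_j(k)$ for the corresponding scalar factors, the initialization becomes $w_1(0)=\alpha-\beta$ and $w_j(0)=\alpha$ for $j\ge 2$. A key structural ingredient is the (approximate) conservation law $w_j(k)^2-w_1(k)^2=2\alpha\beta-\beta^2>0$ for $j\ge 2$, which is exact in the continuous gradient flow and holds up to a controllable $O(\eta)$ drift for gradient descent. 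It forces $|w_1(k)|<|w_j(k)|$, so $w_1$ is the factor that can cross zero first, and this is what makes recovery of negative eigenvalues possible.

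I would then split the analysis by the sign and magnitude of $\lambda$. Three of the four cases are essentially immediate. For $0\le\lambda<\alpha^N$, monotonicity of the $w_j(k)$ from their initial values toward the equilibrium keeps $0\le w(k)\le\alpha^N$, giving $|E_{ii}|\le\alpha^N$. For $-\alpha^N\le\lambda<0$, the same monotonicity yields $0\le w(k)\le\alpha^N$ until a potential zero crossing, hence $|w(k)-\lambda|\le\alpha^N+|\lambda|\le 2\alpha^N$. For $\lambda\ge\alpha^N$, I would replay the proof of Theorem~\ref{theorem:IdenticalInitialization_ContinuousNonAsymptoticRate} almost verbatim, noting that the perturbation only makes the initial product $w(0)=(\alpha-\beta)\alpha^{N-1}$ slightly smaller than $\alpha^N$, a deficit absorbed into the constants already present in the definition of $T^{\Id}_N$.

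The novel content is the case $\lambda\le-\alpha^N$, which I treat in two phases. In Phase~1 (\emph{sign flip}) we have $w(k)>0>\lambda$, so the scalar update $w_1(k+1)=w_1(k)-\eta(w(k)-\lambda)\prod_{j\ge 2}w_j(k)$ strictly decreases $w_1$. Using the approximate conservation law, the stepsize condition \eqref{cond:stepsize:thm}, and the defining relation $(c-1)c^{N-1}=1$, I would show that the $w_j(k)$ for $j\ge 2$ climb toward the equilibrium $|\lambda|^{1/N}$ but remain capped by $c|\lambda|^{1/N}$, and that during the crossing $w(k)-\lambda\ge|\lambda|^{1/N}-\beta/(c-1)$ while $\prod_{j\ge 2}w_j(k)\ge\bigl(\tfrac{9N-2(c-1)}{9N}\beta|\lambda|^{1/N}\bigr)^{N-1}$. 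A monotone-descent bound then shows that $w_1$ reaches zero in at most the additional term of $T^{P}_N$ iterations. In Phase~2 (\emph{recovery}) I apply the sign flip $\tilde w_1=-w_1$ and verify by direct computation that $(\tilde w_1,w_2,\dots,w_N)$ now satisfies the scalar gradient-descent recursion for the positive eigenvalue $|\lambda|$. At the start of Phase~2 we have $\tilde w_1\approx 0$ and, by the conservation law, $w_j\ge\beta$ for $j\ge 2$, so Phase~2 is dominated by the identically initialized dynamics at level $\beta$, and Theorem~\ref{theorem:IdenticalInitialization_ContinuousNonAsymptoticRate} applied at that scale supplies the $T^{\Id}_N(|\lambda|,\epsilon,\beta,\eta)$ portion of the bound.

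The main obstacle will be Phase~1. The conservation law is only approximate in discrete time, so its $O(\eta)$ drift must be accumulated carefully over the potentially many iterations required to cross zero, so as to keep each $w_j(k)$ in the interval $[\beta,c|\lambda|^{1/N}]$ uniformly. Intimately tied to this is the tight stepsize bound \eqref{cond:stepsize:thm}: the factor $(cM)^{2N-2}$ must prevent the $w_j$ from overshooting $c|\lambda|^{1/N}$, which is where the coefficient $9N$ enters. A secondary delicate point is the handoff to Phase~2: Theorem~\ref{theorem:IdenticalInitialization_ContinuousNonAsymptoticRate} is stated for strictly identical initialization, whereas the state at the end of Phase~1 has $\tilde w_1=0$ with $w_j\ge\beta$. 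I will either prove a small extension of Theorem~\ref{theorem:IdenticalInitialization_ContinuousNonAsymptoticRate} that tolerates non-identical but comparable starts, or set up a monotone dominance argument comparing the actual trajectory, factor by factor, to the identically initialized one at level $\beta$.
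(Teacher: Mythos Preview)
Your overall skeleton matches the paper's: diagonalize, reduce to the coupled scalar pair $(d_1,d_2)$ (the paper observes $D_j=D_2$ for all $j\ge2$, so there are only two distinct tracks, not $N$), split by sign and magnitude of $\lambda$, and for $\lambda\le-\alpha^N$ run a two-phase argument (sign flip, then recast as the positive-$\lambda$ problem). But several of your stated mechanisms are wrong and would derail the execution. The balancedness relation is not approximate: Lemma~\ref{lemma:DeltaContraction} gives the \emph{exact} identities $\Delta_1(k+1)=(1+\eta\kappa(k))\Delta_1(k)$ and $\Delta_2(k+1)=(1-\eta^2\kappa^2(k))\Delta_2(k)$ with $\kappa(k)=d_2(k)^{N-2}(p(k)-\lambda)$. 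There is no $O(\eta)$ drift to accumulate, so your ``main obstacle'' evaporates. Moreover, during Phase~1 (while $d_1\ge0$) one has $\kappa>0$, hence $d_2$ is \emph{decreasing}, not climbing. The lower bound on $d_2$ comes not from the (shrinking) $\Delta_2$ but from the \emph{increasing} $\Delta_1$: $d_2(k)-d_1(k)>\beta$ together with $d_1(k)\ge0$ gives $d_2(k)>\beta$. The upper bound $d_2\le c|\lambda|^{1/N}$ is trivial in Phase~1 (there $d_2\le\alpha\le|\lambda|^{1/N}$) and is only invoked in Phase~2 via Lemma~\ref{lemma:PerturbedInitialization_Convergence_PositiveLargeLambda}. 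Your displayed bounds are also garbled: the decrement is $d_1(k+1)\le d_1(k)-\eta\,d_2(k)^{N-1}(p(k)-\lambda)$ with $d_2(k)^{N-1}\ge\bigl(\tfrac{9N-2(c-1)}{9N}\beta\bigr)^{N-1}$ and $p(k)-\lambda\ge(|\lambda|^{1/N}-\beta/(c-1))|\lambda|^{1-1/N}$; the $|\lambda|^{1/N}$ factor you placed inside the $(N{-}1)$-st power belongs to the second term.

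Finally, the phase boundary $T_0$ is the first time $d_1(k)\le-\beta$, not $d_1(k)\le0$. This is precisely so that at the handoff the smaller component $d_0(T_0)=-d_1(T_0)\ge\beta$, and the paper then sandwiches the product $p=d_0d_2^{N-1}$ between $\lambda$ and the identically initialized dynamics $p_a$ (at level $\ge\beta$) via Lemma~\ref{lemma:PerturbedInitialization_Convergence_PositiveLargeLambda}. This gives $|p(k)^{1/N}-|\lambda|^{1/N}|\le\epsilon$ for $k-T_0\ge T_N^{\Id}(|\lambda|,\epsilon,\beta,\eta)$ directly, without any extension of Theorem~\ref{theorem:IdenticalInitialization_ContinuousNonAsymptoticRate} to non-identical starts.
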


The proof of Theorem \ref{theorem:PerturbedInitialization_NonAsymptoticConvergence} is given in Section \ref{subsec:PerturbedInitialization}. 
\begin{remark}
    Apart from again providing precise bounds on the number of iterations sufficient to obtain a predefined approximation accuracy, Theorem \ref{theorem:PerturbedInitialization_NonAsymptoticConvergence} is instructive in pointing out two qualitatively different regimes of the gradient descent dynamics. If $\lambda_i$ is positive, the dynamics behave as in Theorem \ref{theorem:IdenticalInitialization_ContinuousNonAsymptoticRate}. If $\lambda_i$ is negative, however, gradient descent first approximates $(\lambda_i)_+ = 0$ up to perturbation level $\beta$. As soon as this level is reached, the dynamics force the $i$-th eigenvalue of $\W_1$ to become negative while all others remain positive, as illustrated in Figure \ref{fig:IdenticalVSPerturbed} and Figure \ref{fig:Perturbed} below. Afterwards the sign does not change and hence the negative eigenvalue is recovered. \\
    Interestingly enough, in practice this phenomenon may also happen when running simulations with identical initialization. Although in theory the eigenvalues of $W(k)$ should remain non-negative, it is possible that the $i$-th eigenvalue of some factor $W_j(k)$, which corresponds to a negative ground-truth eigenvalue, becomes negative due to numerical errors, i.e., when it reaches machine precision. As a consequence the $i$-th eigenvalue of $W(k)$ becomes negative and converges to the ground-truth eigenvalue. \\
    \revision{Note that the upper bound on $\eta$ in \eqref{cond:stepsize:thm} decays in $N$ like $\Omega(N^{-3})$ if $\alpha < 1$. This can be seen as follows: the defining equation of $c$, which can be written as $c^N - c^{N-1} = 1$, implies that $c^\ell - c^{\ell-1} \le 1$, for any $\ell = 1,\dots,N$ since $c > 1$. Hence, $$c^N - 1 = \sum_{\ell=1}^N (c^\ell - c^{\ell-1}) \le N, $$
    such that $c^{2N-2} \le c^{2N} \le (N+1)^2$. Consequently, we obtain for $\alpha < 1$ that
    $$ \frac{1}{9N (\c M)^{2N-2}} \ge \frac{1}{9 N(N+1)^2 \max\{1, \| \Wstar \|^{\frac{2N-2}{N}}\} } = \Omega(N^{-3}).$$}
    Let us finally mention that the condition \revision{$0 < \frac{\beta}{\c-1} < \alpha $} is used to simplify parts of the argument. Numerical simulations suggest that it is an artifact of the proof\revision{; $0 < \beta < \alpha $ is empirically sufficient.} 
\end{remark}

\textbf{III. Implicit bias towards low-rank:} Theorems \ref{theorem:IdenticalInitialization_ContinuousNonAsymptoticRate} and \ref{theorem:PerturbedInitialization_NonAsymptoticConvergence} suggest an implicit rank regularization of the gradient descent iterates $\W(\tdisc)$ if we stop at some appropriate finite $k$, since dominant eigenvalues will be approximated faster than the rest of the spectrum. The discussion in Section \ref{sec:EffectiveRank} --- in particular, Theorems \ref{prop:EffectiveRankContinuousPSD} (gradient flow, $N = 2$) and \ref{prop:EffectiveRankDiscretePSD} (gradient descent, $N \ge 2$) --- makes this precise by showing that the effective rank (a generalized notion of rank) of the iterates $\W(\tdisc)$ first drops to one and then monotonously increases, plateauing on the effective rank levels of various low-rank approximations of $\Wstar$, cf.~Figure~\ref{fig:EffRank_Intro}. Theorems~\ref{prop:EffectiveRankContinuousPSD}~and~\ref{prop:EffectiveRankDiscretePSD} explicitly characterize the time intervals during which the effective rank of $\W(\tdisc)$ remains approximately constant.

\begin{figure}[!htb]
\centering
\begin{subfigure}[c]{0.45\textwidth}
\includegraphics[width=\textwidth]{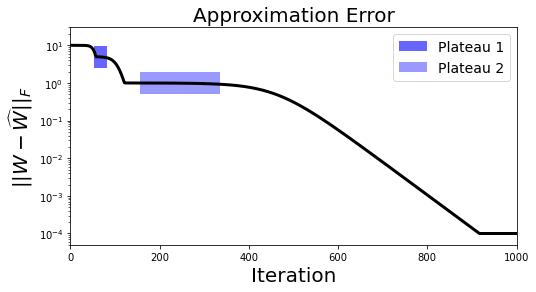}
\end{subfigure}
\quad
\begin{subfigure}[c]{0.45\textwidth}
\includegraphics[width=\textwidth]{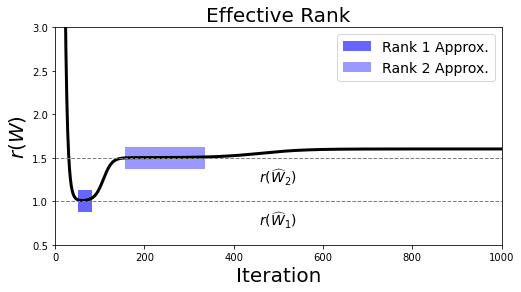}
\end{subfigure}
\caption{Dynamics of gradient descent \eqref{eq:MatrixFactorization_GradientDescent}-\eqref{eq:MatrixFactorization_IdenticalInitialization} with $N=2$, $n=200$, $\alpha = 10^{-2}$, $\eta = 10^{-2}$. The matrix $\Wstar\in\mathbb{R}^{n\times n}$ is symmetric of rank $3$ with nonzero eigenvalues $(\lambda_1,\lambda_2,\lambda_3) = (10,5,1)$. The shaded regions are the predictions of Theorem \ref{prop:EffectiveRankContinuousPSD} on where the best rank $L$ approximation $\Wstar_L$ of $\Wstar$ lies. Remarkably, those predictions correspond to the plateaus of gradient descent during training. The effective rank is defined as $r(\W)=\|\W\|_*/\|\W\|$, and all other parameters are chosen such that the prediction error (the upper bound in Theorem \ref{prop:EffectiveRankContinuousPSD}) is less than $10^{-1}$.}
\label{fig:EffRank_Intro}
\end{figure}

In addition to those highlights, we provide numerical evidence supporting the theory and simulations in more general settings suggesting that our observations are not restricted to matrix factorization with symmetric ground truths. The organization of the paper is as follows: Section \ref{sec:AnalysisOfDynamics} contains the core analysis including a quantitative description of the dynamics in \eqref{eq:MatrixFactorization_GradientDescent}-\eqref{eq:MatrixFactorization_IdenticalInitialization}, both for the identical and the perturbed initialization. Building upon those results, Section \ref{sec:EffectiveRank} then deduces an implicit low-rank bias of gradient descent and compares theoretical predictions to actual numerical outcomes. Finally, we present in Section \ref{sec:Numerics} additional numerical simulations in more general settings and discuss future work in Section \ref{sec:Discussion}.

\subsection{Related Work}
\label{subsec:RelatedWork}



Linear multilayer neural networks and related optimization problems have been investigated in several works \cite{arora2018optimization,arora2019implicit,bah2019learning,Baldi1989NeuralNA,geyer2019implicit,kawaguchi2016deep,li2018algorithmic,razin2020implicit}. In particular, it has been shown in \cite{bah2019learning} (extending \cite{arora2018optimization,arora2019implicit,kawaguchi2016deep}) that the gradient flow minimizing
$L(W_1,\hdots,W_N) =\frac{1}{2} \|Y - W_N \cdots W_1 X\|_F^2$, i.e., learning deep linear networks, converges to a global minimizer for almost all initializations.

In \cite{li2018algorithmic} the authors consider the problem of recovering a symmetric, positive matrix $X$ of low rank from incomplete linear measurements $y=\mathcal{A}(X)$. They are able to show that gradient descent on the factorized problem $L(W_1) = \frac{1}{2} \| y - \mathcal{A}(W_1 W^T_1)\|_2^2$ converges to the ground truth if a restricted isometry assumption holds for $\mathcal{A}$. While this seems to suggest a bias of gradient descent towards low-rank solutions, the conclusion is questionable because restricting the linear system $y = \mathcal{A}(W)$ to positive semidefinite matrices $W$ often means that $X$ is the unique solution if $y = \mathcal{A}(X)$ for a low rank matrix $X$ \cite{geyer2019implicit,kakurate16}. 




It was observed empirically in a number of works, see e.g.~\cite{keskar2017,neyshabur2015,zhang2017}, that training deep (nonlinear) overparameterized neural networks via (stochastic) gradient descent imposes an implicit bias towards networks that generalize well on unseen data. In fact, it came as a surprise that increasing the number of parameters may decrease the generalization error despite the fact that the training error is always zero. First works towards a theoretical explanation of this phenomenon include \cite{arora2019implicit,Bach2019implicit,geyer2019implicit,gunasekar2019implicit,gunasekar2017implicit,neyshabur2017geometry,razin2020implicit,soudry2018implicit}. As a first step, most of these works concentrate on linear networks. In   \cite{soudry2018implicit,gunasekar2019implicit} the authors study classification problems and convolution networks. In particular, they show for gradient descent an implicit bias towards filters whose Fourier transform minimizes the $\ell_p$-norm, where $p=2/N$ (assuming that the loss function converges to zero for the gradient descent iterates).
Matrix sensing problems are studied in \cite{arora2019implicit,gunasekar2017implicit}, where the authors concentrate on recovering a matrix
$X \in \mathbb{R}^{n \times n}$
from linear measurements $y_i =
\mathcal{A}(X)_i = \tr(A_i^T X)$, $i=1,\hdots,m$ via gradient descent on the functional
$\widehat{L}(W_1,\hdots,W_N) = L(W_N \cdots W_1)$ with $L(W)= \frac{1}{2} \|y - \mathcal{A}(W) \|_2^2$ where $W_j \in \mathbb{R}^{n \times n}$. 
Assuming that the measurement matrices $A_i$ commute and $X$ is positive semidefinite these works show that if  $L(W_1(k),\hdots,W_N(k)) \to 0$ as $k \to \infty$, then the limit
$W_\infty = \lim_{k \to \infty} W_N(k)\cdots W_1(k)$ minimizes the nuclear norm $\|W\|_*$ among all positive semidefinite matrices $W$ satisfying $y = \mathcal{A}(W)$. Whereas this is interesting, the restriction to commuting measurements and positive semidefinite ground truth seems very limiting. In particular, there may be at most $n$ linearly independent commuting measurement matrices $A_i$ (usually one requires at least $C r n$ measurements to recover a matrix of rank $n$) while the restriction to positive semidefinite matrices may result in uniqueness of the solution to $y = \mathcal{A}(W)$, see also \cite{geyer2019implicit}. The article \cite{razin2020implicit} indicates that implicit bias of gradient descent in deep matrix factorization may not be explainable by a norm (such as the nuclear norm) but rather via low rank. Our contribution seems to support this conjecture, but rigorously showing it for matrix recovery is still open. \HR{Only shortly before finishing our manuscript, we became aware of \cite{Gissin2019Implicit}, 
which is probably conceptually closest to our work. 
The authors study the gradient flow and gradient descent dynamics in the same setting as we do and focus 
on the question under which choices of $\alpha$ the dynamics of pairwise different eigenvalues of the product $\W_N \cdots \W_1$ are distinguishable.} Whereas parts of our technical results appeared in less general form in \cite{Gissin2019Implicit} before, cf.\ Section \ref{subsec:IdenticalInitialization}, our proof techniques and main results are fundamentally different. For further discussion on \cite{Gissin2019Implicit}, we refer the reader to Section \ref{sec:Comparison}.

Early stopping of gradient descent in deep learning has been investigated in a number of contributions, see e.g.~\cite{bartlett2018gradient,Jacot2018NTK,Yao2007EarlyStopping}. It may be  interpreted as bias-variance trade-off \cite{Yao2007EarlyStopping}.
In the context of neural tangent kernels, \cite{Jacot2018NTK} shows that when the width of network becomes infinite, the convergence rate of gradient flow is faster for eigenspaces with larger eigenvalues. Hence, early stopping may seem appealing for applications where only few major features are required. In fact, early stopping is intertwined with the idea of implicit bias, as we will discuss later in our paper.

\subsection{Notation}
\label{subsec:Notation}



We abbreviate $[n] := \curly{1,...n}$.
We denote matrices by uppercase letters and scalars by lowercase letters. Norms that frequently appear are the operator norm (spectral norm) $\|A\| = \sup_{\|x\|_2 = 1} \|Ax\|_2$, the Frobenius norm $\|A\|_F = (\sum_{j,k} |A_{j,k}|^2)^{1/2} = \sqrt{\tr(A^TA)}$, and the nuclear norm $\|\cdot\|_* = \sum_{j} \sigma(A)_j$, where $\sigma(A)_j$ are the singular values of $A$. 
Throughout the paper, $\alpha$ represents the initialization, $\eta$ the step size, and $N$ the depth of the matrix factorization.
For a real number $a$, we denote $a_+ = \max\{0,a\}$.




\section{The Dynamics of Gradient Descent}
\label{sec:AnalysisOfDynamics}


The goal of this section is to derive precise bounds on the full trajectory 
of the gradient descent iterations $W(\tdisc)$ defined in \eqref{eq:MatrixFactorization_GradientDescent} for the quadratic loss function \eqref{eq:L_MatrixFactorization} where $\widehat{W}$ is assumed to be a symmetric ground truth matrix. We first choose a small positive multiple of the identity as initialization of all factor matrices. However, we will see that we cannot recover negative eigenvalues of $\widehat{W}$ with such initialization. To mend this, we will consider then a slightly modified initialization, which guarantees recovery of the ground truth $\widehat{W}$ in the limit as $\tdisc \to \infty$, and characterize also the dynamics in this case.

%
%
\subsection{Identical Initialization}
\label{subsec:IdenticalInitialization}


We start by observing that the dynamics of the different eigenvalues decouple when initializing with the same multiple of the identity matrix. A similar result and proof has already appeared for the underlying gradient flow in \cite[Section E.1]{Gissin2019Implicit}.

%
%
\begin{lemma}\label{lemma:IdenticalInitialization_MatrixDynamics}
Let $(\W_j(\tdisc))_{j=1}^N$ be the solution to the gradient descent \eqref{eq:MatrixFactorization_GradientDescent} with identical initialization \eqref{eq:MatrixFactorization_IdenticalInitialization}
and let $\widehat{W} = \V \Dground \V^T$ be an eigenvalue decomposition of the symmetric ground truth matrix (where $V$ is orthogonal).
Then the matrices $\D_j(\tdisc):= \V^\tT\W_j(\tdisc)\V$ are real, diagonal and identical, i.e., $\D_j(\tdisc) = \D(\tdisc)$ for all $j$ for some $\D(\tdisc)$, and follow the dynamics
\begin{equation}
\label{eq:IdenticalInitialization_MatrixDynamics}
    \D(\tdisc+1) = \D(\tdisc) - \eta \D(\tdisc)^{N-1}(\D(\tdisc)^N-\Dground), \quad k \in \mathbb{N}_0.
\end{equation}
\end{lemma}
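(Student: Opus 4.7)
The plan is a direct induction on the iteration index $\tdisc$. At $\tdisc = 0$, identical initialization gives $\W_j(0) = \alpha I$ for every $j$, so $\D_j(0) = \V^\tT (\alpha I) \V = \alpha I$ is diagonal and independent of $j$. The base case thus already exhibits the structure we wish to propagate.

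For the inductive step, suppose $\V^\tT \W_j(\tdisc) \V = \D(\tdisc)$ for all $j \in [N]$ with $\D(\tdisc)$ diagonal. Then $\W_j(\tdisc) = \V \D(\tdisc) \V^\tT$ for every $j$, so the product $\W(\tdisc) = \W_N(\tdisc)\cdots\W_1(\tdisc) = \V \D(\tdisc)^N \V^\tT$ is itself symmetric and diagonalized by $\V$. The partial products appearing in the gradient formula \eqref{eq:L_gradient} collapse to $\W_N(\tdisc)\cdots\W_{j+1}(\tdisc) = \V \D(\tdisc)^{N-j} \V^\tT$ and $\W_{j-1}(\tdisc)\cdots\W_1(\tdisc) = \V \D(\tdisc)^{j-1} \V^\tT$, both of which are symmetric in the basis $\V$.

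Substituting into \eqref{eq:L_gradient} and using $\V^\tT \V = I$ together with the fact that $\D(\tdisc)$ and $\Dground$ are diagonal (hence commute), I would obtain $\nabla_{\W_j} \mathcal{L}(\W(\tdisc)) = \V \D(\tdisc)^{N-j}(\D(\tdisc)^N - \Dground)\D(\tdisc)^{j-1} \V^\tT = \V \D(\tdisc)^{N-1}(\D(\tdisc)^N - \Dground) \V^\tT$, which is manifestly independent of $j$. Therefore $\W_j(\tdisc+1) = \V \bigl[ \D(\tdisc) - \eta \D(\tdisc)^{N-1}(\D(\tdisc)^N - \Dground) \bigr] \V^\tT$ for every $j$. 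Defining $\D(\tdisc+1)$ to be the diagonal matrix inside the brackets simultaneously preserves the inductive hypothesis and establishes the recursion \eqref{eq:IdenticalInitialization_MatrixDynamics}.

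There is no substantial obstacle here: once all factors share the common eigenbasis $\V$ and coincide, every partial product and the residual $\W(\tdisc) - \widehat{W}$ become polynomials in a single diagonal matrix, so the per-factor gradient is the same for every $j$ and the update manifestly preserves simultaneous diagonalizability. The only point requiring minor care is the ordering in the product $\W_N \cdots \W_1$ inside \eqref{eq:L_gradient}, but within the common basis $\V$ all factors commute, so the exponents of $\D(\tdisc)$ simply add up to $N-1$ and no reordering issue arises.
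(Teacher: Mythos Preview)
Your induction argument is correct and is precisely the natural approach; the paper in fact omits the proof entirely, noting that a similar result and proof already appear in \cite{Gissin2019Implicit}. There is nothing to add: your handling of the base case, the collapse of partial products in the common eigenbasis, and the commutation of diagonal matrices to make the gradient $j$-independent is exactly how the argument goes.
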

\revision{
\begin{proof}
Let $D(0) = \alpha I$ and define $D(\tdisc)$ for $\tdisc \in \mathbb{N}$ recursively via \eqref{eq:IdenticalInitialization_MatrixDynamics}.
We prove the claim that the matrices $D_j(\tdisc)$
are real, diagonal and that $D_j(\tdisc) = D(\tdisc)$ for all $j$ by induction. For $k=0$ this is clearly true since $W_j(0) = \alpha I$.
Now suppose this claim holds for some $\tdisc\in\mathbb{N}_0$. Then by \eqref{eq:L_gradient}, we have, for all $j=1,\hdots,N$ and $\tdisc \in \mathbb{N}$,
\begin{align*}
    \W_{j}(\tdisc+1)
    &= \W_{j}(\tdisc) - \eta (\W_N(\tdisc)\cdots \W_{j+1}(\tdisc))^{\tT} (\W(\tdisc)-\Wstar)  (\W_{j-1}(\tdisc)\cdots \W_{1}(\tdisc))^{\tT}\\
    &= \V\D(\tdisc)\V^\tT- \eta (\V\D(\tdisc)^{j-1}\V^\tT)^{\tT} \V(\D(\tdisc)^{N}-\Dground)\V^\tT  (\V\D(\tdisc)^{N-j}\V^\tT)^{\tT}\\
    &= \V[\D(\tdisc) - \eta \D(\tdisc)^{N-1} (\D(\tdisc)^N-\Dground)]\V^\tT
    = \V\D(\tdisc+1)\V^\tT.
\end{align*}
By orthogonality of $V$ this shows that
$D_j(k+1) = D(k+1)$ so that the induction step is completed.
\end{proof}
}
Due to the previous decoupling lemma it suffices to analyze the dynamics of each diagonal entry of $D(\tdisc)$ separately.
Denoting by $\lambda$ an eigenvalue of the ground truth matrix $\widehat{W}$ the corresponding diagonal element $\d(\tdisc)$ of $\D(\tdisc)$ evolves according to the equation
\begin{equation}\label{eq:IdenticalInitialization_ScalarDynamics}
    \d(\tdisc+1) = \d(\tdisc) - \eta \d(\tdisc)^{N-1}(\d(\tdisc)^N-\lambda),
    \quad \d(0) = \alpha >0.
\end{equation}
The following lemma describes the convergence of $\d$ in \eqref{eq:IdenticalInitialization_ScalarDynamics}. It extends \cite[Lemma 1]{Gissin2019Implicit} to negative choices of $\lambda$ and to the case $\lambda \le \alpha$. Note that the proof is fundamentally different from the one presented in \cite{Gissin2019Implicit}.
\begin{lemma}\label{lemma:IdenticalInitialization_Convergence}
Let $\d$ be the solution of \eqref{eq:IdenticalInitialization_ScalarDynamics} for some $\lambda \in \RR$ and $\alpha > 0$.
\begin{itemize}
    \item If $N = 1$ and $\eta\in (0,1)$, then $\d$ converges to $\lambda$ linearly, i.e.,
    \begin{equation*}
        |\d(\tdisc) - \lambda| \leq (1-\eta)^\tdisc |\alpha- \lambda|\quad \mbox{ for all } \tdisc \in \mathbb{N}.
    \end{equation*}
    \item If $N\geq 2$ and
    \begin{align*}
        0 < \eta < 
        \begin{cases}
         \left( N \max \left\{ \alpha,|\lambda|^{\frac{1}{N}} \right\}^{2N-2} \right)^{-1} & \mbox{ if } \lambda > 0,\\
        \alpha^{-2N+2} & \mbox{ if } \lambda \leq 0 \mbox{ \rm{and} } \alpha \geq |\lambda|^{\frac{1}{N}}, \\
         \left( (3N-2) |\lambda|^{2-\frac{2}{N}}\right)^{-1} & \mbox{ if } \lambda < 0 \mbox{ \rm{and} } 0 < \alpha < |\lambda|^{\frac{1}{N}}, \\
        \end{cases}
    \end{align*}
    then $\lim_{\tdisc \to \infty} \d(\tdisc) = \lambda_+^{\frac{1}{N}} =\max\{\lambda, 0\}^{\frac{1}{N}}$. Moreover, the error $|\d(\tdisc)^N-\lambda_+|$ is monotonically decreasing and, for all $\tdisc \ge 0$, one has that $\d(\tdisc) \in [\alpha,\lambda^\frac{1}{N}]$ if $\lambda \geq \alpha^N$, and $\d(\tdisc) \in [\lambda_+^{\frac{1}{N}},\alpha]$ if $\lambda < \alpha^N$.
\end{itemize}
\end{lemma}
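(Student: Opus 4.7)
The $N=1$ case is immediate: the recursion $d(k+1) = (1-\eta)d(k) + \eta\lambda$ is affine, so $d(k) - \lambda = (1-\eta)^k(\alpha - \lambda)$ gives the claimed linear convergence. The bulk of the work is the case $N \ge 2$. My plan is to set $f(x) := x - \eta x^{N-1}(x^N - \lambda)$, so that $d(k+1) = f(d(k))$, and to identify, in each of the four sub-cases, a forward-invariant interval having $\lambda_+^{1/N}$ as one endpoint, on which $d(k)$ is monotone.

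The four invariant intervals I would propose are $[\alpha, \lambda^{1/N}]$ when $\lambda \ge \alpha^N$, $[\lambda^{1/N}, \alpha]$ when $0 \le \lambda < \alpha^N$, and $[0,\alpha]$ when $\lambda \le 0$. The sign of $d(k+1) - d(k) = -\eta\, d(k)^{N-1}(d(k)^N - \lambda)$ is determined by the position of $d(k)^N$ relative to $\lambda$, which is consistent with $d(k)$ moving monotonically toward $\lambda_+^{1/N}$ inside the proposed interval. What remains for invariance is a no-overshoot estimate.

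For $\lambda \ge 0$ I would establish the contraction-type inequality $|f(x) - \lambda^{1/N}| \le |x - \lambda^{1/N}|$ by factoring
\[
x^N - \lambda = \bigl(x - \lambda^{1/N}\bigr)\sum_{i=0}^{N-1} x^i\, \lambda^{(N-1-i)/N},
\]
which reduces the inequality to
\[
\eta \sum_{i=0}^{N-1} x^{N-1+i}\, \lambda^{(N-1-i)/N} \le 1
\]
for $x$ in the invariant interval. Each of the $N$ summands is at most $M^{2N-2}$ with $M = \max(\alpha, \lambda^{1/N})$, so the hypothesis $\eta < 1/(N M^{2N-2})$ is precisely what is needed. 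For $\lambda \le 0$ the sign argument gives $f(x) \le x$ automatically; the delicate point is non-negativity $f(x) \ge 0$, which is equivalent to $\eta\, x^{N-2}(x^N + |\lambda|) \le 1$, and which I would enforce by combining $x \le \alpha$ with either $|\lambda| \le \alpha^N$ (third sub-case of the lemma) or $\alpha < |\lambda|^{1/N}$ (fourth sub-case), yielding respectively the stepsize bounds $\alpha^{-2N+2}$ and $\bigl((3N-2)|\lambda|^{2-2/N}\bigr)^{-1}$.

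Once forward invariance is in place, the bounded monotone sequence $d(k)$ converges to a fixed point of $f$ inside the interval. Since the fixed points in $[0,\infty)$ are $0$ and (if $\lambda > 0$) $\lambda^{1/N}$, and since the sign of the update rules out the ``wrong'' fixed point in each case, the limit must be $\lambda_+^{1/N}$. Monotonicity of the error $|d(k)^N - \lambda_+|$ is then automatic, because $d(k)^N$ stays on one side of $\lambda_+$ and inherits monotonicity from $d(k)$. The step I expect to be the main obstacle is the negative-$\lambda$, small-$\alpha$ case: the stepsize must be chosen in terms of $|\lambda|^{2-2/N}$ rather than $\alpha$, and one must simultaneously guarantee that $d(k)$ does not overshoot into negative territory (where the dynamics would change character) and does not stall above $0$, so the algebra for the stepsize bound has to be tight.
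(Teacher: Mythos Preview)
Your proposal is correct and follows essentially the paper's proof: both define $g(x)=x-\eta x^{N-1}(x^N-\lambda)$, establish the same invariant intervals, and conclude via monotone convergence to the unique admissible fixed point. The only organizational difference is that for the sub-cases with $\alpha<|\lambda|^{1/N}$ the paper argues invariance by showing $g'\ge 0$ on $[0,|\lambda|^{1/N}]$---this derivative bound is precisely what produces the constant $3N-2$ when $\lambda<0$---whereas you rely throughout on the factorization of $x^N-\lambda$ (for $\lambda\ge0$) and the direct check $\eta\, x^{N-2}(x^N+|\lambda|)\le 1$ (for $\lambda<0$); your route goes through under the lemma's hypotheses but actually needs only the milder $\eta<\tfrac12|\lambda|^{2/N-2}$ in the fourth sub-case, so the $3N-2$ does not fall out of your computation.
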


\revision{
\begin{remark}
    Note that the proof of Lemma \ref{lemma:IdenticalInitialization_Convergence} shows that the sequence $d(\tdisc)$ is monotonically increasing for $\lambda > \alpha$ and monotonically decreasing for $\lambda < \alpha$. We will repeatedly make use of this observation in the following.
\end{remark}
}
\begin{proof}
For $N=1$ the computation is straight-forward and follows by induction. For $N\geq 2$ we need to make a case distinction with four cases that depend on the sign and the magnitude of $\lambda$. This is due to the fact that the sign of $\lambda$ determines the limit of $\d$, while the magnitude of $\lambda$ determines whether $\d$ is increasing or decreasing in time.


Let $ N\ge 2$. Before diving into the case distinction, let us define, for $|\lambda|^\frac{1}{N} > \alpha$, the function 
$$
g(x) = x - \eta x^{N-1}(x^N - \lambda)
$$ and observe that $\d(\tdisc+1) = g(\d(\tdisc))$. For $x \in [0,|\lambda|^{\frac{1}{N}}]$,
$z = x/|\lambda|^{1/N} \in [0,1]$ and $\sigma = \operatorname{sign}(\lambda)$ its derivative satisfies
\begin{align}
g'(x) & = 1 - \eta ((N-1)x^{N-2}(x^N-\lambda) + N x^{2N-2}) \notag \\
& = 1- \eta |\lambda|^{2-\frac{2}{N}}\left((N-1)z^{N-2}(z^N- \sigma) + Nz^{2N-2}\right) \notag\\
& \geq \left\{ \begin{array}{ll} 
1  - \eta |\lambda|^{2-\frac{2}{N}} N & \mbox{ if } \lambda > 0, \\
 1 - \eta |\lambda|^{2-\frac{2}{N}} (3N-2)  & \mbox{ if } \lambda < 0,
 \end{array}\right. \label{ineq:g-deriv}
\end{align}
where we used that $z \in [0,1]$.
By the assumption on $\eta$ it follows that
$g'(x) \geq 0$ for all $x \in [0, |\lambda|^{\frac{1}{N}}]$ and, hence, $g$ is monotonically increasing on that interval. We can now distinguish the four cases defined by $\lambda \ge 0$/$\lambda < 0$ and $|\lambda|^\frac{1}{N} > \alpha$/$|\lambda|^\frac{1}{N} \le \alpha$.

For $\lambda > 0$ and $|\lambda|^\frac{1}{N} > \alpha$, $g$ has its fixed points at $0$ and $\lambda^{\frac{1}{N}}$ so that by monotonicity $g$ maps the interval $[0, \lambda^{\frac{1}{N}}]$ to itself. Hence, $\lambda^\frac{1}{N} \ge g(x) > x$ for all $x \in (0, \lambda^{\frac{1}{N}})$ so that $\lambda^\frac{1}{N} \ge \d(\tdisc + 1) > \d(\tdisc)$ for all $\tdisc \in \NN$. \revision{This means that the error $|\d(\tdisc)^N-\lambda|$ is monotonically decreasing.} Hence, $\{\d(\tdisc)\}_{\tdisc\geq 0}$ is an increasing bounded sequence. By the monotone convergence theorem, it converges to the unique fixed point $\lambda^{\frac{1}{N}}$ of $g$ on $[\alpha, \lambda^{1/N}]$.

For $\lambda < 0$ and $|\lambda|^\frac{1}{N} > \alpha$, since $g$ is monotonically increasing on $[0,|\lambda|^{\frac{1}{N}}]$ and $g(0) = 0$ we have $g(x) \geq 0$ for all $[0,|\lambda|^{\frac{1}{N}}]$. Note that $g(x) < x$ if $\lambda \leq 0$ and $x > 0$. Since $\d(0)= \alpha >0$, it follows that $0 < \d(\tdisc+1) < \d(\tdisc)$ for all $\tdisc \in \NN$. \revision{This means that the error $|\d(\tdisc)^N-0|$ is monotonically decreasing.} As $g$ has the unique fixed point $0$ in the interval $[0,\alpha]$, the sequence $\{\d(\tdisc)\}_{\tdisc\geq 0}$ converges to $0$ by the monotone convergence theorem.

For $\lambda \geq 0$ and $|\lambda|^\frac{1}{N}\leq\alpha$, we show by induction that $\d(\tdisc)$ remains in the interval $[\lambda^{\frac{1}{N}}, \alpha]$ and is monotonically decreasing in $\tdisc$, \revision{which implies that the error $|\d(\tdisc)^N-\lambda|$ is monotonically decreasing.} For $\tdisc = 0$, the claim is trivially fulfilled. If $\d(\tdisc)\in[\lambda^{\frac{1}{N}}, \alpha]$, then $d(\tdisc)^{N} - \lambda \geq 0$ so that $\d(\tdisc+1) \leq \d(\tdisc) \leq \alpha$. Moreover, if $\lambda > 0$
\begin{align*}
    \d(\tdisc+1) - \lambda^{\frac{1}{N}}
    &= \d(\tdisc) - \eta \d(\tdisc)^{N-1}(\d(\tdisc)^N - \lambda) - \lambda^{\frac{1}{N}}\\
    &= (\d(\tdisc) - \lambda^{\frac{1}{N}})(1- \eta \d(\tdisc)^{N-1}\sum_{k=1}^{N} \d(\tdisc)^{k-1}\lambda^{\frac{N-k}{N}})\\
    &\geq (\d(\tdisc) - \lambda^{\frac{1}{N}})(1- \eta N\alpha^{2N-2})
    \geq 0,
\end{align*}
since $d(\tdisc) \geq \lambda^{\frac{1}{N}}$ and by assumption on $\eta$. If $\lambda = 0$, then 
$$
d(\tdisc+1) = \d(\tdisc) - \eta \d(\tdisc)^{2N-1} = \d(\tdisc)(1 - \eta \d(\tdisc)^{2N-2}) \leq \d(\tdisc)
$$
since $\d(\tdisc) \geq 0$ and by assumption on $\eta$. Hence $\d(\tdisc+1)\in[\lambda^{\frac{1}{N}}, \alpha]$. \revision{Here we have a bounded decreasing sequence, and hence it must converges to the only fixed point in the domain, which is $\lambda^{\frac{1}{N}}$.}


Finally, consider $\lambda < 0$ with $|\lambda|^{\frac{1}{N}}\leq \alpha$. If $\d(\tdisc)\in [0,\alpha] $, then $\d(\tdisc+1)\leq \d(\tdisc) \leq \alpha$. Moreover,
\begin{equation*}
    \d(\tdisc+1)
    = \d(\tdisc)(1-\eta \d(\tdisc)^{N-2}(\d(\tdisc)^N - \lambda))
    \geq \d(\tdisc)(1- 2\eta \alpha^{2N-2})
    > 0
\end{equation*}
by the assumption on $\eta$. Hence $\d(\tdisc+1)\in [0,\alpha]$. \revision{This means that the error $|\d(\tdisc)^N-0|$ is monotonically decreasing.} \revision{Here we have a bounded decreasing sequence, and hence it must converges to the only fixed point in the domain, which is $0$.}
\end{proof}
Lemma \ref{lemma:IdenticalInitialization_Convergence} shows that in the presence of matrix factorization, i.e., $N\geq 2$, gradient descent with identical initialization loses its ability to recover negative eigenvalues and the condition on the constant $\alpha$ in the initialization becomes more restrictive. (The condition on $\eta$ becomes either more or less restrictive depending on $|\lambda|$.) 
Note that Lemma \ref{lemma:IdenticalInitialization_Convergence} only provides a sufficient condition on the stepsize $\eta$ for convergence. However, this condition is basically necessary up to the constant, see Lemma \ref{lemma:IdenticalInitialization_Convergence_Necessary} in Appendix \ref{sec:Appendix_Optimality}. 

Having settled convergence, we will now analyze the number of iterations that are needed in order to reach an $\epsilon$-neighborhood of $\lambda$. By Lemma \ref{lemma:IdenticalInitialization_MatrixDynamics}, this forms the basis for analyzing the implicit bias of gradient descent \eqref{eq:MatrixFactorization_GradientDescent} on the matrix factorized problem with $N \geq 2$. 
In order to state our theorem, we need to introduce a few quantities corresponding to certain numbers of iterations that are important in our subsequent analysis. For $\lambda > 0$ and $\mu > 0$ we define
\begin{align}
    U^-_N(\mu) &: = 
    \begin{cases}
    - \ln(\mu) 
    &\text{ if } N=2,\\
    \frac{1}{N-2}\left(\frac{1}{\mu^{N-2}}\right)
    &\text{ if } N\geq 3,
    \end{cases} \notag \displaybreak[2]\\
    U^+_N(\lambda,\mu) &:=
    \begin{cases}
      - \frac{1}{2\lambda} \ln\left(\frac{\lambda }{\mu^2} -  1\right), 
    &\mbox{ if } N=2,\\
    \frac{\lambda^{\frac{2}{N}-2}}{N} \sum_{\ell=1}^{N} \Re \left( e^{\frac{4\pi i\ell}{N}} \left( \ln\left(e^{\frac{2\pi i \ell}{N}}- \mu \lambda^{-\frac{1}{N}}\right)\right)\right) - \frac{1}{\lambda(N-2)\mu^{N-2}}
    &\mbox{ if } N\geq 3,
    \end{cases} \label{def:Uplus}
\end{align}
where $\ln$ denotes the principle branch\footnote{Any other branch of the complex logarithm can be taken as well as long as we always choose the same branch.} of the complex logarithm, i.e., $\Im (\ln(z)) \in (-\pi,\pi]$ for all $z \in \mathbb{C} \setminus\{0\}$. Note that we can express $U_N^+$ also in terms of purely real expressions, see Remark~\ref{rem:ExplicitForm} and Appendix \ref{sec:Appendix_ContinuousDynamics} for details.
Then, for $\alpha, \eta > 0$ we set
\begin{align}
\label{def:T+-}
T^{-}_N(\mu,\alpha) & := U^{-}_N(\mu) - U^{-}_N(\alpha),
\qquad T^{+}_N(\lambda,\mu,\alpha) := U^+_N(\lambda,\mu) - U^+_N(\lambda,\alpha),\displaybreak[2]
\end{align}
and further define
\begin{align*}
    &c_N := \frac{N-1}{2N-1},
    \quad
    s_N(\lambda,\alpha) := \left\lceil c_N^{1-\frac{1}{N}} \left(\frac{\lambda^{\frac{1}{N}}}{\alpha}\right)^{N-1} \right\rceil,\\
    &a_N  := \left|\ln \left( 1-c_N^{\frac{1}{N}}\right)\right|,
    \quad 
    b_N := \left| \ln\left(\frac{1}{2c_N} - c_N^{\frac{1}{N}}\right)\right|.
\end{align*}
Finally, for $\lambda \in \RR$, we define $m_{\alpha,\epsilon} = \max(\alpha,\epsilon)$ and
\begin{align}
& T^{\Id}_N(\lambda,\epsilon,\alpha,\eta) \notag\\
\label{def:TId}
& := \begin{cases}
\frac{1}{\eta|\lambda|}T^-_N(\epsilon,\alpha)& \mbox{if } \lambda < 0 \; ,\\
0 & \mbox{if } 0\leq\lambda<\epsilon^N,\\
\frac{1}{\eta} T_N^+(\lambda,\lambda^\frac{1}{N} + \epsilon,\alpha) & \mbox{if } \epsilon^N \leq \lambda < \alpha^N,  \\
\displaystyle{\frac{\ln((\lambda^{\frac{1}{N}}-\alpha)/\epsilon)}{\left|\ln(1- \eta N (c_N \lambda)^{2-\frac{2}{N}})\right|}}
& \mbox{if } m_{\alpha,\epsilon}^N \leq \lambda < \frac{\alpha^N}{c_N}, \\
\frac{1}{\eta}T_N^+(\lambda, \lambda^{\frac{1}{N}} - \epsilon,\alpha)+s_N(\lambda,\alpha) & \mbox{if }\frac{m_{\alpha,\epsilon}^N}{c_N}\leq \lambda < \big(\frac{\epsilon}{1-\sqrt[N]{c_N}}\big)^N,\\
\displaystyle{\frac{1}{\eta}T^+_N\left(\lambda, (c_N \lambda)^{\frac{1}{N}},\alpha\right) + s_N(\lambda,\alpha) 
+ \frac{\ln(\lambda^{\frac{1}{N}}/\epsilon)  - a_N}{\left|\ln(1- \eta N (c_N \lambda)^{2-\frac{2}{N}})\right|}}
& \mbox{otherwise.}
\end{cases}
\end{align}
%
The quantity $T^{\Id}_N(\lambda,\epsilon,\alpha,\eta)$ estimates the required number of gradient descent iterations to reach a certain accuracy 
$\epsilon \in (0, |\lambda_+^{\frac{1}{N}}-\alpha|)$ when
starting with identical initialization with parameter $\alpha > 0$ and using step size $\eta$. In particular, it
illustrates that fine approximation of positive eigenvalues dominating $\alpha$ (fourth case) happens in two stages: to obtain a rough approximation a fixed number of iterations is necessary ($T_N^+$ does not depend on $\epsilon$) while to obtain approximation of accuracy $\epsilon \ll 1$ one needs an additional number of iterations depending on $\epsilon$, cf.~Figure \ref{fig:Tfigure}. Note that Remark~\ref{rem:main-thm} already commented on the behaviour of $T^\Id_N$ in the most relevant fourth case, see also Lemma~\ref{lem:approximate:TId}.

\begin{figure}[!htb]
\begin{subfigure}[c]{0.45\textwidth}
\includegraphics[width=\textwidth]{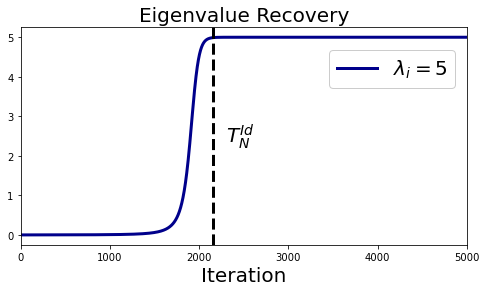}
\subcaption{Approximation of positive eigenvalue and our prediction.}
\label{fig:TfigureA}
\end{subfigure}
\quad
\begin{subfigure}[c]{0.45\textwidth}
\includegraphics[width=\textwidth]{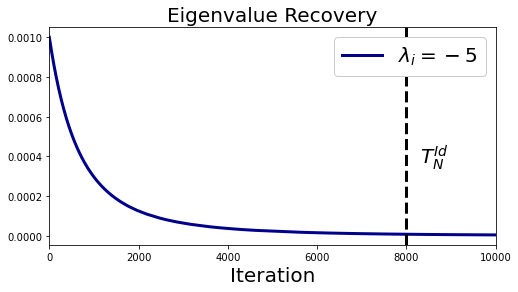}
\subcaption{Approximation of negative eigenvalue and our prediction.}
\label{fig:TfigureB}
\end{subfigure}
\caption{We see enlarged parts of Figure \ref{fig:IdenticalVSPerturbedA}. Our theoretical prediction $T^{\Id}_N$ is quite sharp in estimating the necessary number of iterations to reach certain accuracy. Again due to the limitation of identical initialization, only positive eigenvalues are recovered. Here $N=3$, $\alpha = 10^{-1}$, $\eta = 10^{-3}$. } \label{fig:Tfigure}
\end{figure}

With these definitions at hand we are ready to state the first core result. 
Note that the third case $c_N \lambda < \alpha^N \leq \lambda$ together with the general assumption $\epsilon \in (0, |\lambda_+^{\frac{1}{N}}-\alpha|)$ implies that $\epsilon < (1-c_N^{\frac{1}{N}}) \lambda^{\frac{1}{N}}$.
\begin{theorem}\label{thm:IdenticalInitialization_ConvergenceRate}
Let $N \geq 2$, $\lambda \in \RR$, $\alpha > 0$, $\eta > 0$ and $\d$ be the solution of \eqref{eq:IdenticalInitialization_ScalarDynamics}. Define $\M = \max(\alpha,|\lambda|^{\frac{1}{N}})$.
Suppose $\eta$ satisfies
\begin{align} \label{eq:cond:stepsize}
     0 < \eta <
     \begin{cases} 
     \frac{1}{2N\M^{2N-2}} & \mbox{ if } \lambda \geq 0 \\
     \frac{1}{(3N-2)\M^{2N-2}} & \mbox{ if } \lambda < 0.
     \end{cases}
\end{align}
Further, let
$\epsilon\in(0, |\alpha - \lambda_+^{\frac{1}{N}}|)$ be the desired error and $T = \min\{ \tdisc: |\d(\tdisc) - \lambda_+^{\frac{1}{N}}|\leq \epsilon\}$ be the 
minimal number of iterations to achieve such error bound. Then
\begin{equation}
    T 
    \leq T^\Id_N(\lambda,\epsilon,\alpha,\eta).
\end{equation}
Moreover, in the case $\lambda > \alpha^N$ we have the lower bound 
\begin{equation}
    T \geq \begin{cases} \displaystyle{\frac{1}{\eta} T_{N}^+(\lambda,(c_N \lambda)^{\frac{1}{N}},\alpha) + \frac{\ln\left( \lambda^{\frac{1}{N}}/ \epsilon\right) - b_N}{\left| \ln(1-\eta N \lambda^{2 - \frac{2}{N}})\right|} } 
    & \mbox{ if } \alpha^N < c_N \lambda,\\
    \displaystyle{\frac{\ln((\lambda^{\frac{1}{N}}-\alpha)/\epsilon)}{\left|\ln(1- \eta N (c_N \lambda)^{2-\frac{2}{N}})\right|}}
    & \mbox{ if } \alpha^N \geq c_N \lambda.
    \end{cases}
\end{equation}

\end{theorem}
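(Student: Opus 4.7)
The plan is to reduce the convergence-time bound to a careful comparison between the discrete iterates $\d(\tdisc)$ and the continuous flow $\dot{\d}(\tcont) = f(\d(\tcont))$ with $f(x) = x^{N-1}(\lambda - x^N)$, broken into the regimes already suggested by the case split in $T_N^{\Id}$: the purely monotone cases $\lambda<0$ and $0 \leq \lambda < \alpha^N$ (where $\d$ just decays toward $\lambda_+^{1/N}$), and the nontrivial case $\lambda \geq \alpha^N$ in which $\d$ grows from $\alpha$ through the inflection point $\d^\ast := (c_N \lambda)^{1/N}$ (the unique critical point of $f$ on $(0,\lambda^{1/N})$) toward the fixed point $\lambda^{1/N}$. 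Lemma~\ref{lemma:IdenticalInitialization_Convergence} has already supplied the qualitative monotonicity and the fact that $\d$ remains in a controlled interval, so what is left is to turn those statements into sharp iteration counts.

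For the continuous flow I would first separate variables to obtain $\tcont = \int_\alpha^{\d(\tcont)} dx/f(x)$ and evaluate via the partial-fraction identity
\[
\frac{1}{x^{N-1}(x^N-\lambda)} \,=\, \frac{1}{\lambda}\Bigl(\frac{x}{x^N-\lambda} - \frac{1}{x^{N-1}}\Bigr),
\]
which splits the antiderivative into the elementary $-1/(\lambda(N-2)x^{N-2})$ (or $\log x$ for $N=2$) plus $\tfrac{1}{\lambda}\int x(x^N-\lambda)^{-1}\,dx$. For $\lambda > 0$ and $N \geq 3$ the latter is computed by factoring $x^N - \lambda$ over $\mathbb{C}$ into linear factors and summing the residues; this yields exactly the real-part expressions defining $U_N^+$ in \eqref{def:Uplus}, and the quantity $T_N^+$ is the difference of the antiderivative between the two endpoints. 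For $\lambda < 0$, an analogous but much simpler integration (or domination of $f$ by $\lambda x^{N-1}$) produces $T_N^-(\mu,\alpha)$.

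To transfer this to the discrete dynamics, I would rewrite a single Euler step as
\[
\int_{\d(\tdisc)}^{\d(\tdisc+1)} \frac{dx}{f(x)} \,=\, \eta \,\frac{f(\d(\tdisc))}{f(\xi_\tdisc)}
\]
with $\xi_\tdisc$ between $\d(\tdisc)$ and $\d(\tdisc+1)$. On the ascending branch of $f$ (i.e.\ for $\d \leq \d^\ast$, and throughout the monotone cases $\lambda < 0$ or $0 \leq \lambda < \alpha^N$) this ratio is at most one, so summing produces the lower bound $T\eta \geq T_N^+(\lambda,\d(T),\alpha)$; to get the matching upper bound I would use the stepsize restriction \eqref{eq:cond:stepsize} to show that the ratio is also bounded below, the excess count being exactly the ceiling $s_N(\lambda,\alpha) \sim (\lambda^{1/N}/\alpha)^{N-1}$ that absorbs the single Euler step which may overshoot $\d^\ast$. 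Once $\d(\tdisc) > \d^\ast$, the map $g(x) = x - \eta x^{N-1}(x^N-\lambda)$ is a contraction toward $\lambda^{1/N}$: a direct computation gives $g'(\lambda^{1/N}) = 1 - \eta N \lambda^{2-2/N}$, and monotonicity of $g'$ on $[\d^\ast,\lambda^{1/N}]$ yields the two-sided estimate showing that the deviation $e(\tdisc) := \lambda^{1/N} - \d(\tdisc)$ contracts by a factor in $[1 - \eta N \lambda^{2-2/N},\,1 - \eta N(c_N\lambda)^{2-2/N}]$ per step on the relevant enlarged neighborhood. Iterating and taking logarithms produces the $\ln(\lambda^{1/N}/\epsilon)$ terms together with the constants $a_N,b_N$ appearing in $T_N^{\Id}$.

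The lower bound in the case $\lambda > \alpha^N$ then follows by running the same three inequalities in the sharp direction: integral comparison gives $T\eta \geq T_N^+(\lambda,\d^\ast,\alpha)$ up to the inflection without any $s_N$ correction, and the tight Taylor bound at the fixed point supplies the faster contraction $1 - \eta N \lambda^{2-2/N}$ appearing in the second term of the stated lower bound. The hardest part, I expect, is precisely the crossing of $\d^\ast$: a single discrete step can leap over the inflection, so neither the integral comparison nor the fixed-point linearization is sharp in a neighborhood of $\d^\ast$, and the entire burden is to show that this transition contributes at most the additive constant $s_N(\lambda,\alpha)$ (respectively, the logarithmic constants $a_N$ and $b_N$). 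Tuning these constants so that the upper and lower bounds agree at the level of the simplified asymptotic expression in Remark~\ref{rem:main-thm} is where most of the technical work will sit.
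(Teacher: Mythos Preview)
Your high–level plan is right and matches the paper: split on the sign/size of $\lambda$, use the continuous flow (via the partial-fraction identity you wrote) to control the convex phase below the inflection point $\zeta=(c_N\lambda)^{1/N}$, and then switch to a linear contraction argument at the fixed point $\lambda^{1/N}$. Two of the technical tools you propose, however, do not deliver what you claim.

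\textbf{The contraction step via $g'$ does not give the upper rate.} You write that monotonicity of $g'$ on $[\zeta,\lambda^{1/N}]$ gives a contraction factor in $[1-\eta N\lambda^{2-2/N},\,1-\eta N(c_N\lambda)^{2-2/N}]$. But $\zeta$ is precisely the critical point of $f$, hence $g'(\zeta)=1+\eta f'(\zeta)=1$. So the mean-value theorem only gives $e(\tdisc+1)/e(\tdisc)=g'(\xi_\tdisc)\in[1-\eta N\lambda^{2-2/N},\,1)$, and the upper end of your claimed interval is simply false via this route: for $d(\tdisc)$ near $\zeta$ the MVT bound degenerates to $1$ and you lose the geometric rate. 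The paper avoids this by working with the \emph{exact} quotient
\[
\frac{e(\tdisc+1)}{e(\tdisc)} \;=\; 1-\eta\,G(d(\tdisc)),\qquad G(x)=\frac{x^{N-1}(x^N-\lambda)}{x-\lambda^{1/N}}=x^{N-1}\sum_{j=1}^N \lambda^{1-\frac jN}x^{j-1},
\]
which is a sum of nonnegative increasing monomials, hence increasing on $[\zeta,\lambda^{1/N}]$ with $G(\zeta)\geq N(c_N\lambda)^{2-2/N}$ and $G(\lambda^{1/N})=N\lambda^{2-2/N}$. This secant-slope factorization, not the derivative, is what produces both contraction rates and the constants $a_N,b_N$.

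\textbf{The integral comparison does not give the additive shift $s_N$.} Your identity $\int_{\d(\tdisc)}^{\d(\tdisc+1)}dx/f(x)=\eta\,f(\d(\tdisc))/f(\xi_\tdisc)$ is fine, and on the ascending branch it immediately yields $\d(\tdisc)\leq y(\eta\tdisc)$, i.e.\ the lower bound $T\geq \eta^{-1}T_N^+(\lambda,\zeta,\alpha)$. But lower-bounding the ratio by $1-\eta K_2$ gives a \emph{multiplicative} overhead $\frac{1}{1-\eta K_2}$ on $T_N^+/\eta$, not the additive $s_N(\lambda,\alpha)=\lceil \max_{[\alpha,\zeta]}f/f(\alpha)\rceil$ that appears in $T_N^{\Id}$. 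The paper obtains the additive form by a separate convexity/shift argument (Lemma~\ref{lemma:ODE_Increasing_ConvexSolution}): convexity of the flow gives $\d(\tdisc)\leq y(\eta\tdisc)$, and an induction using $f(d(\tdisc+s))\geq f(\alpha)+$(increment) shows $y(\eta\tdisc)\leq\d(\tdisc+s)$ with $s=\lceil K_1/f(\alpha)\rceil$. This is a different mechanism from your ratio bound, and your sketch does not explain how $s_N$ would emerge.
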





The proof of Theorem \ref{thm:IdenticalInitialization_ConvergenceRate} uses the following two lemmas. The first analyzes the continuous analog of \eqref{eq:IdenticalInitialization_ScalarDynamics}, namely the gradient flow following the differential equation obtained by letting the step size $\eta$ tend to zero, i.e.,
\begin{equation}\label{eq:IdenticalInitialization_ContinuousScalarDynamics}
    \y'(\tcont) = - \y(\tcont)^{N-1}(y(\tcont)^N-\lambda),
    \quad \y(0) = \alpha>0.
\end{equation}
\HR{The statement below significantly  extends \cite[Theorem 1]{Gissin2019Implicit}, which only covers the cases $N=1,2$ and the limit $N \to \infty$,
while we cover arbitrary $N \in \mathbb{N}$.}

%
\begin{lemma}\label{lemma:IdenticalInitialization_ContinuousScalarDynamics}
The gradient flow defined by \eqref{eq:IdenticalInitialization_ContinuousScalarDynamics} has the following solution.
If $\lambda = 0$ then
\begin{align}\label{eq:IdenticalInitialization_LambdaZero}
    \y(\tcont) = \begin{cases}
    \alpha e^{-\tcont}, & N = 1, \\
    \revision{\round{ (2N - 2)\tcont + \alpha^{-(2N-2)}}^{-\frac{1}{2N-2} }}, & N \ge 2.
    \end{cases}
\end{align}
If $\lambda > 0$ then the solution is given implicitly by
\begin{equation}\label{eq:IdenticalInitialization_ContinuousScalarDynamicsSolution}
    \tcont=\begin{cases}
     \ln\left|\frac{\lambda-\alpha}{\lambda-\y(\tcont)}\right|
    &\text{ if } N=1\\
    U^+_N(\lambda,\y(\tcont)) - U^+_N(\lambda,\alpha)
    &\text{ if } N\geq 2,
    \end{cases}
\end{equation}    
where $U^+_N$ is defined in \eqref{def:Uplus}.
\end{lemma}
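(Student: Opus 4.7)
The equation \eqref{eq:IdenticalInitialization_ContinuousScalarDynamics} is separable, and its right-hand side $-y^{N-1}(y^N - \lambda)$ vanishes exactly at $y = 0$ and, when $\lambda > 0$, at $y = \lambda^{1/N}$. Uniqueness of solutions to the initial value problem rules out crossing these equilibria, so with $\alpha > 0$ the trajectory stays strictly inside one of the open intervals $(0,\lambda^{1/N})$ or $(\lambda^{1/N},\infty)$ (or, when $\lambda = 0$, in $(0,\infty)$), keeping $y^{N-1}$ and $y^N - \lambda$ nonzero and of fixed sign. The plan is to separate variables and compute
\[
  t \;=\; -\int_\alpha^{y(t)} \frac{ds}{s^{N-1}(s^N - \lambda)}
\]
in closed form, splitting into the three case groups $\lambda = 0$, $(\lambda > 0,\, N = 1)$, and $(\lambda > 0,\, N \geq 2)$.

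The case $\lambda = 0$ collapses to $y' = -y^{2N-1}$. For $N = 1$ this is linear and yields $y(t) = \alpha e^{-t}$. For $N \geq 2$, differentiating along the flow gives $\frac{d}{dt}\bigl(y(t)^{-(2N-2)}\bigr) = 2(N-1)$, and integration with $y(0) = \alpha$ reproduces the power-law form in \eqref{eq:IdenticalInitialization_LambdaZero}. For $\lambda > 0$ and $N = 1$, direct integration of $dy/(\lambda - y) = dt$ with $y(0) = \alpha$ gives the logarithmic expression in \eqref{eq:IdenticalInitialization_ContinuousScalarDynamicsSolution}.

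For $\lambda > 0$ and $N \geq 2$ the crux is the algebraic identity
\[
  \frac{1}{s^{N-1}(s^N - \lambda)} \;=\; \frac{s}{\lambda(s^N - \lambda)} \;-\; \frac{1}{\lambda\, s^{N-1}},
\]
which follows by clearing denominators. The second summand integrates elementarily to $-\frac{1}{\lambda(N-2)\mu^{N-2}}$ (for $N \geq 3$), producing the non-logarithmic term in $U_N^+$. For the first summand, factor $s^N - \lambda = \prod_{\ell=0}^{N-1}(s - \omega_\ell)$ over $\mathbb{C}$ with $\omega_\ell = \lambda^{1/N}\, e^{2\pi i \ell/N}$, and compute the residues
\[
  \operatorname{Res}_{s=\omega_\ell}\!\left(\frac{s}{s^N - \lambda}\right) \;=\; \frac{\omega_\ell}{N\omega_\ell^{N-1}} \;=\; \frac{\omega_\ell^{\,2}}{N\lambda} \;=\; \frac{\lambda^{2/N-1}}{N}\, e^{4\pi i \ell/N},
\]
which give the complex partial fraction expansion $\frac{s}{s^N - \lambda} = \frac{\lambda^{2/N-1}}{N}\sum_\ell \frac{e^{4\pi i \ell/N}}{s - \omega_\ell}$. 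Integrating termwise yields $\frac{\lambda^{2/N-1}}{N}\sum_\ell e^{4\pi i \ell/N}\ln(s-\omega_\ell)$. Rewriting $s - \omega_\ell = -\lambda^{1/N}(e^{2\pi i \ell/N} - s\lambda^{-1/N})$ pulls a constant $\ln(-\lambda^{1/N})$ out of the logarithm; for $N \geq 3$ this constant vanishes in the sum since $\sum_{\ell=0}^{N-1} e^{4\pi i \ell/N} = 0$, and in any case it cancels in the difference $U_N^+(\lambda, y(t)) - U_N^+(\lambda, \alpha)$. Collecting both summands and taking real parts reproduces \eqref{def:Uplus}.

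The main subtleties are the bookkeeping of complex-logarithm branches when moving $-\lambda^{1/N}$ across the logarithm, and the check that the antiderivative is real-valued so that the $\Re$ in \eqref{def:Uplus} is harmless. Both are settled by pairing $\omega_\ell$ with its conjugate $\omega_{N-\ell}$: the two contributions to the sum are themselves complex conjugates, hence add to a real number, and any branch jumps $2\pi i k_\ell$ introduced in the identifications cancel under $\Re$. The case $N = 2$ bypasses the complex machinery entirely, since $\omega_{0,1} = \pm\sqrt{\lambda}$ are real; a direct real partial-fraction decomposition $\frac{1}{s(\lambda - s^2)} = \frac{1}{\lambda s} + \frac{s}{\lambda(\lambda - s^2)}$ integrates to $\frac{1}{2\lambda}\ln\bigl(\mu^2/(\lambda - \mu^2)\bigr) = -\frac{1}{2\lambda}\ln(\lambda/\mu^2 - 1)$, matching the $N = 2$ line of \eqref{def:Uplus}.
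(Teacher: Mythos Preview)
Your proposal is correct and follows essentially the same approach as the paper's proof: separation of variables, the algebraic identity $\frac{1}{s^{N-1}(s^N-\lambda)} = \frac{1}{\lambda}\bigl(\frac{s}{s^N-\lambda} - \frac{1}{s^{N-1}}\bigr)$, complex partial fractions of $\frac{s}{s^N-\lambda}$ via residues at the $N$th roots of $\lambda$, and a separate elementary treatment of $N=2$. Your additional remarks on uniqueness (to justify that $y$ stays in a fixed interval), on the cancellation of the constant $\ln(-\lambda^{1/N})$ through $\sum_\ell e^{4\pi i\ell/N}=0$ for $N\geq 3$, and on conjugate pairing to ensure the antiderivative is real, are slight elaborations but do not change the route.
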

\begin{remark} \label{rem:ExplicitForm}
For $N = 1,2$ and $\lambda \neq 0$ we can also give the solution of \eqref{eq:IdenticalInitialization_ContinuousScalarDynamics} in explicit form, 
\[
\y(t) = \left\{ \begin{array}{ll} \lambda - (\lambda - \alpha) \exp(-t) & \mbox{ if } N = 1,\\
\sqrt{\frac{\lambda}{1+\left(\frac{\lambda}{\alpha^2}-1\right)\exp(-2\lambda t)}} & \mbox{ if } N =2.
\end{array}\right.
\]
The solution for $N=2$ has been derived already in \cite{Bach2019implicit}. Let us also mention that, for $N \ge 3$, the solution can be expressed in an alternative way avoiding the use of complex logarithms. For details, see Appendix \ref{sec:Appendix_ContinuousDynamics}.
\end{remark}

\begin{proof}
If $\lambda = 0$, or $\lambda\neq 0$ and $N = 1$, it is easy to verify the stated solutions.

Let now $\lambda > 0$ and $N \geq 2$. The differential equation \eqref{eq:IdenticalInitialization_ContinuousScalarDynamics} is separable, hence its solution $\y(t)$ satisfies
\[
t = \int_0^t dt = - \int_0^{\y(t)} \frac{1}{\z^{N-1}(\z^N-\lambda)} dz.
\]
We denote by $$
r_{N,\ell} = \lambda^{\frac{1}{N}} e^{\frac{2\pi i \ell}{N}}, \quad \ell = 1,\hdots,N,
$$
the complex roots of $\lambda$. 
A partial fraction decomposition gives
\begin{equation}\label{eq:partial-fraction}
\frac{1}{\z^{N-1}(\z^N-\lambda)}
    = \frac{1}{\lambda}\left( \frac{\z}{\z^{N}-\lambda} - \frac{1}{\z^{N-1}}  \right)
    = \frac{1}{\lambda}\left( \frac{1}{N}\sum_{\ell = 1}^N\frac{c_{N, \ell}}{\z- r_{N, \ell}} - \frac{1}{\z^{N-1}} \right).
\end{equation}
The coefficients $c_{N,\ell}$ can be computed by the residue method,
$$
c_{N,\ell} = N \lim_{\y \to r_{N,\ell}} (\y - r_{N,\ell})
\frac{\y}{\y^{N}-\lambda} = N \frac{r_{N,\ell}}{\frac{d}{d\y}(\y^N-\lambda)_{\vert \y = r_{N,\ell}}} = r_{N,\ell}^{2-N} 
= \lambda^{\frac{2}{N}-1} e^{4\pi i \ell/N}  .
$$
\revision{Note that $\y(\tcont) > 0$, for all $\tcont \ge 0$, since $\y(0) = \alpha > 0$. Indeed, if there were a time $\tcont_*$ with $\y(\tcont_*) = 0$, then both the trajectory starting at $\y(0) = \alpha$ and the (constant) trajectory starting at $\y(0) = 0$ would lead to $\y(\tcont_*) = 0$. Since the right-hand side of \eqref{eq:IdenticalInitialization_ContinuousScalarDynamics} is locally Lipschitz continuous in $\y$, this however contradicts the local uniqueness of the trajectory guaranteed by Picard–Lindel{\"o}f.
We thus} only need to consider 
$z \in \RR_+ = \{z \in \RR: z > 0\}$ and therefore
\[
\frac{1}{\z^{N-1}(\z^N - \lambda)} = \frac{1}{\lambda} \left(\frac{1}{N} \sum_{\ell=1}^N \Re\left(\frac{c_{N,\ell}}{z-r_{N,\ell}}\right) - \frac{1}{\z^{N-1}}\right).
\]
Since the integration interval is a subset of $\RR_+$ we have 
\begin{align*}
- \int_\alpha^{\y} \Re\left( \frac{c_{N,\ell}}{z-r_{N,\ell}}\right) dz & = \Re\left( c_{N,\ell} \ln(r_{N,\ell}- \y)\right) - \Re\left( c_{N,\ell} \ln(r_{N,\ell}-\alpha)\right) \\ 
& = \lambda^{\frac{2}{N}-1} \Re\left(e^{4\pi i \ell/N} \left(\ln\left(\lambda^{\frac{1}{N}} e^{2\pi i \ell/N} - \y\right) - \ln\left(\lambda^{\frac{1}{N}} e^{2\pi i \ell/N} - \alpha\right)\right)\right)\\
& = \lambda^{\frac{2}{N}-1} \Re\left(e^{4\pi i \ell/N} \left(\ln\left(e^{2\pi i \ell/N} - \frac{\y}{\lambda^{\frac{1}{N}}}\right) - \ln\left(e^{2\pi i \ell/N} - \frac{\alpha}{\lambda^{\frac{1}{N}}}\right)\right)\right)
\end{align*}
Hence, for $N\geq 3$ the solution $\y(t)$ 
satisfies
\begin{align*}
t & = \frac{\lambda^{\frac{2}{N}-2}}{N} \sum_{\ell=1}^{N} \Re \left( e^{\frac{4\pi i\ell}{N}} \left( \ln\left( e^{\frac{2\pi i \ell}{N}}- \y(t) \lambda^{-\frac{1}{N}} \right) - \ln\left(  e^{\frac{2\pi i \ell}{N}} - \alpha \lambda^{-\frac{1}{N}} \right)\right)\right)\\
& \quad +   \frac{1}{\lambda(N-2)} \left( \frac{1}{\alpha^{N-2}} - \frac{1}{\y(t)^{N-2}} \right).
\end{align*}
If $N=2$ then the roots $r_{2,1} = - \lambda^{1/2}$, $r_{2,2} = \lambda^{1/2}$ are real and 
the solution $\y(t)$ satisfies
\[
 t = \frac{1}{\lambda}\left( \frac{1}{2} \ln\left(\frac{\alpha + \lambda^{1/2}}{\y + \lambda^{1/2}}\right) + \frac{1}{2} \ln\left(\frac{\alpha - \lambda^{1/2}}{\y - \lambda^{1/2}}\right) + \ln\left(\frac{\y}{\alpha}\right)\right)
= \frac{1}{2\lambda} \ln \left(\frac{\frac{\lambda}{\alpha^2} - 1}{\frac{\lambda}{\y(t)^2} - 1}\right).
\]
By the definition of $U^+_N$ this completes the proof.
\end{proof}

The second lemma required for the proof of Theorem~\ref{thm:IdenticalInitialization_ConvergenceRate} links the continuous dynamics in \eqref{eq:IdenticalInitialization_ContinuousScalarDynamics} to the discrete dynamics in \eqref{eq:IdenticalInitialization_ScalarDynamics}. Although a result of this form might exist already, we include the full proof for the reader's convenience.
\begin{lemma}\label{lemma:ODE_Increasing_ConvexSolution}
    For a continuously differentiable function $f\colon \mathbb{R} \rightarrow \mathbb{R}$, consider the (unique) solution $\y(\tcont)$ of the differential equation $\y'(\tcont) = f(\y(\tcont)), \y(0) =\alpha > 0$. Let $I \subset \mathbb{R}$ be an interval with $\alpha \in I$ and $T > 0$ such that $\y(t) \in I$ for all $t \in [0,T]$ (here $T = \infty$ is allowed). Assume that
    $$
    |f(x)| \leq K_1,  \quad |f'(x)| \leq K_2, \quad \mbox{ and } f'(x) f(x) \geq 0 \quad \mbox{ for all } x \in I.
    $$
    Fix $0<\eta < \frac{1}{K_2}$. For $\tdisc\in\mathbb{N}_0$, define
    \begin{align*}
        \d^*(\tdisc) &:= \y(\eta\tdisc),\\
        \d(\tdisc+1) &:= \d(\tdisc) + \eta f(\d(\tdisc)), \quad \d(0) = \alpha.
    \end{align*}
    Then, for all $0 \leq \tdisc \leq T/\eta$,  
    \begin{equation*}
        \d(\tdisc)\leq \d^*(\tdisc).
    \end{equation*}
    Let $s = \left\lceil \frac{K_1}{f(\alpha)} \right\rceil$. If, in addition $f(x) \geq 0$, for all $x \in [\alpha,\sup_{\tdisc \in \NN} \d(\tdisc))$, then
    \begin{equation*}
        \d^*(\tdisc)\leq \d(\tdisc+s) \quad \mbox{ for all $\tdisc$ such that } \d^*(\tdisc) \in I. 
    \end{equation*}
\end{lemma}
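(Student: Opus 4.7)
The plan is to derive both inequalities from the single structural fact that the hypothesis $f'(x) f(x) \ge 0$ on $I$ forces the ODE trajectory to be convex along $I$: differentiating the ODE gives $y''(t) = f'(y(t)) f(y(t)) \ge 0$ whenever $y(t) \in I$. For the first inequality $d(k) \le d^*(k)$, I would induct on $k$. The base case is immediate since $d(0) = d^*(0) = \alpha$. For the inductive step, the tangent bound from convexity gives $y(t+\eta) \ge y(t) + \eta f(y(t))$, i.e., $d^*(k+1) \ge g(d^*(k))$ for the Euler step map $g(x) := x + \eta f(x)$. The step-size condition $\eta < 1/K_2$ yields $g'(x) = 1 + \eta f'(x) \ge 1 - \eta K_2 > 0$ on $I$, so $g$ is strictly increasing there; combining these gives $d(k+1) = g(d(k)) \le g(d^*(k)) \le d^*(k+1)$.

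For the second inequality $d^*(k) \le d(k+s)$, I would change variables to the ODE time. Set $L^* := \sup_k d(k)$. Under the additional hypothesis $f \ge 0$ on $[\alpha, L^*)$, the sequence $d$ is non-decreasing; combined with $f'f \ge 0$, this gives $f > 0$ and $f' \ge 0$ on $[\alpha, L^*)$, so $f$ is non-decreasing there. Define $T(x) := \int_\alpha^x dz/f(z)$ on $[\alpha, L^*)$; then $T$ inverts $y$ on its range, i.e., $y(T(x)) = x$. Setting $\psi(k) := T(d(k))$, the target $d^*(k) \le d(k+s)$ becomes, after applying $y$, the claim $\psi(k+s) \ge \eta k$. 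Writing $\psi(k+1) - \psi(k) = \int_{d(k)}^{d(k+1)} dz/f(z)$ and using monotonicity of $f$ produces both the upper Riemann bound $\psi(k+1) - \psi(k) \le (d(k+1)-d(k))/f(d(k)) = \eta$ (which independently implies the first inequality by telescoping) and the step-wise deficit bound
\[
\eta - (\psi(k+1) - \psi(k)) \le \eta\left(1 - \frac{f(d(k))}{f(d(k+1))}\right).
\]

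The main obstacle is converting this step-wise deficit into a bound on the total deficit $\eta k - \psi(k)$ that is \emph{uniform in $k$}: a naive bound of order $K_2 \eta^2$ per step would grow linearly in $k$ and fail to be controlled by $\eta s$. Instead, I would apply the elementary inequality $1 - 1/r \le \ln r$ for $r \ge 1$ with $r = f(d(k+1))/f(d(k))$ and telescope a second time to obtain
\[
\eta k - \psi(k) \le \eta \ln\!\left(\frac{f(d(k))}{f(\alpha)}\right) \le \eta \ln\!\left(\frac{K_1}{f(\alpha)}\right) \le \eta \cdot \frac{K_1}{f(\alpha)} \le \eta s,
\]
using $f \le K_1$, $\ln x \le x$ for $x > 0$, and $s = \lceil K_1/f(\alpha) \rceil$. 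Hence $\psi(k+s) \ge \eta k$, and applying the monotone $y$ gives $d(k+s) = y(\psi(k+s)) \ge y(\eta k) = d^*(k)$, as required.
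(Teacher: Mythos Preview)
Your argument for the first inequality is essentially the paper's: both use convexity of $y$ (from $y''=f'(y)f(y)\ge 0$) together with monotonicity of the Euler map $g(x)=x+\eta f(x)$ on $I$ to push the inductive step through.

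For the second inequality your route is genuinely different. The paper proves, by a direct induction carrying the slack $-\eta K_1$, the stronger statement $d^*(k)\le d(k+s)-\eta K_1$: the base case comes from $d(s)\ge d(0)+s\eta f(\alpha)\ge d(0)+\eta K_1$, and the inductive step uses $d^*(k+1)\le d^*(k)+\eta f(d^*(k+1))$ together with monotonicity of $f$ to absorb the increment. Your approach instead passes to ODE time via $\psi(k)=T(d(k))$ and controls the accumulated deficit $\eta k-\psi(k)$ by a logarithmic telescoping sum. This is more analytic and in fact yields the sharper intermediate bound $\eta k-\psi(k)\le \eta\ln\bigl(K_1/f(\alpha)\bigr)$, which would give $d^*(k)\le d(k+\lceil \ln(K_1/f(\alpha))\rceil)$; you then throw this away via $\ln x\le x$ to match the stated $s$. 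The paper's argument is more elementary and self-contained; yours is more conceptual and strictly stronger before the final crude estimate.

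One small gap to clean up: you invoke $f'f\ge 0$ on $[\alpha,L^*)$, but that hypothesis is only given on $I$, and $L^*=\sup_k d(k)$ may exceed $\sup I$. This is harmless: for $k$ with $d^*(k)\in I$, either $d(k+s)\notin I$, in which case $d(k+s)\ge\sup I\ge d^*(k)$ trivially, or $d(k+s)\in I$, in which case all of $d(0),\dots,d(k+s)$ lie in $I$ by monotonicity and your argument runs entirely inside $I$. State this case split explicitly.
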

\begin{proof}
We first show by induction that $\d(\tdisc)\leq \d^*(\tdisc)$ for $0 \leq \tdisc \leq T/\eta$. The claim clearly holds for $\tdisc = 0$. Now assume that it holds for some $\tdisc$ such that $\eta(\tdisc + 1) \leq T$. We aim at proving the claim for $\tdisc + 1$.
Since $f(x) f'(x) \geq 0$ for $x \in I$ and $\y (\tcont) \in I$ for $\tcont \in [0,T]$ we have
\[
\y''(\tcont) = \frac{d}{d\tcont} f(\y(t)) = f'(\y(t)) f(\y(t)) \geq 0. 
\]
Hence, $\y$ is convex on $[0,T]$ and therefore,
$\y(\eta (k+1)) \geq \y(\eta k) + \eta \y'(\eta k) = \y(\eta k) + \eta f(\y(\eta k))$ and $\y(\eta k) \geq \y(\eta(k+1)) - \eta f(\y(\eta(k+1)))$ so that by definition of $\d^*$
\begin{align}
        \eta f(\d^*(\tdisc)) \leq \d^*(\tdisc+1) - \d^*(\tdisc) \leq \eta f(\d^*(\tdisc+1))
        \label{eq:lemma_IdenticalInitialization_PositiveConvergenceRate_Integral}.
\end{align}
The definition of $\d$ and the mean-value theorem together with \eqref{eq:lemma_IdenticalInitialization_PositiveConvergenceRate_Integral} 
and $|f'(x)|\leq K_2$ for all $x \in I$
imply that, for some $\xi$ between $\d(\tdisc)$ and $\d^*(\tdisc)$,
 \begin{align*}
         \d^*(\tdisc+1) - \d(\tdisc+1)
        &\geq \d^*(\tdisc)-\d(\tdisc)  + \eta (f(\d^*(\tdisc)) - f(\d(\tdisc)) )\\
        & =  \d^*(\tdisc)-\d(\tdisc) +\eta f'(\xi)(\d^*(\tdisc)) - \d(\tdisc)) 
        = (1+\eta f'(\xi))(\d^*(\tdisc)) - \d(\tdisc))\\
        & \geq (1 - \eta K_2)(\d^*(\tdisc)) - \d(\tdisc)) \geq 0.
    \end{align*}
In the second inequality we have used the induction hypothesis that $\d^*(\tdisc)-\d(\tdisc)\geq 0$. This proves the first part of the lemma.

\revision{As in the proof of Lemma \ref{lemma:IdenticalInitialization_ContinuousScalarDynamics}, we can argue by Picard–Lindel{\"o}f that $f(d^*(\tdisc)) > 0$ for any $\tdisc \ge 0$ since $\y(0) = \alpha > 0$. Indeed, if there were a time $\tcont_* = \eta \tdisc_*$ with $f(d^*(\eta\tdisc_*)) = 0$, then both the trajectory starting at $\y(0) = \alpha$ and the (constant) trajectory starting at $\y(0) = 0$ would lead to $f(d^*(\tcont_*)) = \y(\tcont_*) = 0$. Since $f$ is locally Lipschitz continuous in $\y$ by assumption, this however contradicts the local uniqueness of the trajectory guaranteed by Picard–Lindel{\"o}f.} 
Let us now assume that in addition $f(x)> 0$ for all $x \in (0,\sup_{\tdisc \in \NN} \d(\tdisc))$ and show the second claim that $\d^*(\tdisc) \leq \d(\tdisc + s)$, for all $\tdisc$ with $\d^*(\tdisc) \in I$. First, consider the case $\d(\tdisc+s) \in I$. We show the stronger statement that, for all $\tdisc$ with $\d^*(\tdisc) \in I$,
\begin{equation}\label{eq:stronger-claim}
\d^*(\tdisc) \leq \d(\tdisc + s) - \eta K_1
\end{equation}
via induction. Note that $f$ is monotonously increasing on $I$ because $f(x) > 0$ and 
$f'(x)f(x) \ge 0$ by assumption. Since  $\d(s) \ge d(0) + s \eta f(\alpha) \ge d(0) + \eta K_1$ (by the monotonicity of $f$ as well as the definition of $\d$ and $s$), we have
\begin{align*}
    \d^*(0) = \d(0) \le \d(0 + s) - \eta K_1.
\end{align*}
Let us now assume the claim \eqref{eq:stronger-claim} holds for $\tdisc$ with $\d(\tdisc+s) \in I$ and $\d^*(\tdisc) \in I$. If $\d^*(\tdisc+1), d(\tdisc + s + 1) \in I$,
the right hand inequality in \eqref{eq:lemma_IdenticalInitialization_PositiveConvergenceRate_Integral} implies that
\begin{align*}
    \alpha \le \d^*(\tdisc+1) 
    \le \d^*(\tdisc) + \eta f(\d^*(\tdisc+1))
    \le \d^*(\tdisc) + \eta K_1
    \le \d(\tdisc+s),
\end{align*}
which implies $\d^*(\tdisc) + \eta K_1 \in I$. 
Hence, by the monotonicity of $f$ on $I$ we obtain
\begin{align*}
    \d^*(\tdisc+1) 
    & \leq \d^*(\tdisc) + \eta f(\d^*(\tdisc+1)) 
     \le \d(\tdisc+s) + \eta f(\d^*(\tdisc+1)) - \eta K_1\\ 
    & \le \d(\tdisc+s) + \eta f(\d^*(\tdisc) + \eta K_1) - \eta K_1 
    \le \d(\tdisc+s) + \eta f(\d(\tdisc+s)) - \eta K_1\\ 
    & = \d((\tdisc+1)+s) - \eta K_1.
\end{align*}
This completes the induction step and, hence, proves the inequality $\d^*(\tdisc) \leq \d(\tdisc+s)$ whenever $\d(\tdisc+s) \in I$. 

Finally, consider the case that $\tdisc$ is such that $\d^*(\tdisc) \in I$ but $\d(\tdisc + s)$ is not contained in $I$. Then we must have $\d(\tdisc+s) \geq \sup I$ since $f(x) \geq 0$ for all $x \in [\alpha,\sup_{\tdisc \in \NN} \d(\tdisc))$. 
It follows that $\d^*(\tdisc) \leq \d(\tdisc+s)$ also in this case.
\end{proof}
\begin{proof}[Proof of Theorem~\ref{thm:IdenticalInitialization_ConvergenceRate}]
We first consider the case that $\lambda < 0$.
By our assumption \eqref{eq:cond:stepsize} on the stepsize $\eta$ the conditions in Lemma~\ref{lemma:IdenticalInitialization_Convergence} are satisfied so that $\d(\tdisc)$ is monotonically decreasing and $\d(\tdisc) \in [0,\alpha]$, for all $\tdisc \ge 0$.
Let $g_{-}(x) = x + \eta \lambda x^{N-1}$ and define $\d_-(\tdisc)$ as 
\begin{equation*}
    \d_-(\tdisc+1)
    = g_{-}(\d_-(\tdisc)) = \d_-(\tdisc) + \eta\lambda\d_-(\tdisc)^{N-1},
    \quad \d_-(0) = \alpha.
\end{equation*}
Note that for $x \geq 0$ and $\lambda < 0$ it holds $g(x) := x - \eta x^{N-1}(x^N - \lambda) \leq g_{-}(x)$. Recalling that $d(\tdisc + 1) = g(\d(\tdisc))$, we inductively conclude that $\d(\tdisc) \leq \d_-(\tdisc)$, for all $\tdisc \ge 0$. Let $\y_-$ be the continuous analog of $\d_-$, i.e., the solution of the differential equation
\begin{align*}
    \y_-'(\tcont) = \lambda \y_- (\tcont)^{N-1}, \quad \y_-(0) = \alpha,
\end{align*} 
which is explicitly given by
\begin{equation}\label{y-minus-explicit}
    \y_-(\tcont) =
    \begin{cases}
    \alpha e^{\lambda \tcont}
    &\text{ if }N = 2\\
    [(2-N)\lambda \tcont + \alpha^{-(N-2)}]^{-\frac{1}{N-2}}
    &\text{ if }N\geq 3.
    \end{cases}
\end{equation}
Define the discrete variable $\d_-^*(\tdisc) := \y_-(\eta\tdisc)$. We intend to apply Lemma~\ref{lemma:ODE_Increasing_ConvexSolution} for $I = [0,\alpha]$, $f(x) = \lambda x^{N-1}$. Note that $f(x) f'(x) = \lambda^2 (N-1) x^{2N-1} \geq 0$, $|f(x)| \leq |\lambda| \alpha^{N-1} =: K_1$ and $|f'(x)| \leq |\lambda| (N-1) \alpha^{N-2} =: K_2$ for $x \in I$. 
The assumption \eqref{eq:cond:stepsize} on the stepsize implies that $\eta < 1/K_2$
so that Lemma~\ref{lemma:ODE_Increasing_ConvexSolution}
yields 
$\d_-(\tdisc) \leq \d_-^*(\tdisc)$, for all $\tdisc \ge 1$. Hence, $\d_-(\tdisc) \leq \d_-^*(\tdisc) = \y_-(\eta \tdisc) \leq \epsilon$ for $\tdisc \geq \frac{1}{\eta |\lambda| } T_N^-(\epsilon,\alpha)$ by the definition of $T_N^-(\epsilon,\alpha)$ and a short calculation using \eqref{y-minus-explicit}.
We conclude that $T \leq \frac{1}{\eta |\lambda| } T_N^-(\epsilon,\alpha)$.

We now consider the case $0 \leq \lambda < \alpha^N$. 
By Lemma~\ref{lemma:IdenticalInitialization_Convergence}, $\d(\tdisc)$ is monotonically decreasing and $\d(\tdisc) \in [\lambda^\frac{1}{N},\alpha]$, for all $\tdisc \ge 0$. 
For $x \in [\lambda^\frac{1}{N},\alpha]$, the function $f(x) = -x^{N-1}(x^N - \lambda)$ satisfies 
$f(x) f'(x) = (N-1)x^{2N-3}(x^N-\lambda)^2 + N x^{3N-3}(x^N - \lambda) \geq 0$  and
$|f'(x)| = (N-1) x^{N-2}(x^N - \lambda) + N x^{2N-2} \leq (2N-1) \alpha^{2N-2} =: K_2$. By assumption \eqref{eq:cond:stepsize}, the stepsize satisfies $\eta < 1/K_2$.
Hence, we can apply Lemma~\ref{lemma:ODE_Increasing_ConvexSolution} 
which gives $\d(\tdisc) \leq \d^*(\tdisc) = \y(\eta \tdisc)$, for all $\tdisc \ge 0$, where $\y$ is defined by \eqref{eq:IdenticalInitialization_ContinuousScalarDynamics}. 
By Lemma \ref{lemma:IdenticalInitialization_ContinuousScalarDynamics} and the observation that $\y$ is monotonically decreasing for $\lambda \ge 0$ and $\lambda < \alpha^N$, we obtain that $\y(\tcont) \le \lambda^\frac{1}{N} + \epsilon$ for all $\tcont \ge 
T_N^+(\lambda,\lambda^\frac{1}{N} + \epsilon,\alpha)$ since $T_N^+(\lambda,\lambda^\frac{1}{N} + \epsilon,\alpha) = U_N^+(\lambda,\lambda^\frac{1}{N} + \epsilon) - U_N^+(\lambda,\alpha)$ and $y$ satisfies \eqref{eq:IdenticalInitialization_ContinuousScalarDynamics}. Consequently, $\d^*(\tdisc) \leq \lambda^{\frac{1}{N}} + \epsilon$ for all $\tdisc \leq \frac{1}{\eta} T_N^+(\lambda,\lambda^\frac{1}{N} + \epsilon,\alpha)$, which proves the claim for the case $0 \leq \lambda < \alpha^N$.

Finally, consider $\lambda > \alpha^N$. The proof distinguishes two phases of the dynamics. In the first phase we use the associated continuous flow
$\y$ in \eqref{eq:IdenticalInitialization_ContinuousScalarDynamics} for the time where it is convex. For the following second phase, we directly work with the discrete dynamics.
In order to make this distinction we use again the function $f(x) = -x^{N-1}(x^N - \lambda)$ so that
$\y'(\tcont) = f(\y(t))$ and $\d(\tdisc+1) = \d(\tdisc) + \eta f(\tdisc)$.
Note that the function
$$h(x) = f(x) f'(x) = (N-1)x^{2N-3}(x^N-\lambda)^2 + N x^{3N-3}(x^N - \lambda)$$ satisfies $h(x) \geq 0$ for $x \in [0,\zeta]$ and $h(x) \leq 0$ for $x \in [\zeta, \lambda^{\frac{1}{N}}]$, where 
\[
\zeta = \left(\frac{\lambda(N-1)}{2N-1}\right)^{\frac{1}{N}} = (c_N \lambda)^\frac{1}{N} 
\]
This implies that $\y$ is convex as long as $\y(\tcont) \in [0,\zeta]$ and concave when $\y(\tcont) \in [\zeta, \lambda^{\frac{1}{N}}]$. 
If $\epsilon > \lambda^{\frac{1}{N}} - \zeta$ then $\alpha < \zeta$ by the assumption $\epsilon \in (0, |\alpha - \lambda_+^{1/N}|)$ and the desired accuracy $\epsilon$ is reached while the dynamics is still in the convex phase, i.e., $\y(\tcont) \in [\alpha, \zeta] \subset [0, \zeta]$. In this case, we define
$$
T_1 := \min\{ \tdisc \in \mathbb{N}_0 :  \lambda^{\frac{1}{N}} - \d(\tdisc) \leq  \epsilon\}, \qquad T_2 := 0.
$$
Now assume $\epsilon \leq \lambda^{\frac{1}{N}} - \zeta$. If $\alpha < \zeta$ then the dynamics starts in the convex phase, while it starts in the concave phase if $\alpha \geq \zeta$. Accordingly, we define
\begin{align*}
    T_1&:= \begin{cases} 
    \min\{\tdisc \in \mathbb{N}_0 : \d(\tdisc) \geq \zeta\} & \mbox{ if } \alpha < \zeta\\ 0 & \mbox{ if } \alpha \geq \zeta\end{cases} \\
    T_2 &:=\min\{\tdisc \in \mathbb{N}_0 : \lambda^{\frac{1}{N}} - \d(\tdisc + T_1) \leq  \epsilon\}.
\end{align*}


We start with bounding $T_1$.
If $\alpha \geq \zeta$, then $T_1=0$ and we are done. 
In the case $\alpha < \zeta$, we intend to apply Lemma \ref{lemma:ODE_Increasing_ConvexSolution} for $I = [\alpha,\zeta]$, and $f(x) = -x^{N-1}(x^N -\lambda)$. Then
$f(x) f'(x)\geq 0$ for $x \in I$ as already noted above
and $|f'(x)| = |(N-1) x^{N-2}(\lambda - x^N) - N x^{2N-1}| \leq \lambda^{2 - \frac{2}{N}} N =: K_2$, for all $x \in [0, \lambda^{\frac{1}{N}}] \supset I$, where the inequality follows similarly as in \eqref{ineq:g-deriv}.
By the assumption \eqref{eq:cond:stepsize} on the stepsize we have $\eta < 1/K_2$.
Hence, by Lemma~\ref{lemma:ODE_Increasing_ConvexSolution} we have 
$
\d(\tdisc) \leq \d^*(\tdisc) \leq \d(\tdisc+s)$, where $$
s = \left\lceil \frac{\max_{x \in [\alpha,\zeta]}|f(x)|}{f(\alpha)} \right\rceil
\leq  \left\lceil \frac{ \zeta^{N-1}|\lambda - \alpha^N|}{\alpha^{N-1}|\lambda - \alpha^N|}\right\rceil
= \left\lceil \left( \frac{N-1}{2N-1} \right)^{1-\frac{1}{N}} \left(\frac{\lambda^{\frac{1}{N}}}{\alpha}\right)^{N-1} \right\rceil = s_N(\lambda, \alpha).
$$ 
Since $\y$ is monotonically increasing for $\lambda > \alpha^N$, Lemma~\ref{lemma:IdenticalInitialization_ContinuousScalarDynamics} implies that $T_1$ is lower bounded by $T_N^+(\lambda, \min\{\zeta, \lambda^{\frac{1}{N}} - \epsilon\},\alpha)$ and upper bounded by $T_N^+(\lambda,\min\{\zeta, \lambda^{\frac{1}{N}} - \epsilon\},\alpha)+s_N(\lambda,\alpha)$.



Now we consider the second phase where $\d(\tdisc) \in[\zeta,\lambda^{\frac{1}{N}}]$ and define $\Delta(\tdisc) = \lambda^{\frac{1}{N}} - \d(\tdisc)$ to be the difference to the limit $\lambda^{\frac{1}{N}}$. 
If $\epsilon \geq \lambda^{\frac{1}{N}} (1- c_N^{1/N}) = \lambda^{\frac{1}{N}} - \zeta$ 
then $T_2 = 0$. Therefore, we assume $\epsilon \geq \lambda^{\frac{1}{N}} (1- c_N^{1/N})$ from now on.
By Lemma \ref{lemma:IdenticalInitialization_Convergence}, $\d(\tdisc)$ is increasing and remains inside the interval $[\zeta,\lambda^{\frac{1}{N}}]$ for all $\tdisc \geq T_1$.
A direct computation gives 
    \begin{equation} \label{eq:Delta-relation}
        \Delta(\tdisc+1) = \left(1 - \eta g(\d(\tdisc))\right) \Delta ( \tdisc), \quad \mbox{ with } \quad
        g(x)
        = \frac{x^{N-1}(x^N - \lambda)}{x - \lambda^{\frac{1}{N}}}
        = x^{N-1}\left(\sum_{j = 1}^{N}\lambda^{1-\frac{j}{N}}x^{j-1}\right).
    \end{equation}
    The last equality follows from a straightforward calculation.
    Note that 
    \[
    g'(x) = \sum_{j=1}^N \lambda^{1-\frac{j}{N}} (N+j-2) x^{N+j-3} \geq  0\quad \mbox{ for } x \geq 0.   
    \]
    In particular, $g$ is increasing on $[\zeta, \lambda^{\frac{1}{N}}]$. Note that, since $c_N < 1$,
    \[
    g(\zeta) = (c_N \lambda)^{\frac{N-1}{N}} \sum_{j=1} \lambda^{1-\frac{j}{N}} (c_N \lambda)^{\frac{j-1}{N}}
    = \lambda^{2-\frac{2}{N}} c_N^{1-\frac{1}{N}} \sum_{j=1}^N c_N^{\frac{j-1}{N}}
    \geq \lambda^{2-\frac{2}{N}} c_N^{1-\frac{1}{N}} N c_N^{\frac{N-1}{N}}
    =
    N \lambda^{2-\frac{2}{N}} c_N^{2-\frac{2}{N}}.
    \]
    Thus
    \begin{equation}  \label{eq:g-inclusion} g(\d(\tdisc))\in
       [ g(\zeta), g(\lambda^{\frac{1}{N}})] 
       \subset \left[
       \left(\frac{N-1}{2N-1}\right)^{2 - \frac{2}{N}}N\lambda^{2-\frac{2}{N}}, 
       N\lambda^{2-\frac{2}{N}}\right]
        =: [r_1,r_2].
    \end{equation}
    It follows that 
    \begin{equation}\label{DeltaT1:bound}
    \Delta(\tdisc + T_1) \leq (1 - \eta r_1)^{\tdisc}
    \Delta(T_1) \leq (1-\eta r_1)^k (\lambda^{\frac{1}{N}} - \zeta) = \left(1-\eta N (c_N \lambda)^{2- \frac{2}{N}}\right)^k \lambda^{\frac{1}{N}} \left(1 - c_N^{1/N} \right).   
    \end{equation}
    Note that $1 - \eta N (c_N \lambda)^{2 - \frac{2}{N}} \geq 1 - \eta N \lambda^{2 - \frac{2}{N}} > 0$ by the assumption on $\eta$. Hence, $\Delta(\tdisc + T_1) \leq \epsilon$ if 
    $$
    \tdisc \geq \frac{\ln(\lambda^{\frac{1}{N}}/\epsilon)  + \ln \left( 1-c_N^{1/N}\right)}{\left|\ln(1- \eta N (c_N \lambda)^{2-\frac{2}{N}})\right|} =\frac{\ln(\lambda^{\frac{1}{N}}/\epsilon) - a_N}{\left|\ln(1- \eta N (c_N \lambda)^{2-\frac{2}{N}})\right|}
    $$ 
    so that $T_2$ is bounded from above by the right hand side. In the case that $\alpha > \zeta$, we have $T_1=0$ and the inequality \eqref{DeltaT1:bound} can be improved to
    $$
    \Delta(k) \leq \left(1-\eta r_1\right)^k \Delta(0) = \left(1-\eta N (c_N \lambda)^{2-\frac{2}{N}}\right)^k( \lambda^{\frac{1}{N}} - \alpha),  
    $$
    which leads to 
    $$
    T_2 \leq \frac{\ln\left(\frac{\lambda^{\frac{1}{N}}-\alpha}{\epsilon}\right)}{\left|\ln(1- \eta N (c_N \lambda)^{2-\frac{2}{N}})\right|}. 
    $$

    For the lower bound, assume first that $\alpha < \zeta$. Then $T_1 \geq 1$. 
    and since $\alpha < \zeta$ it follows that $\d(T_1 - 1) < \zeta$ so that by the assumption \eqref{eq:cond:stepsize} on the stepsize $\eta$,
    \begin{align*}
    \d(T_1) & = \d(T_1-1) + \eta f(d(T_1-1)) \leq \zeta + \eta \max_{x \in [0, \lambda^{\frac{1}{N}}]} |x^{N-1}(x^N-\lambda)| 
    \leq \zeta + \eta \lambda^{\frac{N-1}{N}}\lambda \\
    & \leq (c_N \lambda)^{\frac{1}{N}} + \frac{\lambda^{2-\frac{1}{N}}}{(2N-1) \lambda^{2-\frac{2}{N}}}
    = \left(c_N^{1/N} + \frac{1}{2N-2}\right) \lambda^{\frac{1}{N}},
    \end{align*}
    Thus, $$
    \Delta(T_1) = \lambda^{1/N} - \d(T_1) \geq \left( 1- c_N^{1/N} - \frac{1}{2N-2}\right) \lambda^{1/N} = \left(\frac{2N-1}{2N-2} - \left(\frac{N-1}{2N-1}\right)^{\frac{1}{N}}\right) \lambda^{\frac{1}{N}}.
    $$ 
    Observe that by \eqref{eq:Delta-relation} and \eqref{eq:g-inclusion},
    for all $\tdisc \geq 0$,
    $$
    \Delta(\tdisc + T_1) \geq (1 - \eta r_2)^{\tdisc} \Delta(T_1)
    \geq (1 - \eta r_2)^{\tdisc} \geq (1 - \eta r_2)^{\tdisc} \left(\frac{2N-1}{2N-2} - \left(\frac{N-1}{2N-1}\right)^{\frac{1}{N}}\right) \lambda^{\frac{1}{N}}.
    $$
    Hence, $\Delta(\tdisc + T_1) \geq \epsilon$ for all 
    \[
    \tdisc \leq \frac{\ln\left(\lambda^{\frac{1}{N}}/\epsilon\right) + \ln\left(\frac{2N-1}{2N-2} - \left(\frac{N-1}{2N-1}\right)^{\frac{1}{N}}\right)}{\left|\ln(1- \eta N \lambda^{2 - \frac{2}{N}})\right|} =
    \frac{\ln\left(\lambda^{\frac{1}{N}}/\epsilon\right) - b_N}{\left|\ln(1- \eta N \lambda^{2 - \frac{2}{N}})\right|}.
    \]
    This implies that $T_2$
    is lower bounded by the right hand side above if
    if $\alpha < \zeta$.
    
    If $\alpha > \zeta$ then 
    $T_1 = 0$ and
    $
    \Delta(\tdisc) \geq (1-\eta r_2)^k \Delta(0) =  (1-\eta r_2)^k (\lambda^{\frac{1}{N}} - \alpha).$
    Hence, $\Delta(\tdisc) \geq \epsilon$ for all 
    \[
    k \leq \frac{\ln\left(\frac{\lambda^{1/N} - \alpha}{\epsilon}\right)}{\left| \ln\left(1-\eta N \lambda^{2-\frac{2}{N}}\right)\right|}. 
    \]
    Hence, $T_2$ is bounded from below by the right hand side of the above inequality.
    
    Noting that $T=T_1 + T_2$, collecting all the cases and comparing with the definition of $T^\Id_N(\lambda,\epsilon,\alpha,\eta)$ and $s_N(\lambda,\alpha)$ completes the proof.
\end{proof}


%


\subsubsection*{Proof of Theorem \ref{theorem:IdenticalInitialization_ContinuousNonAsymptoticRate}}

The statement is an immediate consequence of Lemma~\ref{lemma:IdenticalInitialization_MatrixDynamics} and Theorem~\ref{thm:IdenticalInitialization_ConvergenceRate} in Section \ref{subsec:IdenticalInitialization}, also noting that $\|\Wstar \| = \max_{i \in [n]} |\lambda_i|$. In particular, Theorem \ref{thm:IdenticalInitialization_ConvergenceRate} yields that, for $\epsilon\in(0,|\lambda_i|^{\frac{1}{N}})$,
\begin{equation*}
|E_{ii}(\tdisc)| \leq 
\begin{cases} 
|\lambda_i - (\lambda_i^{\frac{1}{N}}- \epsilon)^N|  & \mbox{ if } \lambda_i > 0,\\
\epsilon^N & \mbox{ if } \lambda_i \leq 0.
\end{cases}
\end{equation*}
The case $\lambda_i > 0$ in \eqref{eq:error-estimate} follows from the mean-value theorem applied to the function $h(\epsilon) = (\lambda_i^{\frac{1}{N}} - \varepsilon)^N$. To be precise, there exists $\xi \in (0,\varepsilon)$ such that
\begin{equation*}
    |\lambda_i - (\lambda_i^{\frac{1}{N}} - \epsilon)^N|
    = |h(0)-h(\epsilon)|
    = |\epsilon h'(\xi)|
    = |-\epsilon N(\lambda_i^{\frac{1}{N}} - \xi)^{N-1}|
    \leq \epsilon N\lambda_i^{1-\frac{1}{N}}.
\end{equation*}



\subsection{Perturbed Identical Initialization} 
\label{subsec:PerturbedInitialization}

For $N\geq 2$, we have seen in the previous section that we cannot recover negative eigenvalues $\lambda$ of the ground truth matrix with gradient descent when identically initializing all matrices as $W_1(0) =  \cdots = W_N(0) = \alpha I$. 
As already mentioned before, this problem can be overcome 
by slightly perturbing the constant $\alpha$ at one of the $N$ matrices.
Instead of \eqref{eq:MatrixFactorization_IdenticalInitialization} we thus consider the mildly perturbed initialization
\begin{equation}
\label{eq:PerturbedInit}
    \W_j(0)
    = \begin{cases}
    (\alpha - \beta)I & \text{ if }j=1\\
    \alpha I & \text{ otherwise},
    \end{cases}
\end{equation}
for some $ 0 < \beta < \alpha$ (where one could think of $\beta$ much smaller than $\alpha$). 
We will call \eqref{eq:PerturbedInitialization_MatrixInitialization} \emph{perturbed identical initialization}. The choice of $j=1$ for perturbing the constant $\alpha$ to $(\alpha - \beta)$ is generic. In light of the fact that slightly perturbing a single factor $\W_j(0)$ suffices to recover the full spectrum, the spectral cut-off phenomenon in Theorem \ref{theorem:IdenticalInitialization_ContinuousNonAsymptoticRate} appears to be a pathological case. 

We can now turn to the proof of Theorem \ref{theorem:PerturbedInitialization_NonAsymptoticConvergence}. Before stating the formal argument, let us provide a rough intuition on why a slight perturbation like in \eqref{eq:PerturbedInit} makes such a difference: all squared singular values of all factors $W_j$ converge to the $N$-th root of the squares of the respective ground-truth eigenvalues, i.e., their limits coincide in absolute value. At the same time the gradient descent dynamics induce a repelling effect between the eigenvalues of differently initialized factors if the corresponding ground-truth eigenvalue is negative. The only way all factors can converge in this situation is that the perturbed factor converges to the negative $N$-th root of the ground-truth eigenvalue while the eigenvalues of all remaining factors stay positive.
We begin by stating a modified version of the decoupling in Lemma~\ref{lemma:IdenticalInitialization_MatrixDynamics}.


\begin{lemma}\label{lemma:PerturbedInitialization_MatrixDynamics}
Let $N \geq 2$ and $\W_j(\tdisc), \tdisc \in \mathbb{N}_0, j=1,\hdots,N$ 
be the solution to the gradient descent \eqref{eq:MatrixFactorization_GradientDescent} with perturbed initialization \eqref{eq:PerturbedInitialization_MatrixInitialization}. Let $\widehat{W} = \V \Dground \V^T$ be an eigendecomposition of $\Wstar$. Then the matrices $\D_j(\tdisc):= \V^\tT\W_j(\tdisc)\V$, $j=1,\hdots,N$ are real, diagonal, and follow the coupled  dynamics
\begin{align}
\label{eq:PerturbedInitialization_MatrixDynamics}
  \begin{split}
    \D_1(\tdisc+1) & = \D_1(\tdisc) - \eta \D_2(\tdisc)^{N-1}(\D_1(\tdisc)\D_2(\tdisc)^{N-1}-\Dground), \quad \tdisc \in \mathbb{N}_0, \\
    \D_2(\tdisc+1) & = \D_2(\tdisc) - \eta \D_1\D_2(\tdisc)^{N-2}(\D_1(\tdisc)\D_2(\tdisc)^{N-1}-\Dground), \quad ~\tdisc \in \mathbb{N}_0, \\
    \D_j(\tdisc + 1) & = \D_2(\tdisc + 1) \qquad  \mbox{ for } j = 3, \hdots, N, ~\tdisc \in \mathbb{N}_0.
    \end{split}
\end{align}
The last case is empty for $N=2$.
\end{lemma}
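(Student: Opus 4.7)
The plan is to prove the statement by induction on $k \in \mathbb{N}_0$, showing simultaneously that each $D_j(k)$ is real and diagonal and that $D_j(k)=D_2(k)$ for all $j \geq 3$, and then deriving the recursions from the gradient formula \eqref{eq:L_gradient}. The base case $k=0$ is immediate from \eqref{eq:PerturbedInitialization_MatrixInitialization}: $D_1(0)=V^\tT (\alpha-\beta)I\, V = (\alpha-\beta)I$ and $D_j(0)=\alpha I$ for $j\ge 2$, which are clearly diagonal and all equal for $j \ge 2$.

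For the induction step, suppose the claim holds at step $k$, so all $D_j(k)$ are diagonal with $D_j(k)=D_2(k)$ for $j\ge 3$. Since $V$ is orthogonal and $W_j(k)=V D_j(k) V^\tT$, the product satisfies $W(k)=V D_N(k)\cdots D_1(k) V^\tT = V\, D_1(k) D_2(k)^{N-1} V^\tT$, where the last equality uses that diagonal matrices commute. Substituting into \eqref{eq:L_gradient} together with $\nabla_W \mathcal{L}(W)=W-\Wstar$ yields
\begin{equation*}
V^\tT \nabla_{W_j}\mathcal{L}(W(k)) V = \left(\prod_{i\ne j} D_i(k)\right)\left(D_1(k) D_2(k)^{N-1}-\Lambda\right),
\end{equation*}
where I used that diagonal matrices are symmetric and commute with one another. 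For $j=1$ the product over $i\ne j$ equals $D_2(k)^{N-1}$; for any $j\ge 2$, it equals $D_1(k) D_2(k)^{N-2}$, which crucially does not depend on $j$. Hence the update $D_j(k+1)=D_j(k)-\eta V^\tT \nabla_{W_j}\mathcal{L}(W(k)) V$ produces a diagonal matrix, and all factors $D_j(k+1)$ with $j\ge 2$ are obtained from the common value $D_2(k)$ by subtracting the same diagonal correction, giving $D_j(k+1)=D_2(k+1)$ for $j\ge 3$.

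Substituting the two distinct expressions back into the update rule delivers exactly the two recursions displayed in \eqref{eq:PerturbedInitialization_MatrixDynamics}, with the third line empty when $N=2$ (since then there are no indices $j\ge 3$). This closes the induction.

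The argument is essentially a bookkeeping exercise once the key structural observation is made, namely that simultaneous diagonalizability is preserved by the gradient step and that the factors $W_2,\dots,W_N$ remain locked together because their gradients are identical. There is no real obstacle; the only point deserving care is keeping track of transposes and orderings in the gradient \eqref{eq:L_gradient}, both of which become irrelevant under the inductive hypothesis of diagonality (which forces symmetry and commutativity). The reason the decoupling into a single scalar dynamics, as in Lemma~\ref{lemma:IdenticalInitialization_MatrixDynamics}, fails here is precisely that $D_1$ and $D_2$ start at different values, so one obtains a coupled two-variable system per eigenvalue rather than a scalar one.
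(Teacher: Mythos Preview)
Your proof is correct and is precisely the argument the paper has in mind: the paper omits the proof entirely, noting only that it ``follows the lines of Lemma~\ref{lemma:IdenticalInitialization_MatrixDynamics}'', which amounts to the same induction on $k$ using the gradient formula \eqref{eq:L_gradient} and the preservation of simultaneous diagonalizability. Your write-up in fact supplies more detail than the paper does.
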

The proof of Lemma \ref{lemma:PerturbedInitialization_MatrixDynamics} follows the lines of Lemma \ref{lemma:IdenticalInitialization_MatrixDynamics} and is thus omitted. Similar to (but not quite the same as) the case of identical initialization, the system can be reduced to the scalar dynamics
\begin{equation}\label{eq:PerturbedInitialization_ScalarDynamics}
    \begin{split}
        \d_1(\tdisc+1) & = \d_1(\tdisc) - \eta \d_2(\tdisc)^{N-1}(\d_1(\tdisc)\d_2(\tdisc)^{N-1}-\lambda),\\
        \d_2(\tdisc+1) & = \d_2(\tdisc) - \eta \d_1(\tdisc)\d_2(\tdisc)^{N-2}(\d_1(\tdisc)\d_2(\tdisc)^{N-1}-\lambda), \quad \tdisc \in \mathbb{N}_0
    \end{split}
\end{equation}
with the perturbed initialization
\begin{equation}\label{eq:PerturbedInitialization_ScalarInitialization}
    \d_1(0) = \alpha - \beta >0, \quad \d_2(0) = \alpha>0.
\end{equation}
To further analyze the perturbed setting, we concentrate on three quantities: the difference $\Delta_1(\tdisc)$, the  difference of squares $\Delta_2(\tdisc)$, and the rate factor $\kappa(\tdisc)$, defined as
\begin{equation}\label{def:Diff_SquareDiff_RateFactor}
    \Delta_1(\tdisc) := \d_2(\tdisc) - \d_1(\tdisc),
    \quad \Delta_2(\tdisc) := \d_2(\tdisc)^2 - \d_1(\tdisc)^2,
    \quad \kappa(\tdisc) := \d_2(\tdisc)^{N-2}(\d_1(\tdisc)\d_2(\tdisc)^{N-1} - \lambda)
\end{equation}
so that
\begin{align}
    \d_1(\tdisc+1)&=\d_1(\tdisc)-\eta\d_2(\tdisc)\kappa(\tdisc) \label{d1:kappa}\\
    \d_2(\tdisc+1)&=\d_2(\tdisc)-\eta\d_1(\tdisc)\kappa(\tdisc).\label{d2:kappa}
\end{align}
\begin{lemma}\label{lemma:DeltaContraction}
    Let $\eta > 0$,  $\d_1(\tdisc),\d_2(\tdisc)$ be defined by \eqref{eq:PerturbedInitialization_ScalarDynamics} with the perturbed identical initialization \eqref{eq:PerturbedInitialization_ScalarInitialization}, and let $\Delta_1,\Delta_2$, and $\kappa$ be the quantities defined in \eqref{def:Diff_SquareDiff_RateFactor}. 
    Then
    \begin{equation}\label{Delta1_Delta2}
         \Delta_1(\tdisc+1) = 
     [ 1 + \eta \kappa(\tdisc)] \Delta_1(\tdisc), \quad \mbox{ and } \quad \Delta_2(\tdisc+1) = [ 1 - \eta^2 \kappa^2(\tdisc)] \Delta_2(\tdisc).
    \end{equation}
    Consequently, if for some $\tdisc \in \mathbb{N}_0$ it holds that $\eta|\kappa(\tdisc)|<1$, 
    then $\operatorname{sign}(\Delta_1(\tdisc+1)) = \operatorname{sign}(\Delta_1(\tdisc))$, $\operatorname{sign}(\Delta_2(\tdisc+1)) = \operatorname{sign}(\Delta_2(\tdisc))$ 
    and $|\Delta_2(\tdisc+1)| \leq |\Delta_2(\tdisc)|$. 
    Moreover, if additionally $\kappa(\tdisc) <0$, 
    then $|\Delta_1(\tdisc+1)| < |\Delta_1(\tdisc)|$; 
    if $\kappa(\tdisc) >0$,
    then $|\Delta_1(\tdisc+1)| > |\Delta(\tdisc)|$.
\end{lemma}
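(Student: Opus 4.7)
The plan is to verify both identities by direct algebraic manipulation using \eqref{d1:kappa} and \eqref{d2:kappa}, and then to read off the consequences from the multiplicative structure of the resulting recursions. No estimates are required; everything reduces to a short computation followed by elementary sign analysis. I expect no real obstacle here, only care in bookkeeping.

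First I would compute $\Delta_1(\tdisc+1)$ by subtracting \eqref{d1:kappa} from \eqref{d2:kappa}:
\begin{align*}
\Delta_1(\tdisc+1) &= d_2(\tdisc+1) - d_1(\tdisc+1) \\
&= \bigl(d_2(\tdisc) - \eta d_1(\tdisc)\kappa(\tdisc)\bigr) - \bigl(d_1(\tdisc) - \eta d_2(\tdisc)\kappa(\tdisc)\bigr) \\
&= (d_2(\tdisc)-d_1(\tdisc)) + \eta\kappa(\tdisc)(d_2(\tdisc)-d_1(\tdisc)) \\
&= [1+\eta\kappa(\tdisc)]\,\Delta_1(\tdisc).
\end{align*}
For the second identity I would use the factorization $\Delta_2(\tdisc+1) = \Delta_1(\tdisc+1)\bigl(d_1(\tdisc+1)+d_2(\tdisc+1)\bigr)$ and compute the sum analogously:
\begin{equation*}
d_1(\tdisc+1)+d_2(\tdisc+1) = (d_1(\tdisc)+d_2(\tdisc)) - \eta\kappa(\tdisc)(d_1(\tdisc)+d_2(\tdisc)) = [1-\eta\kappa(\tdisc)](d_1(\tdisc)+d_2(\tdisc)).
\end{equation*}
Multiplying the two expressions gives
\begin{equation*}
\Delta_2(\tdisc+1) = [1+\eta\kappa(\tdisc)][1-\eta\kappa(\tdisc)]\,\Delta_1(\tdisc)\bigl(d_1(\tdisc)+d_2(\tdisc)\bigr) = [1-\eta^2\kappa^2(\tdisc)]\,\Delta_2(\tdisc),
\end{equation*}
which establishes \eqref{Delta1_Delta2}.

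For the consequences, assume $\eta|\kappa(\tdisc)|<1$. Then $1+\eta\kappa(\tdisc) \in (0,2)$, so in particular $1+\eta\kappa(\tdisc)>0$ and $\operatorname{sign}(\Delta_1(\tdisc+1)) = \operatorname{sign}(\Delta_1(\tdisc))$. Similarly $1-\eta^2\kappa^2(\tdisc) \in (0,1]$, so $\operatorname{sign}(\Delta_2(\tdisc+1)) = \operatorname{sign}(\Delta_2(\tdisc))$ and $|\Delta_2(\tdisc+1)| \leq |\Delta_2(\tdisc)|$. For the remaining assertions on $|\Delta_1|$, note that if $\kappa(\tdisc)<0$ then $1+\eta\kappa(\tdisc) \in (0,1)$, giving $|\Delta_1(\tdisc+1)| < |\Delta_1(\tdisc)|$; whereas if $\kappa(\tdisc)>0$ then $1+\eta\kappa(\tdisc) > 1$, giving $|\Delta_1(\tdisc+1)| > |\Delta_1(\tdisc)|$. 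This exhausts all the stated claims.
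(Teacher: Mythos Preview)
Your proposal is correct and takes essentially the same approach as the paper, which also establishes \eqref{Delta1_Delta2} by direct computation from the update rules and then reads off the consequences. Your factorization $\Delta_2 = \Delta_1\cdot(d_1+d_2)$ together with the separate computation of the sum is in fact slightly more explicit than the paper, which for $\Delta_2$ simply writes ``and similarly''.
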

\begin{proof}
A simple calculation gives
\begin{align*}
    \Delta_1(\tdisc+1) &= 
    \d_2(\tdisc) - \d_1(\tdisc) - \eta \left(\d_1(\tdisc) \d_2(\tdisc)^{N-2} - \d_2(\tdisc)^{N-1}\right)\left(\d_1(\tdisc) \d_2(\tdisc)^{N-1} - \lambda\right) \\
    &= 
    [ 1 + \eta \kappa(\tdisc)] \Delta_1(\tdisc),
\end{align*}
and similarly
    $\Delta_2(\tdisc+1) = [ 1 - \eta^2 \kappa^2(\tdisc)] \Delta_2(\tdisc).$
This completes the proof.
\end{proof}

\begin{remark}
    Observe that $\Delta_2(0) = \alpha^2 - (\alpha - \beta)^2 = \beta(\beta + 2 \alpha)$ so that
    $\Delta_2(\tdisc)$ remains small
    for all $\tdisc \in \mathbb{N}$ if $\beta$ is small and $|\eta \kappa(k)| < 1$ for all $\tdisc \in \mathbb{N}$ (we see below that the second condition holds). 
    This property is sometimes referred to as the "balancedness condition" \cite{bah2019learning,Du2018AlgorithmicRI}, especially when analyzing gradient flow. In contrast, $\Delta_1$ does not necessarily remain small due to its dependence on the sign of $\kappa$. In fact, it is possible that $\lim_{\tdisc\to\infty}\d_1 = -\lim_{\tdisc\to\infty}\d_2$, cf.\ Figure \ref{fig:Perturbed}. Compared to the case of identical initialization, such flexibility (in signs) allows to recover the full spectrum of the matrix instead of just the positive eigenvalues.
\end{remark}

\begin{figure}
\begin{subfigure}[c]{0.45\textwidth}
\includegraphics[width=\textwidth]{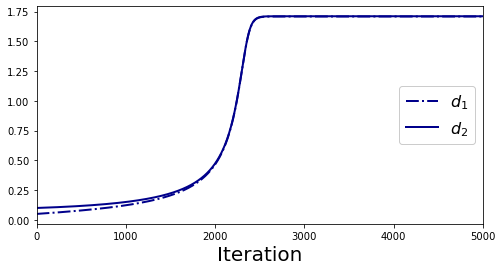}
\subcaption{Approximation of positive eigenvalue ($\lambda~=~5$).}
\label{fig:PerturbedA}
\end{subfigure} \quad
\begin{subfigure}[c]{0.45\textwidth}
\includegraphics[width=\textwidth]{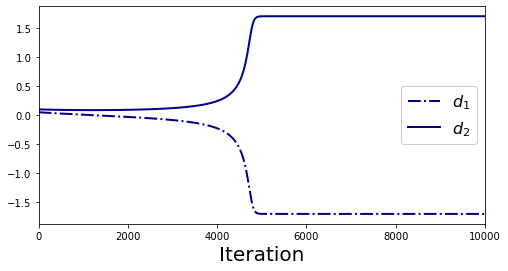}
\subcaption{Approximation of negative eigenvalue ($\lambda~=~-5$).}
\label{fig:PerturbedB}
\end{subfigure}
\caption{Comparison of the gradient descent dynamics under perturbed initialization. Depicted is the evolution of $\d_1$ and $\d_2$ for positive and negative $\lambda$. At the point $k_0$, defined in \eqref{def:PahseTransitionTime}, the dashed line in Figure \ref{fig:PerturbedB} becomes negative and the solid line changes its behavior. Here $N=3$, $\alpha = 10^{-1}$, $\beta = \frac{\alpha}{2}$, $\eta = 10^{-3}$.}
\label{fig:Perturbed}
\end{figure}

Due to the coupling of $d_1$ and $d_2$, we are not able to derive limits as previously done in Lemma \ref{lemma:IdenticalInitialization_Convergence}. However, we can show in Lemma \ref{lemma:PerturbedInitialization_Convergence_AllLambda} below that the product $d_1(\tdisc)d_2(\tdisc)^{N-1}$ converges to $\lambda$ regardless of its sign. In order to keep the presentation concise, parts of the proof (treating positive $\lambda$) are deferred to Appendix \ref{sec:Appendix_Supplement} in form of Lemma \ref{lemma:PerturbedInitialization_Convergence_PositiveLargeLambda} and \ref{lemma:PerturbedInitialization_Convergence_PositiveSmallLambda}.

For $\lambda<0$, which cannot be recovered with identical initialization, the key is the ``phase transition" time $\tdisc_0$, defined by
\begin{equation}\label{def:PahseTransitionTime}
    \tdisc_0 = \inf \{\tdisc\in\mathbb{N}_0:\d_1(\tdisc)<0\},
\end{equation}
where we use the convention that $\inf \emptyset = \infty$.
\begin{lemma}\label{lemma:PerturbedInitialization_PhaseTransition}
    Let $N \geq 2$ and $\lambda < 0$. Let $\d_1,\d_2$ be defined by \eqref{eq:PerturbedInitialization_ScalarDynamics} with the perturbed identical initialization \eqref{eq:PerturbedInitialization_ScalarInitialization} and define $M=\max(\alpha,|\lambda|^{\frac{1}{N}})$. If
    \begin{equation}\label{assume:eta}
        0<\eta<\frac{1}{(3N-2)M^{2N-2}},
    \end{equation}
    then $\tdisc_0$ defined in \eqref{def:PahseTransitionTime} is finite. 
\end{lemma}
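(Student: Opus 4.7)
The plan is to argue by contradiction: assume $k_0 = \infty$, so $d_1(k) \geq 0$ for every $k \in \mathbb{N}_0$, and show that the sum $d_1(k) + d_2(k)$ decays to zero while the gap $\Delta_1(k) = d_2(k) - d_1(k)$ stays bounded below by $\beta$. Since $d_1(k) = \tfrac{1}{2}\bigl((d_1+d_2)(k) - \Delta_1(k)\bigr)$, this will eventually force $d_1(k) < 0$, contradicting the standing assumption.

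First, I would establish by induction on $k$ the strengthened invariant that $d_1(k) \in [0,\alpha]$, $d_2(k) \in [\beta,\alpha]$, $\Delta_1(k) \geq \beta$, and $\eta\kappa(k) \in \bigl[0, \tfrac{2}{3N-2}\bigr]$. The base case holds by direct computation using $\alpha > \beta$ (which follows from $0 < \beta/(c-1) < \alpha$ and $c \in (1,2)$). For the inductive step, the dynamics \eqref{eq:PerturbedInitialization_ScalarDynamics} show that when $d_1(k), d_2(k) \geq 0$ and $\lambda < 0$ the factor $d_1(k)d_2(k)^{N-1} - \lambda$ is positive and the prefactors are nonnegative, so $d_1,d_2$ are nonincreasing and hence stay in $[0,\alpha]$ (using the contradiction hypothesis $d_1(k+1) \geq 0$ for the lower bound on $d_1$). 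The upper bound $\kappa(k) \leq \alpha^{N-2}(\alpha^N + |\lambda|) \leq 2M^{2N-2}$ combined with \eqref{assume:eta} yields $\eta\kappa(k) \leq \tfrac{2}{3N-2} < 1$, so Lemma~\ref{lemma:DeltaContraction} applies and gives $\Delta_1(k+1) = (1+\eta\kappa(k))\Delta_1(k) \geq \Delta_1(k) \geq \beta$. Writing $d_2(k+1) = d_1(k+1) + \Delta_1(k+1) \geq \beta$ then closes the induction.

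With the invariants in hand, the lower bound $d_2(k) \geq \beta$ feeds back into $\kappa$: from $\kappa(k) = d_2(k)^{N-2}\bigl(d_1(k)d_2(k)^{N-1} + |\lambda|\bigr)$ we immediately get $\kappa(k) \geq d_2(k)^{N-2}|\lambda| \geq \beta^{N-2}|\lambda|$ (with the convention $\beta^0 := 1$ for $N=2$). Adding \eqref{d1:kappa} and \eqref{d2:kappa} produces the clean recursion $(d_1+d_2)(k+1) = (1-\eta\kappa(k))(d_1+d_2)(k)$, which iterates to
\[
0 \leq (d_1+d_2)(k) \leq \bigl(1 - \eta \beta^{N-2} |\lambda|\bigr)^{k} (2\alpha - \beta) \xrightarrow[k \to \infty]{} 0.
\]
Since $\Delta_1(k) \geq \beta$ for every $k$, there must exist a finite $k$ with $(d_1+d_2)(k) < \beta \leq \Delta_1(k)$, at which point $d_1(k) = \tfrac{1}{2}\bigl((d_1+d_2)(k) - \Delta_1(k)\bigr) < 0$, which is the desired contradiction.

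The main obstacle I expect is the book-keeping of the interlocking invariants in the induction: one simultaneously needs the upper bound $\eta|\kappa(k)| < 1$ (for Lemma~\ref{lemma:DeltaContraction} to apply), the sign $\kappa(k) \geq 0$ (so that $\Delta_1$ is nondecreasing rather than merely preserving sign), and the lower bound $d_2(k) \geq \beta$ (so that $\kappa(k)$ does not degenerate as $d_1$ approaches zero). The perturbation $\beta > 0$ is precisely what makes the third of these hold, and is the crucial ingredient missing in the identical initialization $\beta = 0$, where $d_2$ may collapse together with $d_1$ and the exponential decay argument fails.
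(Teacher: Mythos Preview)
Your proof is correct and follows a somewhat different route from the paper. Both arguments assume $k_0 = \infty$, establish the same pool of invariants ($d_1,d_2\in[0,\alpha]$, $\Delta_1\geq\beta$, $d_2\geq\beta$, $0\leq\eta\kappa<1$, $\kappa\geq\beta^{N-2}|\lambda|$), and then extract a contradiction. The paper, however, reaches the contradiction more circuitously: it introduces auxiliary identical-initialization sequences $a(k),b(k)$ to sandwich $d_1$ and $d_2$, uses $a(k)\to 0$ to conclude $d_1(k)\to 0$, and then invokes the $\Delta_2$ recursion $\Delta_2(k+1)=(1-\eta^2\kappa(k)^2)\Delta_2(k)$ to force $\Delta_2\to 0$; together these give $d_2\to 0$, contradicting $d_2\geq\beta$. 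Your observation that summing \eqref{d1:kappa} and \eqref{d2:kappa} yields the clean contraction $(d_1+d_2)(k+1)=(1-\eta\kappa(k))(d_1+d_2)(k)$ lets you bypass both the auxiliary sequences and the $\Delta_2$ machinery, and the contradiction $2d_1=(d_1+d_2)-\Delta_1<0$ drops out directly. This is genuinely more elementary.

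One minor correction: your justification of $\alpha>\beta$ via the condition $0<\beta/(c-1)<\alpha$ is misplaced, since that hypothesis belongs to Theorem~\ref{theorem:PerturbedInitialization_NonAsymptoticConvergence}, not to this lemma. The inequality $\alpha>\beta$ is already part of the perturbed initialization \eqref{eq:PerturbedInitialization_ScalarInitialization}, which stipulates $d_1(0)=\alpha-\beta>0$.
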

\begin{proof}
    Let the difference $\Delta_1$ and the factor $\kappa$ be defined as in \eqref{def:Diff_SquareDiff_RateFactor}. 
    We define the two auxiliary sequences
    \begin{align}
        a(\tdisc+1) &= a(\tdisc) - \eta a(\tdisc)^{N-1}(a(\tdisc)^N-\lambda),
        \quad a(0) = \alpha -\beta >0,\label{def:a_IdenticalInitialization}\\
        b(\tdisc+1) &= b(\tdisc) - \eta b(\tdisc)^{N-1}(b(\tdisc)^N-\lambda),
        \quad b(0) = \alpha >0, \label{def:b_IdenticalInitialization}
    \end{align}
    whose behavior is well-understood by Section \ref{subsec:IdenticalInitialization}, and we make the auxiliary claim that
    \begin{equation}\label{claim-lambda-neg1}
        0 \leq \d_1(\tdisc) \leq a(\tdisc) \leq \alpha-\beta
        \quad \mbox{ and } \quad
        0 < b(\tdisc) \leq \d_2(\tdisc) \leq \alpha, \quad \mbox{ for } \tdisc < \tdisc_0.
    \end{equation}
    Assume that $\tdisc < \tdisc_0$ for the moment. By the definition of $\tdisc_0$, we have $0\leq\d_1(\tdisc)$. Furthermore, Lemma \ref{lemma:IdenticalInitialization_Convergence} implies that $a(\tdisc)\leq\alpha-\beta$ and $0<b(\tdisc)$. Note that due to $\lambda<0$ and $\d_1(\tdisc) \geq 0$, $\kappa(\tdisc) \geq 0$ as long as $\d_2(\tdisc) > 0$. Since $\d_2(0) = \alpha > 0$, $\Delta_1(0) = \beta > 0$ and $\d_2(\tdisc) = \d_1(\tdisc) + \Delta_1(\tdisc)$ it follows by induction from \eqref{Delta1_Delta2} in Lemma~\ref{lemma:DeltaContraction} that $\Delta_1(\tdisc) > 0$, $\d_2(\tdisc) > \d_1(\tdisc) \geq 0$ and $\kappa(\tdisc) > 0$ for all $\tdisc < \tdisc_0$.
    Therefore, $\d_1(\tdisc)$ and $\d_2(\tdisc)$ are monotonically decreasing in $\tdisc$ for $\tdisc<\tdisc_0$ by \eqref{eq:PerturbedInitialization_ScalarDynamics}. Hence, $\d_1(\tdisc)\leq \d_1(0) = \alpha-\beta$ and $\d_2(\tdisc)\leq \d_2(0) = \alpha$, for $\tdisc < \tdisc_0$. 
    In order to fully prove \eqref{claim-lambda-neg1}, we will show next that $\d_1(\tdisc) \leq a(\tdisc)$ and $b(\tdisc) \leq \d_2(\tdisc)$ by induction. 
    
    By construction $\d_1(0)=\alpha-\beta=a(0)$ and $\d_2(0)=\alpha=b(0)$. Assume that $\d_1(\tdisc) \leq a(\tdisc)$ and $b(\tdisc) \leq \d_2(\tdisc)$ for some $\tdisc < \tdisc_0-1$. Define $g(x) = x-\eta x^{N-1}(x^N-\lambda)$ as in the proof of Lemma \ref{lemma:IdenticalInitialization_Convergence}. By a direct computation as in \eqref{ineq:g-deriv}, $g'(x)\geq 0$ for \revision{$x\in(0,M)$} because $\eta$ satisfies \eqref{assume:eta}. Thus $g$ is monotonically increasing on \revision{$(0,M)$} and \revision{since $\d_1(\tdisc),\d_2(\tdisc),a(\tdisc),b(\tdisc) \in (0,M)$ }
    \begin{align*}
        \d_1(\tdisc+1)
        &< \d_1(\tdisc) - \eta\d_1(\tdisc)^{N-1}(\d_1(\tdisc)^{N}-\lambda)
        = g(\d_1(k)) \leq g(a(\tdisc))
        = a(\tdisc+1) \quad \text{and} \\
        \d_2(\tdisc+1) 
        &> \d_2(\tdisc) - \eta\d_2(\tdisc)^{N-1}(\d_2(\tdisc)^{N}-\lambda)
        = g(\d_2(\tdisc)) \geq g(b(\tdisc))  
        = b(\tdisc + 1).
        \end{align*}
    This completes the induction step and shows \eqref{claim-lambda-neg1}.
    
    We now prove the lemma by contradiction. Suppose that $\tdisc_0=\infty$ so that \eqref{claim-lambda-neg1} holds for all $\tdisc\in\mathbb{N}_0$. Lemma~\ref{lemma:IdenticalInitialization_Convergence} implies that
    $ \lim_{\tdisc \to \infty} a(\tdisc) = \lim_{\tdisc \to \infty} b(\tdisc) = 0$. Since $0 \leq \d_1 \leq a$, we obtain that $\lim_{\tdisc \to \infty} \d_1(\tdisc) = 0$. Note that
    \begin{align} \label{eq:justification}
        0 
        < \eta \kappa(\tdisc)
        = \eta \d_2(\tdisc)^{N-2} (\d_1(\tdisc) \d_2(\tdisc)^{N-1} - \lambda)
        \le \eta (\alpha^{2N-2} + |\lambda| \alpha^{N-2})
        \le 2 \eta \M^{2N-2} 
        < 1
    \end{align}
    by assumption on $\eta$. By Lemma \ref{lemma:DeltaContraction},
    $\Delta_1(\tdisc)=\d_2(\tdisc)-\d_1(\tdisc)$ is monotonically increasing while $\Delta_2(\tdisc)=\d_2(\tdisc)^2-\d_1(\tdisc)^2$ is monotonically decreasing. Consequently, we have
    $$
    \d_2(\tdisc) \geq \d_2(\tdisc) - \d_1(\tdisc) = \Delta_1(\tdisc) \geq \Delta_1(0) \geq \beta > 0 \quad \mbox{ for all } \tdisc \in \mathbb{N}_0.
    $$
    Since $\d_1(\tdisc), \d_2(\tdisc) \geq 0$ and $\lambda < 0$ this gives
    $$
    \kappa(\tdisc)= \d_2(\tdisc)^{N-2}(\d_1(\tdisc) \d_2(\tdisc)^{N-1} - \lambda) \geq \beta^{N-2} |\lambda|  > 0.
    $$
    Since $\eta \kappa(\tdisc) < 1$, we get
    $$
    | 1- \eta^2 \kappa(\tdisc)^2| \leq 1 - \eta^2 \beta^{2(N-2)} |\lambda|^2 =: \rho < 1.
    $$
    By \eqref{Delta1_Delta2}, this yields
    \[
    |\Delta_2(\tdisc)| \leq \rho^k |\Delta_2(0)|
    \]
    and thus $\lim_{\tdisc \to \infty} \Delta_2(\tdisc) = 0$. Since also $\lim_{\tdisc \to \infty} \d_1(\tdisc) = 0$, we conclude that $\lim_{\tdisc \to 0} \d_2(\tdisc) = 0$ which contradicts $\d_2(\tdisc) \geq \beta > 0$ for all $\tdisc \in \mathbb{N}_0$. Hence, the set $\{\tdisc\in\mathbb{N}_0:\d_1(\tdisc)<0\}$ is not empty and $\tdisc_0$ is finite. 
\end{proof}
\begin{lemma}\label{lemma:PerturbedInitialization_Convergence_AllLambda}
    Let $N \geq 2$ and $\lambda \in\mathbb{R}$. Let $\d_1,\d_2$ be defined by \eqref{eq:PerturbedInitialization_ScalarDynamics} with the perturbed identical initialization \eqref{eq:PerturbedInitialization_ScalarInitialization}, where $0<\beta<\alpha$. Define $M = \max(\alpha,|\lambda|^{\frac{1}{N}})$ and let $\c\in(1,2)$ be the maximal real solution to the polynomial equation $1=(\c-1)\c^{N-1}$. If
    \begin{equation}\label{cond:stepsize}
        0 < \eta <  
        \frac{1}{9N(\c M)^{2N-2}},
    \end{equation}
    then $\lim_{\tdisc\to\infty}\d_1(\tdisc)\d_2(\tdisc)^{N-1}=\lambda$.
\end{lemma}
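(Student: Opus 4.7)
The plan is to separate the argument according to the sign of $\lambda$. For $\lambda \geq 0$, I would invoke Lemmas \ref{lemma:PerturbedInitialization_Convergence_PositiveLargeLambda} and \ref{lemma:PerturbedInitialization_Convergence_PositiveSmallLambda} deferred to the Appendix, which handle the subcases $\lambda \geq \alpha^N$ and $0 \leq \lambda < \alpha^N$ respectively and directly give $\d_1(\tdisc)\d_2(\tdisc)^{N-1} \to \lambda$ under the stated stepsize condition. (The stepsize bound \eqref{cond:stepsize} is strictly stronger than \eqref{assume:eta}, so there is no conflict of hypotheses.)

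The substantive case is $\lambda < 0$. Here Lemma \ref{lemma:PerturbedInitialization_PhaseTransition} guarantees that the phase--transition time $\tdisc_0 = \inf\{\tdisc : \d_1(\tdisc) < 0\}$ is finite, so after $\tdisc_0$ the first coordinate has changed sign. The key idea is to reduce this regime to the positive--eigenvalue case by a sign flip. Setting
\[
e_1(\tdisc) := -\d_1(\tdisc + \tdisc_0), \qquad e_2(\tdisc) := \d_2(\tdisc + \tdisc_0), \qquad \tilde{\lambda} := -\lambda > 0,
\]
a direct substitution into \eqref{eq:PerturbedInitialization_ScalarDynamics} shows that the pair $(e_1,e_2)$ obeys exactly the same form of discrete dynamics with target $\tilde{\lambda}$, namely
\begin{align*}
e_1(\tdisc+1) &= e_1(\tdisc) - \eta\, e_2(\tdisc)^{N-1}\bigl(e_1(\tdisc) e_2(\tdisc)^{N-1} - \tilde{\lambda}\bigr),\\
e_2(\tdisc+1) &= e_2(\tdisc) - \eta\, e_1(\tdisc) e_2(\tdisc)^{N-2}\bigl(e_1(\tdisc) e_2(\tdisc)^{N-1} - \tilde{\lambda}\bigr),
\end{align*}
and convergence $\d_1\d_2^{N-1} \to \lambda$ is equivalent to $e_1 e_2^{N-1} \to \tilde{\lambda}$. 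Note also that $\M$ is invariant under the substitution $\lambda \mapsto -\lambda$, so the stepsize bound \eqref{cond:stepsize} transfers verbatim.

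The main obstacle is that the positive--$\lambda$ appendix lemmas are presumably formulated for the canonical initialization $(\alpha - \beta, \alpha)$, while the effective initial condition $(e_1(0), e_2(0)) = (-\d_1(\tdisc_0), \d_2(\tdisc_0))$ is produced by the dynamics themselves. I would handle this by first checking that $(e_1(0), e_2(0))$ satisfies the same structural conditions as a perturbed identical initialization: (i) $e_1(0) \geq 0$ by definition of $\tdisc_0$; (ii) $e_2(0) > e_1(0)$, which follows from combining Lemma \ref{lemma:DeltaContraction} (monotonicity of $\Delta_1$ and decay of $|\Delta_2|$, using $\eta|\kappa(\tdisc)| < 1$ as justified in \eqref{eq:justification}) with the sign tracking in Lemma \ref{lemma:PerturbedInitialization_PhaseTransition}, which together pin $e_2(0)$ above $\beta$ and bound $e_1(0) \leq |\eta \d_2(\tdisc_0-1)^N \lambda|$ so that $e_1(0)$ is in fact small; and (iii) the balancedness gap at time $\tdisc_0$ satisfies $|e_2(0)^2 - e_1(0)^2| = |\Delta_2(\tdisc_0)| \leq \Delta_2(0) = \beta(2\alpha-\beta)$. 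With these three ingredients in place, the appendix argument for positive eigenvalues (either directly in its stated form or by extracting the stability of its proof with respect to the initial data) yields $e_1 e_2^{N-1} \to \tilde{\lambda}$, completing the proof. I expect that verifying (i)--(iii) and confirming that the appendix proof is robust to the perturbed initialization is the most delicate step.
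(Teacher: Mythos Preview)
Your approach is essentially the same as the paper's: handle $\lambda\geq 0$ via the appendix lemmas, and for $\lambda<0$ use Lemma~\ref{lemma:PerturbedInitialization_PhaseTransition} followed by the sign flip $(e_1,e_2)=(-\d_1,\d_2)$ to reduce to the positive case. The only place where the paper is sharper is your item (ii): rather than bounding $e_1(0)$ and $e_2(0)$ separately (your stated bound on $e_1(0)$ is not quite right as written), the paper adds the two update rules \eqref{d1:kappa}--\eqref{d2:kappa} to obtain in one line
\[
e_2(0)-e_1(0)=\d_2(\tdisc_0)+\d_1(\tdisc_0)=\bigl(\d_1(\tdisc_0-1)+\d_2(\tdisc_0-1)\bigr)\bigl(1-\eta\kappa(\tdisc_0-1)\bigr)>0,
\]
using $\d_1(\tdisc_0-1),\d_2(\tdisc_0-1)\geq 0$ and $0<\eta\kappa(\tdisc_0-1)<1$. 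Your condition (iii) is unnecessary: the appendix lemmas only require $0<e_1(0)<e_2(0)\leq M$, and the last inequality follows from $\d_2(\tdisc_0)\leq\d_2(0)=\alpha\leq M$ since $\d_2$ is decreasing on $[0,\tdisc_0]$.
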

\begin{proof}
    If $\lambda\geq 0$, then the claim immediately follows from Lemmas \ref{lemma:PerturbedInitialization_Convergence_PositiveLargeLambda} and \ref{lemma:PerturbedInitialization_Convergence_PositiveSmallLambda}. Thus it suffices to prove the claim for $\lambda<0$. By Lemma \ref{lemma:PerturbedInitialization_PhaseTransition}, $\tdisc_0 := \inf \{\tdisc\in\mathbb{N}_0 \colon \d_1(\tdisc)<0\}$ is finite. For $\tdisc \geq \tdisc_0$ we are in a second phase of the dynamics where $\d_1(\tdisc)$ stays negative. In order to understand this phase we set $\d_0(\tdisc) := -\d_1(\tdisc)$ and observe that $\d_0(\tdisc)$ and $\d_2(\tdisc)$ satisfy
    \begin{align} \label{eq:d0d2}
    \begin{split}
        \d_0(\tdisc+1) 
        &= \d_0(\tdisc) - \eta\d_2(\tdisc)^{N-1}(\d_0(\tdisc)\d_2(\tdisc)^{N-1}-|\lambda|)\\
        \d_2(\tdisc+1) 
        &= \d_2(\tdisc) - \eta\d_0(\tdisc)\d_2(\tdisc)^{N-2}(\d_0(\tdisc)\d_2(\tdisc)^{N-1}-|\lambda|).
    \end{split}
    \end{align}
    Setting \revision{$\alpha_0 := \d_2(\tdisc_0)$} and $\gamma_0 := \d_0(\tdisc_0) = - \d_1(\tdisc_0) >0$, and interpreting $\alpha_0,\gamma_0$ as new initial conditions, the above system has the same form as \eqref{eq:PerturbedInitialization_ScalarDynamics} with $\lambda$ replaced by $|\lambda| >0$. According to Lemmas \ref{lemma:PerturbedInitialization_Convergence_PositiveLargeLambda} and \ref{lemma:PerturbedInitialization_Convergence_PositiveSmallLambda},
    \begin{equation*}
        \lim_{\tdisc\to\infty}\d_1(\tdisc)\d_2(\tdisc)^{N-1}
        =-\lim_{\tdisc\to\infty}\d_0(\tdisc)\d_2(\tdisc)^{N-1}
        =-|\lambda|
        =\lambda
    \end{equation*}
    if $0<\gamma_0<\alpha_0\leq\M$. It remains to verify the latter condition. \revision{Since $d_1(\tdisc_0-1)$ and $d_2(\tdisc_0-1)$ are positive, we have $d_2(\tdisc_0)\leq d_2(\tdisc_0-1)$ according to the dynamics in \eqref{eq:d0d2}. Together with the fact that $d_2$ is monotonically decreasing before this time, we deduce that $\alpha_0 \leq \alpha \leq M$.}
    Let the difference sequence $\Delta_1$ and the factor $\kappa$ be defined as in \eqref{def:Diff_SquareDiff_RateFactor}. Since $0 < \eta \kappa(\tdisc) < 1$, for $\tdisc < \tdisc_0$, cf.\ Equation~\eqref{eq:justification}, Lemma~\ref{lemma:DeltaContraction} states that $\Delta_1(\tdisc)>0$ is monotonically increasing, for $\tdisc < \tdisc_0$. In particular, $\d_2(\tdisc_0-1) - \d_1(\tdisc_0-1) = \Delta_1(\tdisc_0-1) > 0$
    and we obtain
    \begin{align}
        \alpha_0 - \gamma_0
        &= \d_2(\tdisc_0) + \d_1(\tdisc_0)\nonumber\\
        &= \d_2(\tdisc_0-1) - \eta\d_1(\tdisc_0-1)\d_2(\tdisc_0-1)^{N-2}(\d_1(\tdisc_0-1)\d_2(\tdisc_0-1)^{N-1}-\lambda)\nonumber\\
        &\quad + \d_1(\tdisc_0-1) - \eta\d_2(\tdisc_0-1)^{N-1}(\d_1(\tdisc_0-1)\d_2(\tdisc_0-1)^{N-1}-\lambda)\nonumber\\
        &= (\d_1(\tdisc_0-1)+\d_2(\tdisc_0-1))(1-\eta\kappa(\tdisc_0-1))\label{ineq:alpha0geqgamma0}
        \geq 0.
    \end{align}
    Hence the condition $0<\gamma_0<\alpha_0\leq\M$ is satisfied.
\end{proof}
From a less technical point of view, Lemma \ref{lemma:PerturbedInitialization_Convergence_AllLambda} and its proof show that if $\lambda >0$, the dynamics is similar to the case of identical initialization. 
If $\lambda < 0$, the dynamics is only similar up to the point where one of the components changes sign. Then, they start to behave as if $\lambda = |\lambda|$ and follow a mirrored trajectory of the identical initialization setting. 
We have all tools at hand to finally prove Theorem \ref{theorem:PerturbedInitialization_NonAsymptoticConvergence}.

\subsubsection*{Proof of Theorem \ref{theorem:PerturbedInitialization_NonAsymptoticConvergence}}

The convergence of $\W(\tdisc)$ to $\Wstar$ directly follows from Lemma \ref{lemma:PerturbedInitialization_MatrixDynamics} and \ref{lemma:PerturbedInitialization_Convergence_AllLambda}. For the rate of convergence, we will use Theorem \ref{thm:IdenticalInitialization_ConvergenceRate} and Lemmas
\ref{lemma:DeltaContraction}, \ref{lemma:PerturbedInitialization_PhaseTransition}, \ref{lemma:PerturbedInitialization_Convergence_AllLambda}, \ref{lemma:PerturbedInitialization_Convergence_PositiveLargeLambda}, \ref{lemma:PerturbedInitialization_Convergence_PositiveSmallLambda}.

With $\lambda = \lambda_i$, let $\d_1,\d_2$ be defined as in \eqref{eq:PerturbedInitialization_ScalarDynamics} and $a,b,p,p_a,p_b$ as in \eqref{def:a_IdenticalInitialization}, \eqref{def:b_IdenticalInitialization} and \eqref{def:p_pa_pb}. Note that by Lemma \ref{lemma:PerturbedInitialization_MatrixDynamics}, $E_{ii}(\tdisc) = p(\tdisc) - \lambda_i$. We distinguish the following cases.

\begin{enumerate}[label=(\alph*)]

\item \label{case_a} Assume that $\lambda_i\geq (\alpha-\beta)\alpha^{N-1}$. By Lemma \ref{lemma:PerturbedInitialization_Convergence_PositiveLargeLambda}, we have that $p_a(\tdisc)\leq p(\tdisc) \leq \lambda_i$ for all $\tdisc\in\mathbb{N}_0$. Thus $|p(\tdisc)^{\frac{1}{N}}-\lambda_i^{\frac{1}{N}}|\leq  |p_a(\tdisc)^{\frac{1}{N}}-\lambda_i^{\frac{1}{N}}|<\epsilon$ for all $\tdisc\geq T_N^{\Id}(\lambda,\epsilon,\alpha,\eta)$ according to Theorem \ref{thm:IdenticalInitialization_ConvergenceRate}. 
This implies that $|E_{ii}(\tdisc)| \leq \epsilon N |\lambda_i|^{1-\frac{1}{N}}$ for all $\tdisc\geq T_N^{\Id}(\lambda,\epsilon,\alpha,\eta)$ as in the proof of Theorem~\ref{theorem:IdenticalInitialization_ContinuousNonAsymptoticRate}.

\item \label{case_b} Assume that $0\leq \lambda_i< (\alpha-\beta)\alpha^{N-1}$. By Lemma \ref{lemma:PerturbedInitialization_Convergence_PositiveSmallLambda}, we have that either $0\leq p_a(\tdisc)\leq p(\tdisc) \leq \lambda_i\leq\alpha^N$, for all $\tdisc\in\mathbb{N}_0$, or $0\leq\lambda_i\leq p(\tdisc) \leq p_b(\tdisc)\leq\alpha^N$, for all $\tdisc\in\mathbb{N}_0$. In either case we can deduce that $0\leq p(\tdisc) \leq \alpha^N$ and hence $|p(\tdisc)-\lambda_i| \leq \alpha^N$ for all $\tdisc\in\mathbb{N}_0$.

\item \label{case_c} Assume that $-(\alpha-\beta)\alpha^{N-1}<\lambda_i<0$. As analyzed in Lemma \ref{lemma:PerturbedInitialization_Convergence_AllLambda}, after the phase transition point $\tdisc_0$ defined in \eqref{def:PahseTransitionTime}, which is finite by Lemma \ref{lemma:PerturbedInitialization_PhaseTransition}, the dynamics effectively becomes the one with $|\lambda_i|$ (instead of $\lambda_i$) with initialization $\alpha_0 := \d_2(\tdisc_0) > 0$ and $\gamma_0 := - \d_1(\tdisc_0)$. Note that $0<\alpha_0<\alpha$ because $\d_2(\tdisc)$ is montonically decreasing for $\tdisc<\tdisc_0$ and $0<\gamma_0<\alpha_0$ by \eqref{ineq:alpha0geqgamma0}. Altogether we have $0<\gamma_0<\alpha_0<\alpha$. After $\tdisc_0$, we are back in either case \ref{case_a} (if $|\lambda_i|\geq \gamma_0\alpha_0^{N-1}$) or case \ref{case_b} (if $|\lambda_i|< \gamma_0\alpha_0^{N-1}$). In case \ref{case_a}, we have $0>p(\tdisc) \geq -\lambda_i > -\alpha^N$, for all $\tdisc\in\mathbb{N}_0$. In case \ref{case_b}, we have $0\geq p(\tdisc) \geq -\alpha^N$ for all $\tdisc\in\mathbb{N}_0$. In both cases we can deduce that $0\geq p(\tdisc) \geq -\alpha^N$ and hence $|p(\tdisc)-\lambda_i| \leq \alpha^N$ for all $\tdisc\in\mathbb{N}_0$.

\item \label{case_d} Assume that $\lambda_i\leq-(\alpha-\beta)\alpha^{N-1}$.
Let $\Delta_1(\tdisc)$, $\Delta_2(\tdisc)$ and $\kappa(\tdisc)$ be defined as in \eqref{def:Diff_SquareDiff_RateFactor} with $\lambda = \lambda_i$. Let $\tdisc_0$ be the phase transition time defined in \eqref{def:PahseTransitionTime} at which $d_1$ becomes negative. We start with some useful observations.

For $\tdisc<\tdisc_0$, the sequences $\d_1,\d_2$ are positive and decreasing by induction, i.e.,
\begin{equation}\label{d:ineq}
    \d_1(\tdisc),\d_2(\tdisc)>0,\quad
    \d_1(\tdisc + 1) < \d_1(\tdisc),\quad 
    \d_2(\tdisc + 1) < \d_2(\tdisc).
\end{equation}
Since $\d_1(0)<\d_2(0)<|\lambda_i|^{\frac{1}{N}}$, we have  $\d_1(\tdisc),\d_2(\tdisc)< |\lambda_i|^{\frac{1}{N}}$. Therefore
\begin{equation*}
    \kappa(\tdisc) = \d_2(\tdisc)^{N-2}(\d_1(\tdisc)\d_2(\tdisc)^{N-1}-\lambda) \leq |\lambda_i|^{\frac{N-2}{N}} 2 |\lambda_i| =  2|\lambda_i|^{\frac{2N-2}{N}}.
\end{equation*}

Note that $M = \max\{\alpha, \| \widehat{W}\|^\frac{1}{N}\} \geq |\lambda_i|^{\frac{1}{N}}$ together with Condition~\eqref{cond:stepsize:thm} on the stepsize implies that
\begin{equation}\label{kappa:bound}
    0 < \eta\kappa(\tdisc) < 2\eta |\lambda_i|^{\frac{2N-2}{2}} < \frac{2|\lambda_i|^{\frac{2N - 2}{N}}}{9N(cM)^{2N-2}}  \leq \frac{2}{9Nc^{2N-2}} < 1.
\end{equation}

for $\tdisc<\tdisc_0$. By Lemma~\ref{lemma:DeltaContraction}, $\Delta_1$ is positive and increasing, while $\Delta_2$ is positive and decreasing in $\tdisc$. Thus $\d_2(\tdisc) - \d_1(\tdisc) > \d_2(0) - \d_1(0) = \beta$ and $\d_2(\tdisc)^2 - \d_1(\tdisc)^2 > 0$ for $\tdisc<\tdisc_0$.




At the phase transition point $\tdisc_0$, since $\d_1(\tdisc_0)<0\leq\d_1(\tdisc_0-1)$ and $\d_2(\tdisc_0 - 1) > \d_1(\tdisc_0 - 1) + \beta > \beta$, relation \eqref{kappa:bound} yields
\begin{align}\label{d2:lowerbound}
    \d_2(\tdisc_0)
    & = \d_2(\tdisc_0 - 1) - \eta \d_1(\tdisc_0 - 1) \kappa(\tdisc_0 - 1)
    \geq \d_2(\tdisc_0 - 1)(1-\eta \kappa(\tdisc_0 - 1))\notag\\ 
    & > \beta\left(1-\frac{2}{9Nc^{2N-2}}\right)
    =\frac{9N - 2(c-1)^2}{9N} \beta.
\end{align}

Now consider $\tdisc \geq \tdisc_0$ and recall from case \ref{case_c} and the proof of Lemma~\ref{lemma:PerturbedInitialization_Convergence_AllLambda}, that the dynamics effectively becomes the one with $\lambda_i$ replaced by $|\lambda_i|$ by considering $(\d_0,\d_2):=(-\d_1,\d_2)$
and with initializations $|\d_1(\tdisc_0)|$ and $\d_2(\tdisc_0)$.

We now distinguish the subcases $\alpha^N \geq |\lambda_i|$ and $\alpha^N < |\lambda_i|$. Assume first that $\alpha^N \geq |\lambda_i|$. Then $0<-\d_1(\tdisc_0)<\d_2(\tdisc_0)<\alpha$ as in part (c). By Lemma~\ref{lemma:PerturbedInitialization_Convergence_PositiveLargeLambda} and Lemma~\ref{lemma:PerturbedInitialization_Convergence_PositiveSmallLambda}, $0 \geq \d_1(\tdisc)\d_2(\tdisc)^{N-1} \geq - \max(\alpha^N, |\lambda_i|) = - \alpha^N$ for all $\tdisc \geq \tdisc_0$. Moreover,
$0\leq \d_1(\tdisc) \leq \d_1(0) = (\alpha-\beta) < \alpha$ and $\d_2(\tdisc) \leq \d_2(0) \leq \alpha$ for all $\tdisc = 0,\hdots,\tdisc_0$ by \eqref{d:ineq}. Hence, $0 \leq \d_1(\tdisc)\d_2(\tdisc)^{N-1} \leq  \alpha^{N}$ for all $\tdisc = 0,\hdots,\tdisc_0$. It follows that altogether
$|E_{ii}(\tdisc)| = |\d_1(\tdisc)\d_2(\tdisc)^{N-1} - \lambda_i| \leq 2 \alpha^N$ for all $\tdisc \in \mathbb{N}_0$.

If $\alpha^N < |\lambda_i|$ then the same analysis as in case \ref{case_c} gives $0<-\d_1(\tdisc_0)<\d_2(\tdisc_0)<\alpha<|\lambda_i|^{\frac{1}{N}}$. Consequently, $|\d_1(\tdisc_0)|\d_2(\tdisc_0)^{N-1}\leq |\lambda_i|$ and we are back to the case \ref{case_a}.
By Lemma~\ref{lemma:PerturbedInitialization_Convergence_PositiveLargeLambda}, $\d_2(\tdisc) \leq c \max\{\alpha, |\lambda_i|^{\frac{1}{N}}\} = c|\lambda_i|^{\frac{1}{N}}$ 
for all $\tdisc\geq\tdisc_0$, and $\d_0, \d_2$ are monotonically increasing. 
%
Thus $\d_2$ attains its minimum at $\tdisc_0$, which implies by \eqref{d2:lowerbound} that
$\d_2(\tdisc) \geq \frac{9N-2(c-1)^2}{9N}\beta$, for all $\tdisc \in \mathbb{N}_0$. Summarizing, we obtain that 
\begin{equation}\label{two-sided-d2}
    \frac{9N-2(c-1)^2}{9N}\beta\leq \d_2(\tdisc)\leq \c|\lambda_i|^{\frac{1}{N}}
    \quad \text{ for all }\tdisc\in\mathbb{N}_0.
\end{equation}
Note that since $|\d_1(\tdisc)| = \d_0(\tdisc) <\d_2(\tdisc)$ for $\tdisc\geq\tdisc_0$, \eqref{kappa:bound} also holds for $\tdisc\geq\tdisc_0$. Thus, Lemma~\ref{lemma:DeltaContraction} implies that $\d_2(\tdisc)^2 - \d_1(\tdisc)^2>0$ for all $\tdisc\in\mathbb{N}_0$.

To show our claim, it remains to characterize a time $T_0$ for which $d_1(T_0) < -\beta$ and apply Theorem \ref{thm:IdenticalInitialization_ConvergenceRate} as for the case \ref{case_a} with $\d_0(T_0) > \beta$
as initial condition. This will give
$$
\left||d_1(T_0 + \tdisc) d_2(T_0 + \tdisc)^{N-1}|^{\frac{1}{N}}-|\lambda_i|^{\frac{1}{N}}\right| \leq \epsilon \quad \mbox{ for all } \tdisc \geq T^\Id_N\left(|\lambda_i|, \epsilon, \beta, \eta\right).
$$
implying that
$$
|E_{ii}(\tdisc)| \leq |\lambda_i| - (|\lambda_i|^{\frac{1}{N}} - \epsilon)^N
\quad \mbox{ for all }
\tdisc \geq T^{\operatorname{P}}_N(\lambda_i,\epsilon,\alpha,\beta,\eta) = T^\Id_N\left(|\lambda_i|, \epsilon, \beta, \eta\right) + T_0.
$$
The result will then follow from the same argument as in the proof of Theorem \ref{theorem:IdenticalInitialization_ContinuousNonAsymptoticRate}. 
So let us characterize $T_0$. 
Note that the assumption $0 < \frac{\beta}{c-1} < \alpha$ of Theorem~\ref{theorem:PerturbedInitialization_NonAsymptoticConvergence} together with $\alpha^N < |\lambda_i|$ implies that $|\lambda_i|^{\frac{1}{N}} \geq \frac{\beta}{c-1}$. Now assume that $\tdisc \in \mathbb{N}_0$ is such that $\d_1(\tdisc) \geq -\beta$. With $\d_2(\tdisc)\leq \c|\lambda_i|^{\frac{1}{N}}$ we then obtain
\begin{align*}
    \d_1(\tdisc) \d_2(\tdisc)^{N-1} - \lambda_i
    &= |\lambda_i| + \d_1(\tdisc) \d_2(\tdisc)^{N-1}
    \geq |\lambda_i| - \beta (c|\lambda_i|^{\frac{1}{N}})^{N-1}\\
    &= (|\lambda_i|^{\frac{1}{N}} - \beta c^{N-1})|\lambda_i|^{1-\frac{1}{N}}
    = \left(|\lambda_i|^{\frac{1}{N}} - \frac{\beta}{c-1}\right)|\lambda_i|^{1-\frac{1}{N}}
    > 0.
\end{align*}
Together with the lower bound in \eqref{two-sided-d2} this gives
\begin{align*}
    \d_1(\tdisc + 1)
    &= \d_1(\tdisc) - \eta \d_2(\tdisc)^{N-1}(\d_1(\tdisc) \d_2(\tdisc)^{N-1} - \lambda_i)\\
    &\leq \d_1(\tdisc) - \eta \left(\frac{9N-2(c-1)}{9N}\beta\right)^{N-1}\left(|\lambda_i|^{\frac{1}{N}} - \frac{\beta}{c-1}\right)|\lambda_i|^{1-\frac{1}{N}}.
\end{align*}
Since $\d_1(0) = \alpha - \beta$ it follows by induction that
\[
\d_1(k) \leq \alpha- \beta - \tdisc \eta \left(\frac{9N-2(c-1)}{9N}\beta|\lambda_i|^{\frac{1}{N}}\right)^{N-1}\left(|\lambda_i|^{\frac{1}{N}}-\frac{\beta}{c-1}\right)
\]
as long as $\d_1(\tdisc - 1) > - \beta$. Hence
\[
\d_1(\tdisc) \leq - \beta \quad \mbox{ for all } \tdisc \geq T_0 := \frac{\alpha}{\eta \left( \frac{9N-2(c-1)}{9N}\beta|\lambda_i|^{\frac{1}{N}}\right)^{N-1} \left(|\lambda_i|^{\frac{1}{N}}-\frac{\beta}{c-1}\right)}.
\]
This completes the proof.
\end{enumerate}

\section{Implicit Bias of Gradient Descent}
\label{sec:EffectiveRank}


The explicit characterization of gradient flow and gradient descent dynamics derived in Section \ref{sec:AnalysisOfDynamics} may be used to shed some light on the phenomenon of \emph{implicit bias} resp.\ \emph{implicit regularization} of gradient descent. In fact, different convergence rates for different eigenvalues (depending on their respective signs and magnitudes) result in matrix iterates of low effective rank and accurately explain the implicit regularization observed when applying gradient descent to matrix factorization of symmetric matrices. We expect that a similar reasoning to be valid in more general contexts beyond matrix estimation. 

As a concrete illustration, Figures~\ref{fig:EffRankA} and \ref{fig:EffRankB} depict the outcome of a simple experiment in which a rank $3$ ground truth $\Wstar \in \mathbb{R}^{200\times 200}$ is approximated by factorized iterates $\W(\tdisc) = \W_2(\tdisc)\W_1(\tdisc)$ minimizing
\begin{align*}
    (\W_1,\W_2)~\mapsto~\frac{1}{2}~\| \W_2\W_1~-~\Wstar \|_F^2
\end{align*}
via gradient descent, i.e., matrix factorization with $N = 2$. Figure \ref{fig:EffRankA} shows the Frobenius approximation error over the iterates which decreases in a characteristic "waterfall behavior".
Related, yet more striking, is the dynamics of the effective rank of $\W(\tdisc)$ shown in Figure \ref{fig:EffRankB}. The effective rank\footnote{A related quantity called ``stable rank" is defined as $\widehat{r}(W) = \norm{W}{*}^2/ \norm{W}{F}^2$, for which it also holds that $1 \leq \widehat{r}(W) \leq \mathrm{rank}(W)$. For our purposes the effective rank $r(W)$ turned out to be more convenient.} of a matrix $W$ is defined as
\begin{align*}
	r(W) = \frac{\norm{W}{*}}{\norm{W}{}},
\end{align*}
i.e., the ratio of nuclear and operator norm of $W$, for which $1 \le r(W) \le \mathrm{rank}(W)$. The plateaus of Figure \ref{fig:EffRankB} are of height $1$, $1.5$, and $1.6$, corresponding to the values $r(\Wstar_1)$, $r(\Wstar_2)$, and $r(\Wstar_3)$, where we denote by $\Wstar_\L$ the rank-$\L$ best term approximation of $\Wstar$. Consequently, the gradient descent iterates $\W(\tdisc)$ approach the effective rank of $\Wstar$ step-by-step while each intermediate step is given by the effective rank of a rank $\L$ approximation of $\Wstar$. 

Building upon the analysis of the gradient descent iterates $\W(\tdisc)$ and the underlying time-continuous gradient flow $\tilde{\W}(\tcont)$ in Section \ref{sec:AnalysisOfDynamics}, we are going to present two results, Theorem \ref{prop:EffectiveRankContinuousPSD} (gradient flow, $N=2$) and Theorem \ref{prop:EffectiveRankDiscretePSD} (gradient descent, $N\ge2$), that precisely describe length and location of the plateaus in Figures \ref{fig:EffRankSection1} and \ref{fig:EffRankSection2}, for positive semi-definite ground truth $\Wstar$. Both results implicitly carry an important meta-statement: after few steps, the effective rank of the gradient flow and gradient descent iterates first decreases down to the multiplicity of the largest singular value of $\widehat{W}$ (i.e., to $1$ in the displayed experiments).
%
%
Afterwards the effective rank approaches plateaus of monotonically increasing height. Numerical simulations in more general settings including matrix sensing show a similar behavior. Therefore, we expect that this phenomenon holds in wider contexts
where explicit dynamics cannot be derived anymore. In particular,
proving monotonicity of the effective rank (or suitable replacements in the case of  nonlinear networks) after a short initial phase should be possible even under more general assumptions and will yield new insights into the implicit bias of gradient descent in deep learning.



Let us start with describing the effective rank behavior of the gradient flow $\tilde{\W}(\tcont)$. While being sufficiently precise, the resulting statements are less involved and thus easier to appraise than the ones for the gradient descent iterates $\W(\tdisc)$.

\subsection{Gradient Flow}

As a proof of concept, the following theorem makes the above observations on effective rank approximation more precise for positive semi-definite $\Wstar$ and the continuous flow $\tilde{\W} = \tilde{\W}_2(\tcont) \tilde{\W}_1(\tcont)$ whose components' eigenvalue dynamics is characterized by \eqref{eq:IdenticalInitialization_ContinuousScalarDynamics} in Section \ref{sec:AnalysisOfDynamics}. In particular, the theorem illustrates the involved dependence between parameters of the problem and the stopping time necessary for gradient flow (and consequently for gradient descent as well) to produce solutions of a certain rank. The results in Section \ref{sec:AnalysisOfDynamics} allow to extend the setting by similar arguments to general symmetric ground truths. To keep the presentation simple, we refrain from giving statements in full generality and leave this to the reader.

Recall the distinction between discrete time $\tdisc$ and continuous time $\tcont = \eta \tdisc$ which are related by the step size $\eta$. We denote the eigenvalues of the symmetric ground truth $\Wstar$ by $\lambda_1\geq\cdots\geq \lambda_n\geq 0$. We define, for $\L \in [n]$, $\alpha > 0$ fixed, and $\L' = \max\curly{ \l \colon \lambda_\l > \alpha^2 }$, the three time intervals 
\begin{align} \label{eq:Idef}
\begin{split}
		I_1 &= \curly{ \tcont \in \mathbb{R}_+ \colon \max_{\l \in [\L]}\curly{ \abs{ \frac{ g_{\lambda_1,\alpha}(\tcont) }{g_{\lambda_\l,\alpha}(\tcont)} - 1}{} } < \varepsilon }, \\
		I_2 &= \curly{ \tcont \in \mathbb{R}_+ \colon \abs{ \frac{g_{\lambda_1,\alpha}(\tcont)}{g_{\lambda_{\L+1},\alpha}(\tcont)} \frac{\lambda_{\L+1}}{\lambda_1} }{} < \frac{r(\Wstar_\L)}{\L'-\L} \varepsilon },
		\text{ and } \\
		I_3 &= \curly{ \tcont \in \mathbb{R}_+ \colon g_{\lambda_1,\alpha}(\tcont) < \cc },
\end{split}
\end{align}
where
\begin{align} \label{eq:gdef}
    g_{\lambda,\alpha} (\tcont) = 1 + \round{ \frac{\lambda}{\alpha^2} - 1} e^{-2\lambda \tcont}
\end{align}
and $\cc > 1$ may be chosen arbitrarily. Note, however, that $\cc$ influences Theorem \ref{prop:EffectiveRankContinuousPSD} since it controls the trade-off between the size of $I_3$ and tightness of the bound.
\begin{theorem} \label{prop:EffectiveRankContinuousPSD}
	 Let $\Wstar \in \mathbb{R}^{n\times n}$ be a symmetric ground truth with eigenvalues $\lambda_1 \ge \dots \ge \lambda_n \ge 0$ and $\tilde{\W}(\tcont) = \tilde{\W}_2(\tcont) \tilde{\W}_1(\tcont)$ be the 
	 solution of the differential equation $\tilde{\W}'_j(\tcont) = -\nabla_{\tilde{\W}_j(\tcont)} \mathcal{L}(\tilde{\W}(\tcont))$,
	 a continuous analog of the gradient descent \eqref{eq:MatrixFactorization_GradientDescent}--\eqref{eq:MatrixFactorization_IdenticalInitialization}, for $N=2$. Let $\varepsilon > 0$ and $\L \in [n]$ be fixed and assume that $\alpha^2 < \lambda_\L$ in \eqref{eq:MatrixFactorization_IdenticalInitialization}. Define $\L' = \max\curly{ \l \colon \lambda_\l > \alpha^2 }$ and let the three time intervals $I_1$, $I_2$, and $I_3$ be given as in \eqref{eq:Idef}--\eqref{eq:gdef}. Then,
	\begin{align} \label{eq:EffectiverRankContinuousPSD}
		\abs{r(\Wstar_\L) - r(\tilde{\W}(\tcont))}{} \le 2 \varepsilon \; r(\Wstar_\L) + \cc \frac{n-\L'}{\|\Wstar\|} \alpha^2,
	\end{align}
	for all $\tcont \in I_1 \cap I_2 \cap I_3$. 
\end{theorem}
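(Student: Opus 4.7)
The plan is to exploit the explicit diagonalization of the dynamics obtained in Section~\ref{subsec:IdenticalInitialization} and to reduce everything to a bookkeeping exercise on the eigenvalues. By Lemma~\ref{lemma:IdenticalInitialization_MatrixDynamics} (applied to its obvious continuous analog), for $N=2$ the factors $\tilde{W}_j(t)$ share an eigenbasis with $\widehat{W}$ and each eigenvalue $\mu_\ell(t)$ of $\tilde{W}(t)$ is $y(t)^2$ where $y$ solves \eqref{eq:IdenticalInitialization_ContinuousScalarDynamics} with $\lambda=\lambda_\ell$. By the closed form in Remark~\ref{rem:ExplicitForm},
\[
\mu_\ell(t) = \frac{\lambda_\ell}{g_{\lambda_\ell,\alpha}(t)}, \qquad \ell \in [n],
\]
so in particular $\mu_\ell(t) \in [\min(\lambda_\ell,\alpha^2),\max(\lambda_\ell,\alpha^2)] \subseteq \mathbb{R}_+$ for all $t\ge 0$.

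The next (small) step is to identify the operator and nuclear norms of $\tilde{W}(t)$ with simple functions of the $\mu_\ell$'s. Since all $\mu_\ell(t)$ are non-negative, $\|\tilde{W}(t)\|_* = \sum_\ell \mu_\ell(t)$. A standard ODE comparison argument applied to the scalar ODE $\mu'=-2\mu(\mu-\lambda)$ (which governs $\mu_\ell$ pointwise, since $\mu_\ell=y^2$) shows that $\lambda_a \ge \lambda_b$ implies $\mu_a(t)\ge \mu_b(t)$ for all $t\ge 0$; hence $\|\tilde{W}(t)\| = \mu_1(t)$. Thus
\[
r(\tilde{W}(t)) = \sum_{\ell=1}^{n} \frac{\lambda_\ell}{\lambda_1} \cdot \frac{g_{\lambda_1,\alpha}(t)}{g_{\lambda_\ell,\alpha}(t)},
\qquad r(\widehat{W}_\L) = \sum_{\ell=1}^{\L} \frac{\lambda_\ell}{\lambda_1}.
\]

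The core of the argument is then to split $r(\tilde{W}(t))-r(\widehat{W}_\L)$ into three blocks, $\ell\in[\L]$, $\ell\in\{\L+1,\dots,\L'\}$ and $\ell\in\{\L'+1,\dots,n\}$, and to control each via one of the three intervals. For $\ell\in[\L]$, the definition of $I_1$ directly gives
\[
\Bigl|\sum_{\ell=1}^{\L}\tfrac{\lambda_\ell}{\lambda_1}\tfrac{g_{\lambda_1,\alpha}(t)}{g_{\lambda_\ell,\alpha}(t)} - \sum_{\ell=1}^{\L}\tfrac{\lambda_\ell}{\lambda_1}\Bigr| \le \varepsilon\,r(\widehat{W}_\L).
\]
For $\ell\in\{\L+1,\dots,\L'\}$ we use $\mu_\ell(t)\le \mu_{\L+1}(t)$ (monotonicity from the comparison principle), so each ratio $\mu_\ell(t)/\mu_1(t)$ is dominated by $\mu_{\L+1}(t)/\mu_1(t) = \tfrac{\lambda_{\L+1}}{\lambda_1}\tfrac{g_{\lambda_1,\alpha}(t)}{g_{\lambda_{\L+1},\alpha}(t)} < \tfrac{r(\widehat{W}_\L)}{\L'-\L}\varepsilon$ on $I_2$, yielding a cumulative bound of $\varepsilon\,r(\widehat{W}_\L)$. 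For $\ell\in\{\L'+1,\dots,n\}$ we have $\lambda_\ell\le \alpha^2$, so $\mu_\ell(t)\le \alpha^2$ for every $t$, while on $I_3$ the operator norm is bounded below by $\mu_1(t)=\lambda_1/g_{\lambda_1,\alpha}(t) > \|\widehat{W}\|/\cc$. This contributes at most $\cc(n-\L')\alpha^2/\|\widehat{W}\|$. Summing the three contributions produces exactly \eqref{eq:EffectiverRankContinuousPSD}.

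The only place that requires a little care is the identification $\|\tilde{W}(t)\|=\mu_1(t)$: naively the maximum over the $\mu_\ell$'s could a priori jump between indices in time, and showing it does not is the step I would write out carefully via the ODE comparison argument sketched above. Every other step is an elementary triangle-inequality decomposition, and the three intervals $I_1,I_2,I_3$ were clearly designed to bound exactly the three blocks of the split.
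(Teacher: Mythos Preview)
Your proof is correct and follows essentially the same route as the paper: decompose $r(\tilde{\W}(\tcont))-r(\Wstar_\L)$ into the three blocks $\ell\in[\L]$, $\ell\in\{\L+1,\dots,\L'\}$, $\ell\in\{\L'+1,\dots,n\}$, use the explicit formula $\mu_\ell(t)=\lambda_\ell/g_{\lambda_\ell,\alpha}(t)$, and bound each block via $I_1$, $I_2$, $I_3$ respectively. The monotonicity $\lambda_a\ge\lambda_b\Rightarrow\mu_a(t)\ge\mu_b(t)$ that you obtain by a comparison principle is exactly what the paper isolates as Lemma~\ref{lem:Monotonicity}; otherwise the arguments are identical.
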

\begin{remark}
    Note that the effective rank of $\tilde{\W}$ and $\Wstar_\L$ approximately agree on $I_1 \cap I_2 \cap I_3$ if $\epsilon$ and $\alpha$ are small. If $\L = 1$, one has that $I_1 = \mathbb{R}_+$ and the first summand on the right-hand side of \eqref{eq:EffectiverRankContinuousPSD} becomes $\varepsilon \; r(\Wstar_\L)$. From the proof it will become clear that $I_1$ describes the time interval in which the leading $\L$ eigenvalues of $\Wstar$ are well approximated, $I_2$ describes the time interval in which the dominant tail eigenvalues $\lambda_{\L+1},\dots,\lambda_{\L'}$ are still small compared to the leading $\L$ eigenvalues, and $I_3$ controls $\lambda_1$ independently of $I_1$ in order to bound the error caused by the neglectable tail eigenvalues $\lambda_{\L'},\dots,\lambda_n$.\\
	Let us furthermore mention that the restriction on $N = 2$ in Theorem \ref{prop:EffectiveRankContinuousPSD} is due to the hardness of deriving from Lemma \ref{lemma:IdenticalInitialization_ContinuousScalarDynamics} explicit eigenvalue dynamics of $\tilde{\W}$ for larger $N$. (Note that the function $g_{\lambda,\alpha}$ in \eqref{eq:gdef} originates from the explicit dynamics for $N=2$ in Remark \ref{rem:ExplicitForm}.) The lemma, though, provides implicit characterizations of the eigenvalue dynamics for general $N$, which will be used in the gradient descent analysis in Section \ref{subsec:gradientdescent}. Finally, we emphasize that in order to guarantee $I_2$ is non-empty an eigenvalue gap $\lambda_1 > \lambda_{\L+1}$ is required whenever $\varepsilon$ is small. 
\end{remark}
\begin{remark}
    So far, one can hardly speak of regularization since the ground-truth $\Wstar$ is the unique global minimizer. Consequently, if $\Wstar$ is low-rank the unique global minimizer of $W \mapsto \frac{1}{2} \|\W - \Wstar \|_F^2$, which any optimization algorithm produces, is low-rank. A more sensible scenario would be given by considering bounded additive noise $\Xi \in \mathbb{R}^{n \times n}$ and using gradient descent for de-noising $\Wstar = \W_\text{LR} + \Xi$, where the ground-truth $\W_\text{LR}$ is of low rank. In this case, the global minimum $\Wstar$ of $W \mapsto \frac{1}{2} \|\W - \Wstar \|_{F}^2 $ is in general full-rank (although still effectively low-rank). Eventually, $W$ converges to $\Wstar$. However, the above theorem suggests that, before convergence, it is of lower effective rank for a certain period of time. These intermediate stages provide good approximations to $\W_\text{LR}$ with reduced noise influence. In fact, it is straight-forward to deduce a generalized version of Theorem \ref{prop:EffectiveRankContinuousPSD}, allowing noise that is bounded in operator norm: just apply\footnote{To apply the theory in Section \ref{sec:AnalysisOfDynamics} and hence Theorem \ref{prop:EffectiveRankContinuousPSD}, the matrix $\Wstar$ has to be symmetric which is not necessarily the case for general noise. However, when applying gradient descent, we can replace $\Wstar$ by its symmetrized version $\frac{1}{2}(\Wstar + \Wstar^\tT)$ since $\frac{1}{2}(\Wstar + \Wstar^\tT) = \W_\text{LR} + \frac{1}{2} (\Xi + \Xi^\tT)$ where $\norm{\frac{1}{2} (\Xi + \Xi^\tT)}{} \le \norm{\Xi}{}$. We may thus assume without loss of generality that the noise $\Xi$ and with it $\Wstar$ is symmetric.} the theorem to $\Wstar$ and control the eigenvalues of $\Wstar$ by the eigenvalues of $\W_\text{LR}$ and Weyl's inequality \cite[Theorem 4.3.1]{horn2012matrix}. Though conceptually simple, the modifications lead to involved statements and are thus omitted. In fact, even the bounds in Theorem \ref{prop:EffectiveRankContinuousPSD} yield good results for moderate noise as Figures \ref{fig:EffRankNoiseA} and \ref{fig:EffRankNoiseB} show.
\end{remark}
\begin{figure}[!htb]
\centering
\begin{subfigure}[c]{0.45\textwidth}
\includegraphics[width=\textwidth]{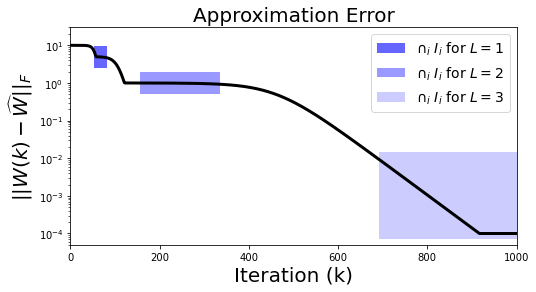}
\subcaption{Approximation error without noise.}
\label{fig:EffRankA}
\end{subfigure}
\quad
\begin{subfigure}[c]{0.45\textwidth}
\includegraphics[width=\textwidth]{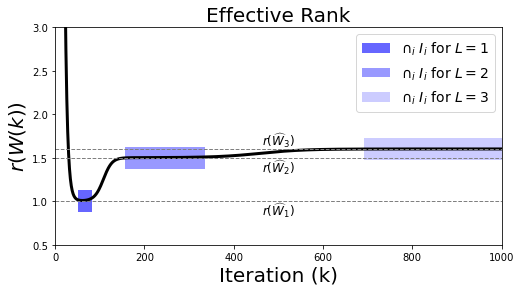}
\subcaption{Effective rank without noise.}
\label{fig:EffRankB}
\end{subfigure}
\quad
\begin{subfigure}[c]{0.45\textwidth}
\includegraphics[width=\textwidth]{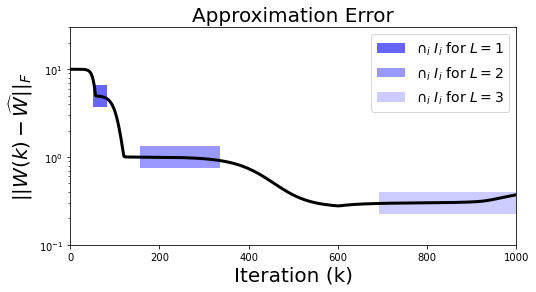}
\subcaption{Approximation error with noise.}
\label{fig:EffRankNoiseA}
\end{subfigure}
\quad
\begin{subfigure}[c]{0.45\textwidth}
\includegraphics[width=\textwidth]{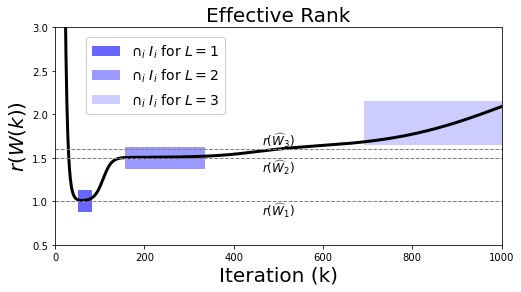}
\subcaption{Effective rank with noise.}
\label{fig:EffRankNoiseB}
\end{subfigure}
\caption{Above is the dynamics of gradient descent \eqref{eq:MatrixFactorization_GradientDescent}-\eqref{eq:MatrixFactorization_IdenticalInitialization}, where $N=2$, $n=200$, $\alpha = 10^{-2}$, $\eta = 10^{-2}$. $\W_\text{LR}\in\mathbb{R}^{n\times n}$ is a rank $3$ symmetric matrix with leading eigenvalues $(\lambda_1,\lambda_2,\lambda_3) = (10,5,1)$. In (a)-(b), $\Wstar = \W_\text{LR}$; in (c)-(d), $\Wstar = \W_\text{LR} + \Xi$, where $\Xi \in \mathbb{R}^{n \times n}$ is a scaled random Gaussian matrix such that $\norm{\Xi}{} = 0.05 \| \Wstar \|$. The shaded regions are the predictions from Theorem \ref{prop:EffectiveRankContinuousPSD} where the best rank $L$ approximation of $\Wstar$ lies, with the scaling $\tcont = \eta\tdisc$. Here we take $\epsilon=2.2\cdot 10^{-2}$, $L'=3$, $C=17$, which makes the prediction error (right hand side of \eqref{eq:EffectiverRankContinuousPSD}) less than $10^{-1}$. The empirical errors are also less than $10^{-1}$ in the shaded regions.}
\label{fig:EffRankSection1}
\end{figure}
The proof of Theorem \ref{prop:EffectiveRankContinuousPSD} relies on the evolution of eigenvalues of $\tilde{W}_j(\tcont)$, $j \in [N]$, which is characterized by the ODE
\begin{equation}  \label{eq:IdenticalInitialization_ContinuousScalarDynamics_Repeated}
    \tilde{\d}_\lambda'(\tcont) = - \tilde{\d}_\lambda(\tcont)^{N-1} (\tilde{\d}_\lambda(\tcont)^N-\lambda),
    \quad \tilde{\d}_\lambda(0) = \alpha.
\end{equation}
We start with a technical statement.
\begin{lemma}
\label{lem:Monotonicity}
    Let $\tilde{\d}_\lambda$ be defined as in \eqref{eq:IdenticalInitialization_ContinuousScalarDynamics_Repeated} \revision{and set $N \ge 2$. Then, $\tilde{\d}_\lambda(\tcont) \ge 0$, for any $\tcont \ge 0$ and $\lambda \in \mathbb R$, and} the function $\tilde{\d}_\lambda$ is monotonically increasing in $\lambda$, i.e., $\tilde{\d}_\lambda (\tcont) \ge \tilde{\d}_{\lambda'} (\tcont)$, for all $\tcont \ge 0$ and $\lambda \ge \lambda'$. 
\end{lemma}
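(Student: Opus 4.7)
The plan is to use a standard ODE comparison argument exploiting the monotonicity in $\lambda$ of the right-hand side of \eqref{eq:IdenticalInitialization_ContinuousScalarDynamics_Repeated}. Writing the right-hand side as $f_\lambda(x) = \lambda x^{N-1} - x^{2N-1}$, one has the identity
\[
f_\lambda(x) - f_{\lambda'}(x) = (\lambda - \lambda')\, x^{N-1},
\]
which is nonnegative whenever $\lambda \ge \lambda'$ and $x \ge 0$, and strictly positive when both inequalities are strict. The case $\lambda = \lambda'$ is immediate from uniqueness of ODE solutions, so I may restrict attention to $\lambda > \lambda'$ and aim to establish the strict inequality $\tilde{\d}_\lambda(\tcont) > \tilde{\d}_{\lambda'}(\tcont)$ for all $\tcont > 0$.

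The first preparatory step is to verify that both $\tilde{\d}_\lambda$ and $\tilde{\d}_{\lambda'}$ remain strictly positive for all $\tcont \ge 0$. Since the trivial function $\tilde{\d} \equiv 0$ solves \eqref{eq:IdenticalInitialization_ContinuousScalarDynamics_Repeated} with zero initial value and the right-hand side is smooth in $x$, uniqueness of ODE solutions forbids any solution started from $\alpha > 0$ to ever hit zero. The same fact can be read off from the explicit formulas in Lemma \ref{lemma:IdenticalInitialization_ContinuousScalarDynamics}.

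With positivity in hand, I would set $h(\tcont) = \tilde{\d}_\lambda(\tcont) - \tilde{\d}_{\lambda'}(\tcont)$ and argue by contradiction. Since $h(0) = 0$ and $h'(0) = (\lambda - \lambda')\alpha^{N-1} > 0$, the function $h$ is strictly positive on some interval $(0, \delta)$. If $h$ were not strictly positive on all of $(0, \infty)$, continuity would produce a first time $T > 0$ at which $h(T) = 0$ while $h > 0$ on $(0,T)$. But at $T$ one has $\tilde{\d}_\lambda(T) = \tilde{\d}_{\lambda'}(T) =: x^* > 0$, and therefore
\[
h'(T) = f_\lambda(x^*) - f_{\lambda'}(x^*) = (\lambda - \lambda')(x^*)^{N-1} > 0,
\]
which contradicts $h(T-\epsilon) > 0 = h(T)$ for small $\epsilon > 0$. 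Hence $h(\tcont) > 0$ for all $\tcont > 0$, and the lemma follows.

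The argument is essentially mechanical; the only genuine technical point is the verification that $\tilde{\d}_\lambda$ cannot touch zero, which is what guarantees strict positivity of the factor $(x^*)^{N-1}$ in the final step. Beyond this, no refined estimates are needed, and the whole statement reduces to the pointwise monotonicity of $\lambda \mapsto f_\lambda(x)$ on $\{x > 0\}$.
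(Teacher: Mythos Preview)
Your proof is correct and follows essentially the same comparison-by-contradiction strategy as the paper: both locate a first crossing time and use that $f_\lambda(x^*) - f_{\lambda'}(x^*) = (\lambda-\lambda')(x^*)^{N-1} > 0$ forces the derivative of the difference to be strictly positive there, contradicting the crossing. Your version is in fact slightly cleaner, since you explicitly verify $\tilde{\d}_\lambda(\tcont) > 0$ (needed to ensure $(x^*)^{N-1} > 0$), whereas the paper uses this tacitly.
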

\begin{proof}
   Fix $N \ge 2$. \revision{Since \eqref{eq:IdenticalInitialization_ContinuousScalarDynamics_Repeated} describes a continuous trajectory that is initialized with $\alpha > 0$ and satisfies $\tilde{\d}'_\lambda(\tcont) = 0$ for $\tilde{\d}_\lambda(\tcont) = 0$, one clearly has $\tilde{\d}_\lambda(\tcont) \ge 0$ for any $\tcont \ge 0$ and $\lambda \in \mathbb R$.}
   
   \revision{Let us now turn to the second statement.} If $\lambda = \lambda'$, we have that $\tilde{\d}_\lambda (\tcont) = \tilde{\d}_{\lambda'} (\tcont)$, for all $\tcont \ge 0$. We thus restrict ourselves to $\lambda > \lambda'$. Let us assume there exists a time $\tcont_< > 0$ such that $\tilde{\d}_\lambda (\tcont_<) \le \tilde{\d}_{\lambda'} (\tcont_<)$. Since $\tilde{\d}_\lambda' (0) > \tilde{\d}_{\lambda'}' (0)$ and $\tilde{\d}_\lambda (0)
   = \tilde{\d}_{\lambda'} (0)$, there exists $\delta > 0$ with $\tilde{\d}_\lambda (\tcont) > \tilde{\d}_{\lambda'} (\tcont)$, for $\tcont \in (0,\delta)$. By continuity of $\tilde{\d}_\lambda$ and $\tilde{\d}_{\lambda'}$, and the intermediate value theorem, the set $K = \{ \tcont \in (0,\tcont_<) \colon \tilde{\d}_\lambda (\tcont) = \tilde{\d}_{\lambda'}(\tcont) \} \subset (0,\tcont_<)$ of intersection points is thus compact and non-empty. \\ 
   If we define $\bar{\tcont} \in (0,\tcont_<)$ as the smallest element of $K$ such that, for some $\delta > 0$, $\tilde{\d}_\lambda (\tcont) > \tilde{\d}_{\lambda'} (\tcont)$, for $\tcont \in (\bar{\tcont}-\delta,\bar{\tcont})$ and $\tilde{\d}_\lambda (\tcont) \le \tilde{\d}_{\lambda'} (\tcont)$, for $\tcont \in (\bar{\tcont},\bar{\tcont}+\delta)$, we see that $\tilde{\d}_\lambda (\bar{\tcont}) = \tilde{\d}_{\lambda'} (\bar{\tcont})$ and $\tilde{\d}_\lambda' (\bar{\tcont}) \le \tilde{\d}_{\lambda'}' (\bar{\tcont})$. (Such a minimal element $\bar{\tcont}$ must exist for the following reason: if the smallest element $\tcont_{\text{min}}$ of $K$ does not satisfy the condition, then $\tilde{\d}_\lambda (\tcont) > \tilde{\d}_{\lambda'} (\tcont)$ on a neighborhood, i.e., $\tcont_{\text{min}}$ is isolated and $K-\curly{\tcont_{\text{min}}}$ is still closed. If repeating this process did not stop at some $\bar{\tcont}$, all elements of $K$ would be isolated points and $\tilde{\d}_\lambda (\tcont) > \tilde{\d}_{\lambda'} (\tcont)$, for all $\tcont \in (0,\tcont_<) \setminus K$ contradicting continuity of $\tilde{\d}_\lambda$ in $\tcont_<$.) But this implies
   \begin{align*}
        \tilde{\d}_\lambda(\bar{\tcont})^{N-1} (\lambda - \tilde{\d}_\lambda(\bar{\tcont})^N) 
        = \tilde{\d}_\lambda' (\bar{\tcont})
        \le \tilde{\d}_{\lambda'}' (\bar{\tcont})
        = \tilde{\d}_{\lambda'}(\bar{\tcont})^{N-1} (\lambda' - \tilde{\d}_{\lambda'}(\bar{\tcont})^N)
   \end{align*}
   and, hence, $\lambda \le \lambda'$ contradicting the initial assumption. \revision{Note that in the final deduction step we used $\tilde{\d}_\lambda(\bar{\tcont}) > 0$. We already showed $\tilde{\d}_\lambda(\bar{\tcont}) \ge 0$. The observation that $\tilde{\d}_\lambda(\bar{\tcont}) \neq 0$ follows along the same lines as in the proof of Lemma \ref{lemma:ODE_Increasing_ConvexSolution}. Indeed, one can easily check that $\tilde{\d}_\lambda(\bar{\tcont}) = 0$ would allow two possible explanations of the trajectory --- the one starting at $\tilde{\d}_\lambda(0) = \alpha$ and the constant one starting at $\tilde{\d}_\lambda(0) = 0$ --- contradicting the local uniqueness of the solution of \eqref{eq:IdenticalInitialization_ContinuousScalarDynamics_Repeated} guaranteed by Picard--Lindelöf.}
\end{proof}
\begin{proof}[Proof of Theorem \ref{prop:EffectiveRankContinuousPSD}]
	We first decompose the difference as
	\begin{align*} 
	\begin{split}{}
	\abs{r(\Wstar_\L) - r(\tilde{W}(\tcont))}{} &\le \underbrace{ \abs{r(\Wstar_\L) - r(\tilde{W}(\tcont)_\L)}{} }_{=:A_1} + \underbrace{ \abs{r(\tilde{W}(\tcont)_\L) - r(\tilde{W}(\tcont)_{\L'})}{}}_{=:A_2} + \underbrace{ \abs{r(\tilde{W}(\tcont)_{\L'}) - r(\tilde{W}(\tcont))}{} }_{=:A_3}. \\
	\end{split}
	\end{align*}
	By using the explicit form of $\tilde{\d}_\lambda^2$ given in Remark~\ref{rem:ExplicitForm}
	\begin{align*}
		\tilde{\d}_\lambda^2(\tcont) = \frac{\lambda}{1 + \round{\frac{\lambda}{\alpha^2} - 1} e^{-2\lambda \tcont}} = \frac{\lambda}{g_{\lambda,\alpha}(\tcont)},
	\end{align*} 
	where $g_{\lambda,\alpha}$ is defined in \eqref{eq:gdef},
	we obtain, for $\tcont\in I_1$, that
	\begin{align*}
		A_1 = \abs{ \sum_{\l=1}^\L \revision{\left( \frac{ \abs{\tilde{d}_{\lambda_\l}^2(\tcont)}{} }{ \abs{\tilde{d}_{\lambda_1}^2(\tcont)}{} } - \frac{ \abs{\lambda_\l}{} }{ \abs{\lambda_1}{} } \right)} }{}
		\le \sum_{\l=1}^\L \round{ \abs{ \frac{ g_{\lambda_1,\alpha}(\tcont)}{ g_{\lambda_\l,\alpha}(\tcont) } - 1}{} \frac{\lambda_\l}{\lambda_1} }
		\le \epsilon r(\Wstar_\L).
	\end{align*}
	\revision{By monotonicity of $\tilde{\d}_\lambda(\tcont) \ge 0$ in $\lambda$, cf.\ Lemma~\ref{lem:Monotonicity}, which implies monotonicity of $\tilde{\d}_\lambda^2(\tcont)$,} we have, for
	$\tcont\in I_2$, 
	\begin{align*}
		A_2
		= \sum_{\l=\L+1}^{\L'} \frac{ \abs{\tilde{d}_{\lambda_\l}^2(\tcont)}{} }{ \abs{\tilde{d}_{\lambda_1}^2(\tcont)}{} } 
		\le (\L' - \L) \frac{ \abs{\tilde{d}_{\lambda_{\L+1}}^2(\tcont)}{} }{ \abs{\tilde{d}_{\lambda_1}^2(\tcont)}{} } 
		= (\L' - \L)  \abs{ \frac{ g_{\lambda_1,\alpha}(\tcont)}{g_{\lambda_{\L+1},\alpha}(\tcont)} \frac{\lambda_{\L+1}}{\lambda_1} }{}
		\leq \epsilon r(\Wstar_\L).
	\end{align*}
	Finally, for $\tcont\in I_3$,
	\begin{align*}
	    A_3
	    = \sum_{\l=\L'+1}^{n} \frac{ \abs{\tilde{d}_{\lambda_\l}^2(\tcont)}{} }{ \abs{\tilde{d}_{\lambda_1}^2(\tcont)}{} }
	    \leq \sum_{\l=\L'+1}^{n}\abs{ \frac{ g_{\lambda_1,\alpha}(\tcont)}{g_{\lambda_{\ell},\alpha}(\tcont)} \frac{\lambda_{\ell}}{\lambda_1} }{}
	    \le \cc \frac{n-\L'}{\lambda_1} \alpha^2,
	\end{align*}
	where we used that $\tilde{d}_{\lambda_1}^2(\tcont) \ge \frac{\lambda_1}{\cc}$, for all $\tcont \in I_3$, and the definition of $\L'$. The claim follows by assuming $\tcont \in I_1 \cap I_2 \cap I_3$. For $\L = 1$, it is clear that $I_1 = \mathbb{R}_+$ and \revision{$A_1 = 0$}.
\end{proof}

\subsection{Gradient Descent}
\label{subsec:gradientdescent}

After the simpler analysis of gradient flow, we now deduce  a corresponding statement for gradient descent from the results in Section~\ref{sec:AnalysisOfDynamics}. In contrast to Theorem~\ref{prop:EffectiveRankContinuousPSD}, it holds for a general number of layers $N \ge 2$. As before, we remark that the statement can be extended in a straight-forward but tedious way to handle general symmetric ground truths and additive noise.
%
%
\begin{theorem} \label{prop:EffectiveRankDiscretePSD}
    Let $\Wstar \in \mathbb{R}^{n\times n}$ be a symmetric ground-truth with eigenvalues $\lambda_1 \ge \cdots \ge \lambda_n \ge 0$ and assume that $\W_1(\tdisc),\dots,\W_N(\tdisc) \in \mathbb{R}^{n\times n}$ and $\W(\tdisc) = \W_N(\tdisc) \cdots \W_1(\tdisc)$ follow the gradient descent defined in \eqref{eq:MatrixFactorization_GradientDescent}, \eqref{eq:MatrixFactorization_IdenticalInitialization}, for $N \ge 2$. Let
    $\L \in [n]$ be fixed and assume $\lambda_{L+1} > 0$. Let
    $\varepsilon \in (0,1)$ and $\varepsilon' \in (0,c_N)$, where $c_N = \frac{N-1}{2N-1}$. Assume that the initialization parameter $\alpha$ in \eqref{eq:MatrixFactorization_IdenticalInitialization} satisfies $\alpha^N \leq \varepsilon' \lambda_{L+1}$ and that the stepsize satisfies
    $\eta < \left( (3N-2) \max\left\{\alpha^{N-2}, \lambda_1^{2-\frac{2}{N}}\right\}\right)^{-1}$.
    Define $\L' = \max\curly{ \l \in [n] \colon \varepsilon' \lambda_\l > \alpha^N }$, $L'' = \max\curly{ \l \in [n] \colon \lambda_\l > \alpha^N }$ and recall the definition of $T_N^{\text{Id}}$ in \eqref{def:TId} and $T_N^+$ in \eqref{def:T+-}.
    %
    %
    Let us abbreviate the time when the leading $L$ eigenvalues of $\Wstar$ are approximated up to $\varepsilon$ (relative to their respective magnitude) as
    \begin{align*}
        T_\mathrm{max}(\L,\varepsilon,\alpha,\eta) = \max_{\l \in [\L]} \; T_N^{\text{Id}} \round{\lambda_\l, \frac{\lambda_\l^{\frac{1}{N}}}{4 N} \epsilon ,\alpha,\eta}.
    \end{align*}
    Then, for $\tdisc$ satisfying
    \begin{align} \label{eq:EffectiveRankDiscretePSD}
        \max\curly{ T_N^{\text{Id}} \round{ \lambda_1,\frac{\lambda_1}{2},\alpha,\eta }, T_\mathrm{max} (\L,\varepsilon,\alpha,\eta) } \le \tdisc \le \frac{1}{\eta} T_N^+ \round{ \lambda_{\L+1}, (\epsilon' \lambda_{L+1})^{1/N}, 
        \alpha} 
    \end{align}
    we have
    \begin{align}
		\abs{r(\Wstar_\L) - r(W(\tdisc))}{} & \le
		\varepsilon r(\Wstar_\L) + \frac{2(L'-L)}{c_N}
        \frac{\lambda_{L+1}}{\lambda_1} \varepsilon'  + 
        \frac{2}{\lambda_1}\left( \sum_{\l=L'+1}^{L''} \lambda_\l + (n-L'') \alpha^N\right) \notag\\
        & \leq
        \varepsilon r(\Wstar_\L) + \frac{2(L'-L)}{c_N}
        \frac{\lambda_{L+1}}{\lambda_1} \varepsilon'
        +        (n-L')\frac{2\alpha^N}{ \varepsilon'\lambda_1}. \label{eq:bound-effective-rank}
	\end{align}
\end{theorem}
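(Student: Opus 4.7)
The plan is to mirror the proof of Theorem~\ref{prop:EffectiveRankContinuousPSD} but with the scalar dynamics analysis of Section~\ref{subsec:IdenticalInitialization} replacing the explicit continuous formulas used there. By Lemma~\ref{lemma:IdenticalInitialization_MatrixDynamics}, $W(\tdisc)$ is simultaneously diagonalisable with $\Wstar$ and has $l$-th eigenvalue $\d_l(\tdisc)^N$, where $\d_l$ denotes the solution of \eqref{eq:IdenticalInitialization_ScalarDynamics} with parameter $\lambda_l$; since $\lambda_l \ge 0$, Lemma~\ref{lemma:IdenticalInitialization_Convergence} keeps $\d_l(\tdisc) \ge 0$, so $W(\tdisc)$ is PSD. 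A preliminary step I would carry out first is a discrete analog of Lemma~\ref{lem:Monotonicity}: the map $\lambda \mapsto \d_\lambda(\tdisc)$ is non-decreasing for every $\tdisc$. This follows by a short induction using that the one-step map $x \mapsto x - \eta x^{N-1}(x^N - \lambda)$ is monotone in $\lambda$ and, under the stepsize assumption, monotone in $x$ on $[0, |\lambda|^{1/N}]$ (cf.~\eqref{ineq:g-deriv}). In particular $\|W(\tdisc)\| = \d_1(\tdisc)^N$.

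I would then split the error via the triangle inequality, $|r(\Wstar_L) - r(W(\tdisc))| \le A_1 + A_2 + A_3$, with
\begin{equation*}
    A_1 = \Bigl|\sum_{l=1}^L \tfrac{\lambda_l}{\lambda_1} - \tfrac{\d_l(\tdisc)^N}{\d_1(\tdisc)^N}\Bigr|, \quad
    A_2 = \sum_{l=L+1}^{L'} \tfrac{\d_l(\tdisc)^N}{\d_1(\tdisc)^N}, \quad
    A_3 = \sum_{l=L'+1}^n \tfrac{\d_l(\tdisc)^N}{\d_1(\tdisc)^N},
\end{equation*}
isolating the leading eigenvalues, the ``tail but not negligible'' contributions $l \in \{L+1, \dots, L'\}$, and the truly small contributions $l > L'$.

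For $A_1$, I would combine $\tdisc \ge T_{\max}(L, \epsilon, \alpha, \eta)$ with Theorem~\ref{theorem:IdenticalInitialization_ContinuousNonAsymptoticRate}, applied with accuracy parameter $\lambda_l^{1/N} \epsilon / (4N)$ for each $l \in [L]$ (admissible thanks to $\epsilon < 1$), to get $|\d_l(\tdisc)^N - \lambda_l| \le \lambda_l \epsilon / 4$; the additional condition $\tdisc \ge T_N^{\text{Id}}(\lambda_1, \lambda_1/2, \alpha, \eta)$ places $\d_1(\tdisc)^N$ at a fixed fraction of $\lambda_1$ of order $c_N \lambda_1$, controlling the denominator when the summands of $A_1$ are written over the common denominator $\lambda_1 \d_1(\tdisc)^N$. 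This yields $A_1 \le \epsilon\, r(\Wstar_L)$. For $A_2$, the upper bound on $\tdisc$ in \eqref{eq:EffectiveRankDiscretePSD} is precisely the number of discrete steps corresponding to the continuous time the gradient flow with parameter $\lambda_{L+1}$ needs to reach $(\epsilon' \lambda_{L+1})^{1/N}$; Lemma~\ref{lemma:ODE_Increasing_ConvexSolution} sandwiches the discrete iterate below the continuous flow, giving $\d_{L+1}(\tdisc)^N \le \epsilon' \lambda_{L+1}$, and the discrete $\lambda$-monotonicity propagates this bound to every $l \in \{L+1, \dots, L'\}$. For $A_3$, Lemma~\ref{lemma:IdenticalInitialization_Convergence} yields $\d_l(\tdisc) \le \max(\lambda_l^{1/N}, \alpha)$, so $\d_l(\tdisc)^N \le \lambda_l$ when $L' < l \le L''$ and $\d_l(\tdisc)^N \le \alpha^N$ when $l > L''$. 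Summing all three contributions and dividing by the lower bound on $\d_1(\tdisc)^N$ produces the first inequality in \eqref{eq:bound-effective-rank}; using $\lambda_l \le \alpha^N/\epsilon'$ uniformly for $l > L'$ (by definition of $L'$) collapses the tail into the coarser second bound.

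The main obstacle is not a single deep step but a careful simultaneous bookkeeping of constants: one must lower bound $\d_1(\tdisc)^N$ (via $T_N^{\text{Id}}$ at accuracy $\lambda_1/2$), upper bound the intermediate $\d_l(\tdisc)^N$ (via $T_N^+$ at $(\epsilon' \lambda_{L+1})^{1/N}$ combined with $\lambda$-monotonicity), and keep the leading eigenvalues at the prescribed relative accuracy (via $T_{\max}$), then match the resulting constants to those in \eqref{eq:bound-effective-rank}. Transferring the continuous time thresholds $T_N^+$ into discrete step counts rests on the continuous-discrete comparison in Lemma~\ref{lemma:ODE_Increasing_ConvexSolution}, which is the most delicate ingredient.
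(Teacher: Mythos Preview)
Your proposal is correct and follows the same three-term decomposition $A_1+A_2+A_3$ and the same ingredients (Theorem~\ref{thm:IdenticalInitialization_ConvergenceRate} for $A_1$, Lemma~\ref{lemma:ODE_Increasing_ConvexSolution} for $A_2$, Lemma~\ref{lemma:IdenticalInitialization_Convergence} for $A_3$) as the paper. The one noteworthy difference lies in $A_2$: the paper packages this step as an auxiliary Lemma~\ref{lemma:EffectiveRankEpsilonBound}, which uses the \emph{continuous} $\lambda$-monotonicity of Lemma~\ref{lem:Monotonicity} and then has to transfer to the discrete iterates separately for each $\ell\ge L+1$; because the comparison of Lemma~\ref{lemma:ODE_Increasing_ConvexSolution} is only available on the convex region $[\alpha,(c_N\lambda_\ell)^{1/N}]$, a case distinction is needed when $(\varepsilon'\lambda_{L+1})^{1/N}>(c_N\lambda_\ell)^{1/N}$, which is where the extra $c_N^{-1}$ in \eqref{eq:bound-effective-rank} comes from. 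Your discrete $\lambda$-monotonicity lemma avoids this case analysis altogether (you apply the continuous-to-discrete comparison once, for $\ell=L+1$, and then propagate) and in fact yields the slightly sharper $d_{\lambda_\ell}(\tdisc)^N\le \varepsilon'\lambda_{L+1}$. One small correction: the condition $\tdisc\ge T_N^{\text{Id}}(\lambda_1,\lambda_1/2,\alpha,\eta)$ is used in the paper to secure $d_{\lambda_1}(\tdisc)^N\ge \lambda_1/2$, not ``of order $c_N\lambda_1$''; this is exactly the source of the factor $2/\lambda_1$ in all three bounds.
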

\begin{remark}
    In order to obtain a simplified estimate, we can choose $\varepsilon'$
    and $\alpha$ depending on $\epsilon$ and some of the eigenvalues of $\widehat{W}$  such that all three terms in \eqref{eq:bound-effective-rank} are bounded by $\varepsilon r(\widehat{W}_L)$. Additionally, replacing $L'-L$ by its upper bound $n-L$ and $n-L'$ by its upper bound $n-L$ (so that $L'$ and $L''$ are not needed for the estimate), this gives the choices
    \begin{align}
    \varepsilon' &= c_N \min\left\{ \frac{\lambda_1}{\lambda_{L+1}} \frac{\varepsilon r(\widehat{W}_L) }{2(n-L)}, 1 \right\}, \notag\\
    \alpha & =  \min\left\{(\varepsilon'\lambda_1)^{\frac{1}{N}} \left(\frac{\varepsilon r(\widehat{W}_L)}{2(n-L)}\right)^{\frac{1}{N}}, \left(\varepsilon' \lambda_{L+1}\right)^{\frac{1}{N}}\right\} \label{choice:alpha}  
    \end{align}
    and the estimate
    \[
    \abs{r(\Wstar_\L) - r(W(\tdisc))}{} \le
		3\, \varepsilon\, r(\Wstar_\L). 
    \]
    Of course, we may additionally set $\varepsilon = \delta/3$ for some desired accuracy $\delta \in (0,1)$ to obtain $$\abs{r(\Wstar_\L) - r(W(\tdisc))}{} \leq \delta\, r(\Wstar_\L).$$
    Note in particular that \eqref{choice:alpha} gives an indication on how to choose $\alpha$ (smaller values of $\alpha$ may also be fine). In particular, we require small enough initialization in comparison with the spectral norm $\|\widehat{W}\|= \lambda_1$. 
    
    Let us also remark that the lower bound (time needed to approximate the leading $L$ eigenvalues) in \eqref{eq:EffectiveRankDiscretePSD} may become larger than the upper bound (time in which the remaining eigenvalues of $\W(\tdisc)$ stay small) for certain parameter choices, i.e., the time interval of values $\tdisc$ for which the theorem can make a statement becomes empty. This is to be expected if there is no gap between $\lambda_L$ and $\lambda_{L+1}$ since then $r(W(\tdisc))$ never comes arbitrarily close to $r(\widehat{W}_L)$ and the time interval is indeed empty, for small $\varepsilon$. Figure~\ref{fig:EffRank_Intro}, however, shows that the theorem does provide non-empty time-intervals in relevant situations.
\end{remark}
\begin{figure}[!htb]
\centering
\begin{subfigure}[c]{0.45\textwidth}
\includegraphics[width=\textwidth]{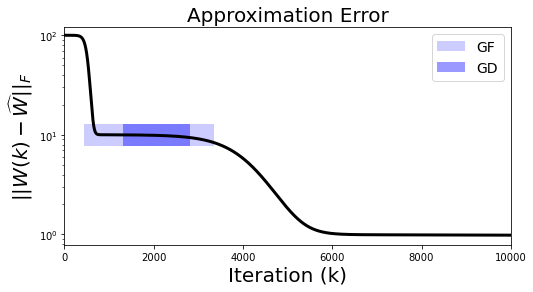}
\subcaption{Approximation error with noise.}
\label{fig:EffRankNoise_GDA}
\end{subfigure}
\quad
\begin{subfigure}[c]{0.45\textwidth}
\includegraphics[width=\textwidth]{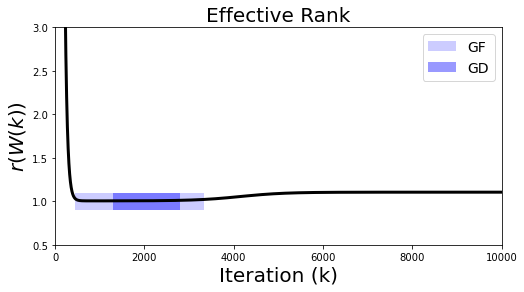}
\subcaption{Effective rank with noise.}
\label{fig:EffRankNoise_GDB}
\end{subfigure}
\caption{Above is the dynamics of gradient descent \eqref{eq:MatrixFactorization_GradientDescent}-\eqref{eq:MatrixFactorization_IdenticalInitialization}, where $N=2$, $n=200$, $\alpha = 5\cdot 10^{-2}$, $\eta = 10^{-4}$. $\Wstar\in\mathbb{R}^{n\times n}$ is a rank $3$ symmetric matrix with leading eigenvalues $(\lambda_1,\lambda_2,\lambda_3) = (100,10,1)$. The light (GF) and dark (GD) shaded regions are the predictions from Theorem \ref{prop:EffectiveRankContinuousPSD} and Theorem \ref{prop:EffectiveRankDiscretePSD}, respectively, where the best rank $L$ approximation of $\Wstar$ lies. Here we take $\epsilon = 3\cdot 10^{-2}$, $\epsilon' = 6.4\cdot 10^{-2}$, $L'=3$, $C=17$, which makes the prediction error (both the right hand side of \eqref{eq:EffectiverRankContinuousPSD} and \eqref{eq:bound-effective-rank}) less than $10^{-1}$. We can see that Theorem \ref{prop:EffectiveRankDiscretePSD} gives a more restrictive but arguably more accurate prediction.}
\label{fig:EffRankSection2}
\end{figure}
\begin{remark}
    The estimate \eqref{eq:EffectiveRankDiscretePSD} in Theorem \ref{prop:EffectiveRankDiscretePSD} for the time interval where the effective rank of $W(\tdisc)$ is close to the one of $\widehat{W}_L$ is slightly weaker than the ones in Theorem \ref{prop:EffectiveRankContinuousPSD} and depends in terms of quality on the step-size $\eta$.
    Since gradient flow is at the core of the gradient descent analysis, the bounds on gradient descent are more accurate if gradient descent stays close to its continuous flow, which is rather the case for small choices of $\eta$ than for large ones. To make this more precise, for $\eta$ small, the gap between necessary and sufficient iteration bounds in Theorem \ref{thm:IdenticalInitialization_ConvergenceRate} shrinks and the prediction accuracy improves.
    Figure \ref{fig:EffRankNoise_GDB} shows that Theorem \ref{prop:EffectiveRankDiscretePSD} yields an accurate description of the effective rank behavior as long as the step-size is chosen sufficiently small and the eigenvalue gap between $\lambda_\L$ and $\lambda_{\L+1}$ is sufficiently large to guarantee that the feasible region in \eqref{eq:EffectiveRankDiscretePSD} is non-empty.
\end{remark}
To prove Theorem \ref{prop:EffectiveRankDiscretePSD}, we need the following lemma which is a direct consequence of the considerations in Theorem \ref{thm:IdenticalInitialization_ConvergenceRate}. Recall $\zeta_{\lambda} = (c_N \lambda)^{\frac{1}{N}} = \sqrt[N]{\lambda \frac{N-1}{2N-1}}$ appearing in the proof of Theorem \ref{thm:IdenticalInitialization_ConvergenceRate} (inflection point of eigenvalue dynamics) and define $\d_\lambda$ as the discrete dynamic following \eqref{eq:IdenticalInitialization_ScalarDynamics}, i.e.,
\begin{equation} \label{eq:IdenticalInitialization_ScalarDynamics_Repeated}
    \d_\lambda(\tdisc+1) = \d_\lambda(\tdisc) - \eta \d_\lambda(\tdisc)^{N-1}(\d_\lambda(\tdisc)^N-\lambda),
    \quad \d_\lambda(0) = \alpha >0.
\end{equation}
\begin{lemma} \label{lemma:EffectiveRankEpsilonBound}
    Under the assumptions of Theorem \ref{prop:EffectiveRankDiscretePSD}, if $\tdisc \le \frac{1}{\eta} T_N^+(\lambda_{\L+1},(\varepsilon' \lambda_{L+1})^\frac{1}{N},\alpha)$, then, for $\ell \geq L+1$, the eigenvalues $\d_{\lambda_\ell}(\tdisc)^N$ of $W(\tdisc)$, whose $N$-th root dynamics are described by \eqref{eq:IdenticalInitialization_ScalarDynamics_Repeated}, satisfy $$d_{\lambda_\l}(\tdisc)^N \le c_N^{-1} \varepsilon' \lambda_{L+1} \leq 3 \varepsilon' \lambda_{L+1}.$$
\end{lemma}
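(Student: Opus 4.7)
The plan is to control $d_{\lambda_\ell}(\tdisc)$ for $\ell \geq L+1$ by comparing the discrete dynamics to the continuous gradient flow $y_{\lambda_\ell}$, relying on the monotonicity of $y_\lambda$ in $\lambda$ from Lemma \ref{lem:Monotonicity} together with the domination result in Lemma \ref{lemma:ODE_Increasing_ConvexSolution}. Set $\mu := (\varepsilon' \lambda_{L+1})^{1/N}$ and $\tdisc_0 := \frac{1}{\eta} T_N^+(\lambda_{L+1},\mu,\alpha)$, and recall $\zeta_\lambda = (c_N \lambda)^{1/N}$. The assumptions $\varepsilon' < c_N$ and $\alpha^N \leq \varepsilon' \lambda_{L+1}$ give $\alpha \leq \mu < \zeta_{\lambda_{L+1}} \leq \lambda_{L+1}^{1/N}$, so $y_{\lambda_{L+1}}$ is strictly increasing and convex on $[0,\eta \tdisc_0]$, takes values in $[\alpha,\mu]$ there, and reaches $\mu$ precisely at $t = \eta \tdisc_0$ by Lemma \ref{lemma:IdenticalInitialization_ContinuousScalarDynamics}.

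I would split the tail indices $\ell \geq L+1$ into two cases. If $\lambda_\ell < c_N^{-1}\varepsilon' \lambda_{L+1}$, the bound follows directly from Lemma \ref{lemma:IdenticalInitialization_Convergence}: either $\lambda_\ell < \alpha^N$, in which case $d_{\lambda_\ell}(\tdisc) \leq \alpha$ and hence $d_{\lambda_\ell}(\tdisc)^N \leq \alpha^N \leq \varepsilon'\lambda_{L+1}$; or $\alpha^N \leq \lambda_\ell$, in which case $d_{\lambda_\ell}(\tdisc) \leq \lambda_\ell^{1/N}$ gives $d_{\lambda_\ell}(\tdisc)^N \leq \lambda_\ell < c_N^{-1}\varepsilon' \lambda_{L+1}$.

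If $\lambda_\ell \geq c_N^{-1}\varepsilon' \lambda_{L+1}$, which forces both $\lambda_\ell \geq \alpha^N$ and $\zeta_{\lambda_\ell} \geq \mu$, I would apply Lemma \ref{lemma:ODE_Increasing_ConvexSolution} to $y_{\lambda_\ell}$ with $f(x) = -x^{N-1}(x^N - \lambda_\ell)$ on the convex interval $I = [\alpha, \zeta_{\lambda_\ell}]$. The stepsize hypothesis of Theorem \ref{prop:EffectiveRankDiscretePSD} (together with $\lambda_\ell \leq \lambda_1$) is exactly what was used in the proof of Theorem \ref{thm:IdenticalInitialization_ConvergenceRate} to verify $f f' \geq 0$ and $\eta |f'| < 1$ on $I$. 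By Lemma \ref{lem:Monotonicity} and the previous paragraph, $y_{\lambda_\ell}(t) \leq y_{\lambda_{L+1}}(t) \leq \mu \leq \zeta_{\lambda_\ell}$ for $t \in [0, \eta \tdisc_0]$, so $y_{\lambda_\ell}(\eta \tdisc) \in I$ throughout this window and Lemma \ref{lemma:ODE_Increasing_ConvexSolution} yields $d_{\lambda_\ell}(\tdisc) \leq y_{\lambda_\ell}(\eta \tdisc) \leq \mu$, whence $d_{\lambda_\ell}(\tdisc)^N \leq \varepsilon'\lambda_{L+1}$. Combining both cases with $c_N^{-1} = \tfrac{2N-1}{N-1} \leq 3$ for $N \geq 2$ gives the claim.

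The main obstacle I anticipate is the uniform bookkeeping required to invoke Lemma \ref{lemma:ODE_Increasing_ConvexSolution} simultaneously for all tail indices under a single stepsize condition: concretely, one must check that the bounds $K_1, K_2$ for $f$ and $f'$ over $[\alpha, \zeta_{\lambda_\ell}]$ are dominated by those over $[\alpha, \lambda_1^{1/N}]$ so that $\eta < 1/K_2$ holds for every $\ell$ under the hypothesis $\eta < \left( (3N-2)\max\{\alpha^{N-2}, \lambda_1^{2-2/N}\}\right)^{-1}$.
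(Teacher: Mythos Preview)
Your proposal is correct and follows essentially the same route as the paper's proof: the same case split on whether $\lambda_\ell \geq c_N^{-1}\varepsilon'\lambda_{L+1}$ (equivalently $\mu \leq \zeta_{\lambda_\ell}$), the same use of Lemma~\ref{lemma:IdenticalInitialization_Convergence} in the small-$\lambda_\ell$ case, and the same combination of Lemma~\ref{lemma:ODE_Increasing_ConvexSolution} with the monotonicity Lemma~\ref{lem:Monotonicity} in the large-$\lambda_\ell$ case. The only cosmetic difference is that the paper first treats $\ell = L+1$ separately before handling $\ell \geq L+2$, whereas you (correctly) absorb $\ell = L+1$ into your Case~2; the arguments coincide.
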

\begin{proof}
   We consider the continuous version of $d_\lambda$, denoted in this section by $\tilde{\d}_\lambda$, see \eqref{eq:IdenticalInitialization_ContinuousScalarDynamics_Repeated}. Note that $\alpha < (\varepsilon' \lambda_{L+1})^{1/N} \leq (c_N \lambda_{L+1})^{1/N} = \zeta_{\lambda_{L+1}}$. As in the proof of Theorem~\ref{thm:IdenticalInitialization_ConvergenceRate} for the case $\alpha < \zeta$ it follows from Lemma~\ref{lemma:ODE_Increasing_ConvexSolution} with $I = [\alpha,\zeta]$ that
   $d_{\lambda_{L+1}}(\tdisc) \leq \tilde{\d}_{\lambda_{L+1}}(\eta \tdisc)$. Since $\frac{1}{\eta} T_N^+(\lambda,\epsilon,\alpha)$ in \eqref{def:T+-} defines the time $\tdisc$ when $\tilde{\d}_{\lambda}(\eta \tdisc)$ first exceeds $\epsilon$, we have $d_{\lambda_{L+1}} \leq \tilde{\d}_{\lambda_{L+1}}(\eta \tdisc) \leq (\varepsilon' \lambda_{L+1})^{1/N}$ as long as $\tdisc \leq \frac{1}{\eta} T_N^+(\lambda_{L+1},(\varepsilon \lambda_{L+1})^{1/N},\alpha)$.  
   
   Now consider $\ell \geq L+2$ (if such $\ell$ exists). First observe that Lemma \ref{lem:Monotonicity} implies by $\lambda_{\ell} \leq \lambda_{L+1}$ that 
   \begin{equation}\label{eq:lambda-monotone}
   \tilde{d}_{\lambda_\ell}(\eta \tdisc) \leq \tilde{d}_{\lambda_{L+1}}(\eta \tdisc) \quad \mbox{  for all }\tdisc \in \mathbb{N}_0. \end{equation}
   Let us first consider the case that $(\epsilon' \lambda_{L+1})^{1/N} \leq (c_N \lambda_{\ell})^{1/N}= \zeta_{\lambda_{\ell}}$.
   This means by \eqref{eq:lambda-monotone}
   (and $\tilde{\d}_{\lambda_\ell}(0) = \alpha$ and monotonicity of $\tilde{\d}$) that $\tilde{\d}_{\lambda_\ell}(\eta k) \in [\alpha, \zeta_{\lambda_\ell}]$
   for all $\tdisc \leq \frac{1}{\eta} T_N^+(\lambda_{L+1},(\varepsilon \lambda_{L+1})^{1/N},\alpha)$ so that using again Lemma~\ref{lemma:ODE_Increasing_ConvexSolution} as in the proof of Theorem~\ref{thm:IdenticalInitialization_ConvergenceRate} gives $d_{\lambda_\ell}(\tdisc) \leq \tilde{\d}_{\lambda_\ell}(\eta k) \leq \tilde{\d}_{\lambda_{L+1}}(\eta k) \leq (\varepsilon' \lambda_{L+1})^{1/N}$ for all $\tdisc \leq \frac{1}{\eta} T_N^+(\lambda_{L+1},(\varepsilon \lambda_{L+1})^{1/N},\alpha)$. 
   
   In the case that $(\varepsilon' \lambda_{L+1})^{1/N} > (c_N \lambda_\ell)^{1/N}$  Lemma~\ref{lemma:IdenticalInitialization_Convergence} gives
   $$0 < d_{\lambda_{\ell}}(\tdisc) \leq \max\{\alpha, \lambda_\ell^{1/N}\} \leq \max\left\{(\varepsilon' \lambda_{L+1})^{1/N}, \left(\frac{\varepsilon'}{c_N} \lambda_{L+1}\right)^{1/N}\right\} = \left(\frac{\varepsilon'}{c_N} \lambda_{L+1}\right)^{1/N}.$$
   This concludes the proof.
\end{proof}
\begin{proof}[Proof of Theorem \ref{prop:EffectiveRankDiscretePSD}]
   The proof mainly relies on Theorem \ref{thm:IdenticalInitialization_ConvergenceRate} characterizing the evolution of eigenvalues of $\W(\tdisc) = \W_N(\tdisc) \cdots \W_1(\tdisc)$ by $\d_\lambda$ in \eqref{eq:IdenticalInitialization_ScalarDynamics_Repeated}.
   As in the proof of Theorem \ref{prop:EffectiveRankContinuousPSD}, we decompose the difference as
   \begin{align*}
       \abs{r(\Wstar_\L) - r(W(\tdisc))}{} 
       &\le 
       \underbrace{\abs{r(\Wstar_\L) - r(W(\tdisc)_\L)}{}}_{=:A_1} + \underbrace{\abs{r(W(\tdisc)_\L) - r(W(\tdisc)_{\L'})}{}}_{=:A_2} + \underbrace{\abs{r(W(\tdisc)_{\L'}) - r(W(\tdisc))}{}}_{=:A_3}. 
   \end{align*}
   First let $\ell \in [L]$.
   Since $\alpha^N < \varepsilon' \lambda_{L+1} < \lambda_\ell$, Lemma~\ref{lemma:IdenticalInitialization_Convergence} yields $0 < \d_{\lambda_\l}^N(\tdisc) < \lambda_\l$ for all $\tdisc \in \mathbb{N}_0$,
   so that 
   \begin{align*}
       \abs{ \lambda_\l - d_{\lambda_\l}^N(\tdisc) }{}
       &= \abs{ \lambda_\l^\frac{1}{N} - d_{\lambda_\l}(\tdisc) }{} \abs{ \sum_{\l = 1}^{N}\lambda_\l^{1-\frac{\l}{N}}d_{\lambda_\l}^{\l-1} }{} 
       < N 
       \lambda_\ell^{1-\frac{1}{N}} \abs{ \lambda_\l^\frac{1}{N} - d_{\lambda_\l}(\tdisc) }{}. 
   \end{align*}
   Theorem \ref{thm:IdenticalInitialization_ConvergenceRate} 
   implies that for $\tdisc \ge T_N^\text{Id} (\lambda_\l,\tilde{\epsilon}_\l,\alpha,\eta)$ with $\tilde{\epsilon}_\l = \epsilon \lambda_\l^{\frac{1}{N}}/(4N)$ we have $\abs{ \lambda_\l^\frac{1}{N} - d_{\lambda_\l}(\tdisc) }{} \leq \tilde{\epsilon}_\l$, so that
   $\abs{ \lambda_\l - d_{\lambda_\l}^N(\tdisc) }{} < N \lambda_\l^{1-\frac{1}{N}} \tilde{\epsilon}_\ell = \varepsilon \lambda_\l/4$. Hence, for $\tdisc \ge \max\curly{ T_N^\text{Id} (\lambda_1, \lambda_1/2,\alpha,\eta), T_\mathrm{max} (\L, \epsilon,\alpha,\eta) }$ 
   we obtain
   \begin{align*}
       \abs{ \frac{ \lambda_\l }{ \lambda_1 } - \frac{ d_{\lambda_\l}^N(\tdisc) }{ d_{\lambda_1}^N(\tdisc) } }{}
       &= \abs{ \frac{ \lambda_\l ( d_{\lambda_1}^N(\tdisc) - \lambda_1 ) + ( \lambda_\l - d_{\lambda_\l}^N(\tdisc) ) \lambda_1 }{ \lambda_1 d_{\lambda_1}^N(\tdisc) } }{} \\
       &\le \frac{\lambda_\l}{\lambda_1} \round{ \abs{d_{\lambda_1}^N(\tdisc) - \lambda_1}{} \frac{1}{ d_{\lambda_1}^N(\tdisc) } + \abs{ \lambda_\l - d_{\lambda_\l}^N(\tdisc) }{} \frac{\lambda_1}{\lambda_\l d_{\lambda_1}^N(\tdisc)} } \\
       &\le \frac{\lambda_\l}{\lambda_1} \round{ \abs{d_{\lambda_1}^N(\tdisc) - \lambda_1}{} \frac{2}{ \lambda_1 } + \abs{ \lambda_\l - d_{\lambda_\l}^N(\tdisc) }{} \frac{2}{\lambda_\l} } 
       \le \frac{\lambda_\l}{\lambda_1} \varepsilon,
   \end{align*}
   where we used $\tdisc \ge T_N^\text{Id} (\lambda_1, \lambda_1/2,\alpha,\eta)$ in the second inequality and $\tdisc \ge T_\mathrm{max} (\L, \epsilon,\alpha,\eta)$ in the last one. This yields
   \begin{align*}
       A_1 \le \sum_{\l=1}^\L \abs{ \frac{ \lambda_\l }{ \lambda_1 } - \frac{ d_{\lambda_\l}^N(\tdisc) }{ d_{\lambda_1}^N(\tdisc) } }{} \leq \frac{\epsilon}{\lambda_1}\sum_{\l=1}^L \lambda_\l = 
        \varepsilon \; r(\Wstar_\L).
   \end{align*}
   Let us now consider $A_2$. 
   For $T_N^\text{Id} (\lambda_1, \lambda_1/2,\alpha,\eta) \le \tdisc \le \frac{1}{\eta}T_N^+(\lambda_{\L+1}, (\varepsilon' \lambda_{L+1})^{\frac{1}{N}},\alpha)$ 
   Lemma \ref{lemma:EffectiveRankEpsilonBound} yields
   \begin{align*}
       A_2 \le \sum_{\l=\L+1}^{\L'} \frac{ d_{\lambda_\l}^N(\tdisc) }{ d_{\lambda_1}^N(\tdisc) } 
       \le
       \frac{2}{\lambda_1} \sum_{\l=\L+1}^{\L'} d_{\lambda_\l}^N(\tdisc) 
       \le
        \frac{2(L'-L)}{c_N}
        \frac{\lambda_{L+1}}{\lambda_1} \varepsilon' .
   \end{align*}
   Finally, assume $\tdisc \ge T_N^\text{Id} (\lambda_1, \lambda_1/2,\alpha,\eta)$ (implying $d_{\lambda_1}(\tdisc)^N \geq \lambda_1/2$) and
   $\l > L'$.
   Lemma~\ref{lemma:IdenticalInitialization_Convergence} gives $0 < d_{\lambda_\l} \leq \max\{\alpha,\lambda_\ell^{1/N}\}$, hence for $L' < \l \leq L''$ (only applicable if $L'' > L$) we have $\varepsilon' \lambda_\l < \alpha^N \leq \lambda_\l$ 
   so that $d_{\lambda_\l}(\tdisc)^N \leq \lambda_\l \leq \alpha^N/\varepsilon'$ and for
   $\l > L''$ (only applicable if $\lambda_n < \alpha^N$) we have
   $d_{\lambda_\l}(\tdisc)^N \leq \alpha^N$. This yields
	\begin{align*}
	    A_3 = \sum_{\l=\L'+1}^{n} \frac{ \d_{\lambda_\l}^N(\tdisc)}{ \d_{\lambda_1}^N(\tdisc)} \le 
	  \frac{2}{\lambda_1}\left( \sum_{\l=L'+1}^{L''} \lambda_\l + (n-L'') \alpha^N\right) 
	    \leq \frac{2}{\lambda_1}(n-L') \frac{\alpha^N}{\varepsilon'}.
	\end{align*}
   This concludes the proof.
\end{proof}

\subsection{Our work in light of \cite{Gissin2019Implicit}}
\label{sec:Comparison}

Theorems \ref{prop:EffectiveRankContinuousPSD} and \ref{prop:EffectiveRankDiscretePSD} only make a non-trivial claim if $\alpha$, $\varepsilon$, and $\eta$ are chosen in a way such that $I_1 \cap I_2 \cap I_3 \neq \emptyset$ (resp.\ the set of valid choices for $\tdisc$ in \eqref{eq:EffectiveRankDiscretePSD} is non-empty). In \cite[Theorems 2 \& 3]{Gissin2019Implicit} the authors characterize, for any \HR{pair of eigenvalues} $\lambda_i,\lambda_j$ of a positive semi-definite $\Wstar$ with $i,j \in [n]$, the maximal choice of $\alpha$ such that $\lambda_i$ and $\lambda_j$ are well-approximated at distinguishable times. Applying these results to $\lambda_L$ and $\lambda_{L+1}$, we thus can get a priori a necessary condition on $\alpha$ for the existence of the $\L$-th effective rank plateau. Note, however, that the result on gradient descent \cite[Theorem 3]{Gissin2019Implicit} only holds for the case $N=2$. Although there is a partial overlap of theory between \cite{Gissin2019Implicit} and our work, the main difference of \cite[Theorems 2 \& 3]{Gissin2019Implicit} and our Theorems \ref{prop:EffectiveRankContinuousPSD} \& \ref{prop:EffectiveRankDiscretePSD} is that the former answer the question \emph{whether} a plateau exists, whereas the latter characterize \emph{the time at which} the plateaus occur if they exist.

\section{Numerical Simulations}
\label{sec:Numerics}
We have already demonstrated numerical results for our exact setting in Fig.~\ref{fig:IdenticalVSPerturbed} (effects of perturbation), Fig.~\ref{fig:IdenticalwithDifferentN} (effects of number of layers), Fig.~\ref{fig:Tfigure} (accuracy of the prediction of a single eigenvalue), Fig.~\ref{fig:EffRankSection1} (accuracy of low rank approximation), and Fig.~\ref{fig:EffRankSection2} (difference between gradient flow and gradient descent).

In this section, we would like to numerically explore whether our findings also hold in more general situations. We demonstrate the impact of implicit bias and the "waterfall" behavior of gradient descent on de-noising of real data. It should be noted that this experiment does not fully lie in the scope of the theory presented in the paper since the setting is not symmetric and the initialization is random. It shall illustrate generalizability of our findings. \\
We consider an example from the MNIST-dataset \cite{mnist}. MNIST consists of images of handwritten digits from one to nine. All images have a resolution of $28\times28$ and each of the $28^2 = 784$ pixels takes values in $\{0,\dots, 255\}$. For our simulation, we take 100 pictures of ones from the MNIST-dataset and create a matrix 
\begin{equation*}
    \W_\text{LR} \in \{0,..., 255\}^{100 \times 784}
\end{equation*}
in which each row is a vectorized MNIST-one. We run gradient descent with factorization depth $N=1, 2, 3, 4$ on a noisy version $\Wstar = \W_\text{LR} + \Xi$ of the ground truth. We consider uniform noise, i.e., the matrix $\Xi$ satisfies
\begin{equation*}
    \Xi_{k,l} \stackrel{\text{iid}}{\sim} \mathcal{U}\big(\{-255,255\}\big)\;.
\end{equation*}
Moreover, the factorizations are initialized by zero mean Gaussian random matrices
\begin{equation*}
    \W_1(0)\in\mathbb{R}^{100\times 784}\;\text{ and }\;\W_j(0)\in\mathbb{R}^{784\times 784} \; j\ge 2
\end{equation*}
with small standard deviation $\sigma = \frac{1}{\sqrt{n}}$ (so that the expected norm of initialization $\mathbb{E}\norm{\W(0)}{F}
^2= k n^N \sigma^{2N} = k$ remains constant for all $N$). The step-size is chosen as $\eta = 10^{-7}$. Gradient descent is run on each factor matrix $\W_j$ as described in Section \ref{sec:Introduction} until a loss smaller than $10^{-5}\|\Wstar\|_F^2$ is reached.

Figures \ref{fig: MNIST: Noise: SingularValues}-\ref{fig: MNIST-Ones N=3} illustrate setting and outcome of the experiment. Selected rows of $\W_\text{LR}$ and $\Wstar = \W_\text{LR} + \Xi$ (reshaped to $28\times 28$-pixel images) are depicted in Figure \ref{fig: MNIST: OriginalAndNoisyOne}. Figure \ref{fig: MNIST: Noise: SingularValues} shows that the singular values of the end-to-end iterates $\W(\tdisc)$ show several properties derived in the theory of this paper. We observe that deeper factorization indeed provokes sharper transition. Here deeper factorization converges faster because the leading eigenvalues are very large.

The behavior of the training error (Figure \ref{fig: MNIST: Noise: Error}), generalization error (Figure \ref{fig: MNIST: Noise: GenError}) and the effective rank (Figure \ref{fig: MNIST: Noise: Effective Rank}) support the theory we provided in a more restricted setting in Section \ref{sec:EffectiveRank}. Note that due to very different scale, the case $N=1$ is illustrated with an individual axis in each of the plots. As Figure \ref{fig: MNIST: Noise: Effective Rank} shows, using factorizations (i.e. $N=2,3,4$) yields low-rank iterates at an early stage of optimization, while for $N=1$ all iterates have a high rank. Consequently, and in accordance with Section \ref{sec:EffectiveRank} the factorized versions allow de-noising when stopped at an appropriate time.
This is not the case for the non-factorized version $N=1$.\\
The last point becomes particularly clear when comparing Figures \ref{fig: MNIST-Ones N=1}-\ref{fig: MNIST-Ones N=3} depicting one row of $\W(\tdisc)$ (reshaped to $28\times 28$-pixels) for different $\tdisc\in\mathbb{N}$ and $N$. The values of $\tdisc$ are chosen according to Figure \ref{fig: MNIST: Noise: Effective Rank} such that the different regimes w.r.t to the effective rank are represented.

\begin{figure}
\begin{subfigure}{.5\textwidth}
  \centering
  \includegraphics[width=1\linewidth]{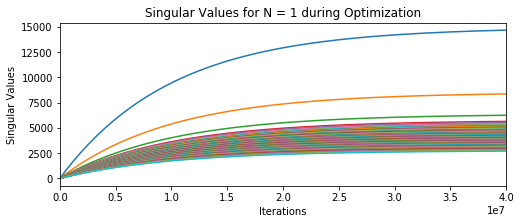}
  \caption{Singular Value Dynamics for $N=1$.}
  \label{fig: MNIST: Noise: SingularValues: N=1}
\end{subfigure}
\begin{subfigure}{.5\textwidth}
  \centering
  \includegraphics[width=1\linewidth]{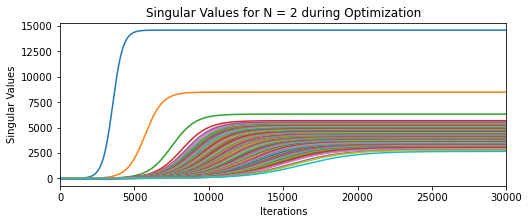}
  \caption{Singular Value Dynamics for $N=2$.}
  \label{fig: MNIST: Noise: SingularValues: N=2}
\end{subfigure}
\begin{subfigure}{.5\textwidth}
  \centering
  \includegraphics[width=1\linewidth]{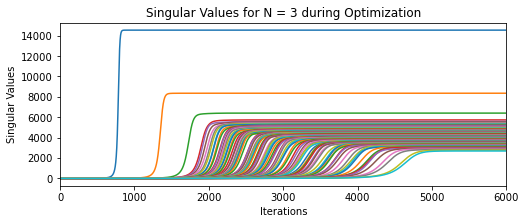}
  \caption{Singular Value Dynamics for $N=3$.}
  \label{fig: MNIST: Noise: SingularValues: N=3}
\end{subfigure}
\begin{subfigure}{.5\textwidth}
  \centering
  \includegraphics[width=1\linewidth]{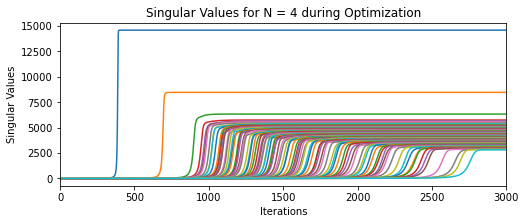}
  \caption{Singular Value Dynamics for $N=4$.}
  \label{fig: MNIST: Noise: SingularValues: N=4}
\end{subfigure}
\caption{MNIST: Illustration of singular calues of $\W(\tdisc)$ during optimization. Note that deeper factorization makes the convergence of each eigenvalue sharper, resulting in more distinguishable dynamics between eigenvalues.}
\label{fig: MNIST: Noise: SingularValues}
\end{figure}

\begin{figure}
\centering
\begin{subfigure}{.5\textwidth}
  \centering
\includegraphics[width=1\linewidth]{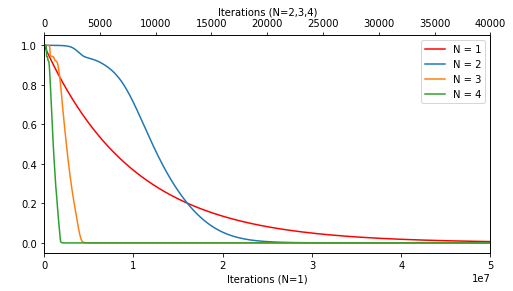}
  \caption{ Relative Loss $\| \W(\tdisc)-\Wstar \|_{F} / \| \Wstar \|_{F}$.}
  \label{fig: MNIST: Noise: Error}
\end{subfigure}%
\begin{subfigure}{.5\textwidth}
  \centering
\includegraphics[width=1\linewidth]{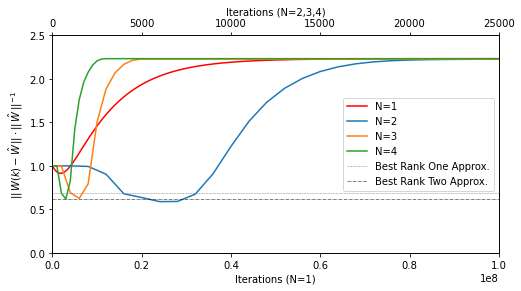}
  \caption{ Rel. Approx. Error $\norm{\W(\tdisc)-\W_\text{LR}}{F}/ \norm{\W_\text{LR}}{F} $.}
  \label{fig: MNIST: Noise: GenError}
\end{subfigure}%

\begin{subfigure}{.5\textwidth}
  \centering
  \includegraphics[width=1\linewidth]{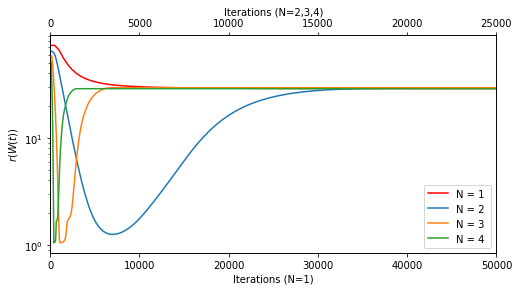}
  \caption{ Effective Rank $r\big(\W(\tdisc)\big)$, c.f.\ Section \ref{sec:EffectiveRank}. }
  \label{fig: MNIST: Noise: Effective Rank}
\end{subfigure}
\begin{subfigure}{.45\textwidth}
  \centering
  \includegraphics[width=1\linewidth]{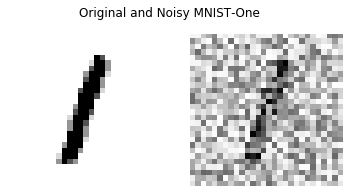}
  \caption{Original and noisy MNIST-One.}
  \label{fig: MNIST: OriginalAndNoisyOne}
\end{subfigure}

\caption{MNIST: Different properties during Optimization} \label{fig: MNIST: Different properties during Optimization} {\small 
The figures (a)-(c) illustrate different properties of the gradient descent iterates during optimization. Note that the properties for the case $N=1$ use a different axis than the cases $N=2,3,4$. Sub-figure (b) includes two different de-noising approaches as benchmarks: the best rank-one and rank-two approximation of $\Wstar$ (under all best rank approximations of $\Wstar$, the rank-two approximation proved to be closest to $\Wstar$ in Frobenius metric). Sub-figure (d) illustrates a MNIST-One and a noisy version.}
\end{figure}

\begin{figure}
    \centering
    \includegraphics[width=1\linewidth]{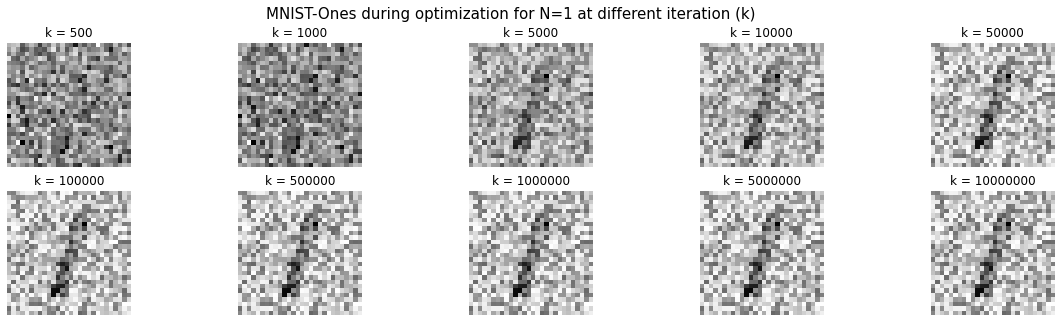}
    \caption{A row of $W(\tdisc)$ reshaped into $28\times 28$ for different $\tdisc$ and $N=1$. Compared to the case of $N=2$ and $N=3$, as shown in figure \ref{fig: MNIST-Ones N=2} and \ref{fig: MNIST-Ones N=3}, the noise remain large throughout iterations. Note that it converges much slower than $N=2$ and $N=3$.}
    \label{fig: MNIST-Ones N=1}
\end{figure}

\begin{figure}
    \centering
    \includegraphics[width=1\linewidth]{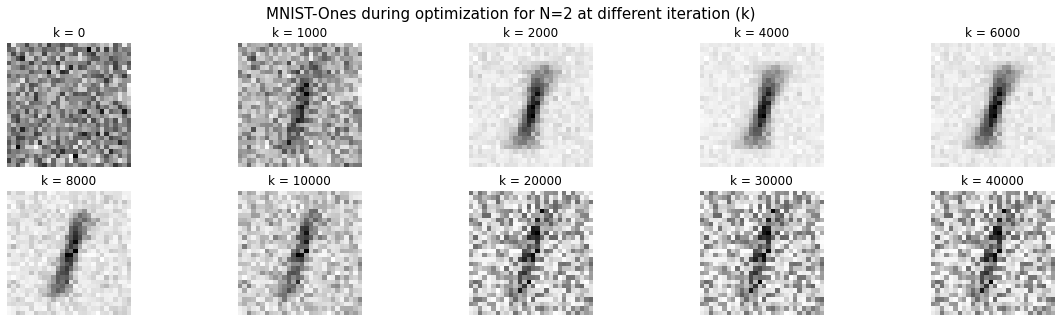}
    \caption{A row of $W(\tdisc)$ reshaped into $28\times 28$ for different $\tdisc$ and $N=2$. Compared to figure \ref{fig: MNIST-Ones N=1}, the noise is reduced significantly in the middle of training, but then increases later on. Note that the convergence rate is faster than $N=1$ but slower than $N=3$.}
    \label{fig: MNIST-Ones N=2}
\end{figure}

\begin{figure}
    \centering
    \includegraphics[width=1\linewidth]{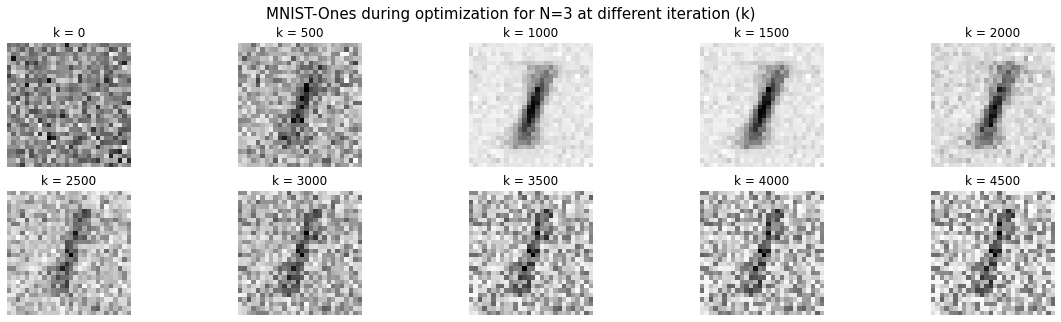}
    \caption{A row of $W(\tdisc)$ reshaped into $28\times 28$ for different $\tdisc$ and $N=3$. It exhibit similar behavior as in figure \ref{fig: MNIST-Ones N=2}, but at different intervals. Here the image is clear for $\tdisc\in[1000,2000]$, while in figure \ref{fig: MNIST-Ones N=2} the image is clear $\tdisc\in[2000,8000]$.}
    \label{fig: MNIST-Ones N=3}
\end{figure}



\section{Discussion}
\label{sec:Discussion}

In this paper we approached in a simplified setting the self-regularizing effect of gradient descent in multi-layer matrix factorization problems. For symmetric ground-truths, we analyzed the dynamics of gradient descent and its underlying continuous flow, and explicitly characterized the effective rank of gradient descent/flow iterates in dependence of model parameters like the spectrum of the ground truth matrix and number of layers of the factorization. In particular, we proved that early stopping of gradient descent produces effectively low-rank solutions. Numerical simulations both on toy and real data validated our theory. Viewing matrix factorization as training of a linear neural network, we believe that our results yield valuable insights in the implicit low-rank regularization of gradient descent observed in recent deep learning research. Extending the theory to more general settings should help to enlighten the implicit bias phenomenon of gradient descent.

We envision several directions for potential future work. First, we expect similar results for non-symmetric and rectangular ground-truths by using the singular value instead of the eigenvalue decomposition. Extending the theory accordingly, however, requires additional work on a technical level.

Second, a widely used initialization for gradient descent is to randomly and independently draw the entries of the initial gradient descent estimate from a Gaussian distribution, cf.\ Xavier initialization in deep neural networks \cite{glorot2010understanding}. Compared to the initializations we considered, numerical simulations suggest that random initialization converges faster but exhibits a more complicated behavior which heavily depends on the initialization variance $\alpha^2$, i.e. $\W_j(0)=\alpha_j I$ where $\alpha_j$ are i.i.d. with $\mathbb{E}(\alpha_j)= 0$ and $\mathbb{E}(\alpha_j^2)= \alpha^2$. In our experiment we take $\alpha_j$ to be Gaussian. For instance, Figure \ref{fig:RandInt} shows that, for $\sigma$ large, the order of eigenvalue approximation is not determined by the eigenvalue magnitudes, thus indicating that the above described phenomena of (effective) rank approximation do not hold in this case; in contrast, for $\sigma$ small, we recognize the dynamics to be similar to the one of our perturbed initialization. We assume that those observations can be rigorously stated and proved in appropriate probabilistic settings, but for now we will leave it to future work.

Finally, it would be desirable to generalize our explicit effective rank analysis to low rank matrix sensing when we do not have full information of the ground truth. In this underdetermined setting additional ambiguities appear and regularization becomes even more meaningful. Nevertheless, the analysis is more challenging due to additional coupling between the variables.
\begin{figure}
\begin{subfigure}[c]{0.45\textwidth}
\includegraphics[width=\textwidth]{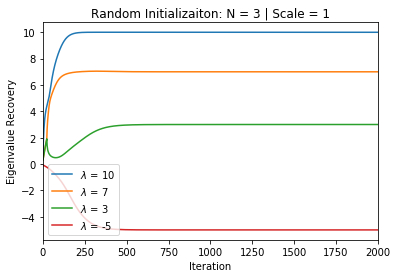}
\subcaption{$\alpha = 1$.}
\label{fig:RandInt_1}
\end{subfigure} \quad
\begin{subfigure}[c]{0.45\textwidth}
\includegraphics[width=\textwidth]{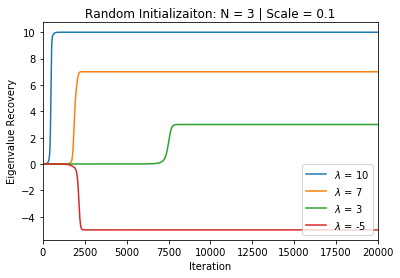}
\subcaption{$\alpha = 0.1$.}
\label{fig:RandInt_01}
\end{subfigure}
\caption{Random initialization: $\W_j(0)=\alpha_j I$ where $\alpha_j\sim\mathcal{N}(0,\alpha^2)$ are independent. When $\alpha$ is large, the eigenvalues converge faster, but the implicit bias phenomenon is less significant. Here $N=3$.}
\label{fig:RandInt}
\end{figure}
%


\section*{Acknowledgements}
\label{sec:Acknowledgements}
HHC and HR acknowledge funding by the DAAD through the project {\it{Understanding stochastic gradient descent in deep learning}} (project no.\ 57417829).
JM and HR acknowledges funding by the Deutsche Forschungsgemeinschaft (DFG, German Research Foundation) through the project CoCoMIMO funded within the priority program SPP 1798 {\it{Compressed Sensing in Information Processing}} (COSIP).
HR acknowledges funding by the Federal Ministry of Education and Research (BMBF) and the Ministry of Culture and Science of the German State of North Rhine-Westphalia (MKW) under the Excellence Strategy of the Federal Government and the L{\"a}nder.
We wish to sincerely thank our colleagues Le Thang Huynh, Hans Christian Jung, and Ulrich Terstiege for the numerous joint discussions on the topic.


\bibliography{references}{}
\bibliographystyle{abbrv}


\appendix

\section{Supplement to Remark \ref{rem:main-thm}}
\label{sec:Remark_main-thm}

In this section, we provide a detailed derivation of \eqref{TID-form} in Remark \ref{rem:main-thm}. Recall that we restrict ourselves to the case $N \geq 3$, $0 < \epsilon^N \ll \lambda_i \leq \lambda_1$, and $0 < \alpha^N \ll \lambda_i$, so that the initial matrix $W(0) = \alpha^N \Id$ has small enough spectral norm compared to the $i$-th eigenvalue of the ground truth, which in turn is larger than the desired accuracy $\epsilon^N$.
As mentioned in Remark \ref{rem:main-thm}, we have to assume that
\[
\eta = \frac{\kappa}{N \lambda_1^{2-\frac{2}{N}}}
\]
for some $\kappa \leq \frac{1}{3}$ so that \eqref{eta:cond:mthm} is satisfied. The quantity $T^\Id_N$ then takes the form (see the fourth case in \eqref{def:TId})
\begin{equation}\label{TID-form-appendix}
T^\text{Id}_N(\lambda_i,\epsilon,\alpha,\eta) = A(\lambda_i,\alpha,\eta) + B(\lambda_i,\epsilon,\eta) + s_N(\lambda_i,\alpha).
\end{equation}
The proof of Theorem~\ref{thm:IdenticalInitialization_ConvergenceRate} reveals that $A(\lambda_i,\alpha,\eta)$ is related to the time the corresponding $i$-th eigenvalue of the continuous dynamics needs to reach its ``inflection point'', $B(\lambda_i,\epsilon,\eta)$ refers to the number of iterations required to reach an $\epsilon$-accuracy approximation of the $i$-th eigenvalue starting from the ``inflection'' point and $s_N(\lambda_i,\alpha_i)$ is a term arising from comparing the discrete with the continuous dynamics in the phase before it reaches the ``inflection point''. This last term may be an artefact of the proof; a lower bound for the convergence time does not require $s_N(\lambda_i,\alpha)$, but it does (essentially) require the other two terms.

The additional time to reach $\epsilon$ accuracy once the ``inflection point'' is reached is very small. Indeed, an accuracy  $|E_{ii}(\tdisc) | \leq \epsilon' \lambda_i$ (meaning relative accuracy $\epsilon' \in (0,1)$) is reached for
\[
\epsilon = \epsilon' \lambda_i^{1/N}/N
\]
by \eqref{eq:error-estimate} and for this choice of $\epsilon$
the quantity $B$ is given by
\[
B(\lambda_i,\epsilon,\eta) = \frac{\ln(N/\epsilon') - a_N}{\left|\ln\left(1-\eta N (c_N \lambda_i)^{2-\frac{2}{N}}\right)\right|} \leq C_N \ln(1/\epsilon').
\]
(where $a_N$ and $c_N$ are defined in the next section).
Ignoring the term $s_N(\lambda_i,\alpha)$ for the moment (which may be a proof artefact) shows that the convergence time is basically determined by $A(\lambda_i, \alpha,\eta)$, i.e., the time to reach the ``inflection point''.

An analysis of the exact expression for $A(\lambda_i, \alpha,\eta)$, see \eqref{def:T+-} and Lemma~\ref{lem:approximate:TId}, shows that, for $\alpha^N \ll \lambda_i$,
\begin{align*}
A(\lambda_i, \alpha,\eta) = \frac{1}{\eta} T_N^+(\lambda_i,(c_N\lambda_i)^{\frac{1}{N}},\alpha) & = \kappa^{-1}\left(\frac{\lambda_1}{\lambda_i}\right)^{2-\frac{2}{N}}\left( \frac{1}{N-2}\left( \frac{\lambda_i}{\alpha^N}\right)^{1-\frac{2}{N}} + N \mathcal{O}(1) \right)\\
& \sim \frac{1}{\kappa(N-2)} \left(\frac{\lambda_1}{\alpha^N}\right)^{2-\frac{2}{N}} \frac{\alpha^N}{\lambda_i}. 
\end{align*}
This means that the larger an eigenvalue $\lambda_i$ is in relation to $\lambda_1$ and $\alpha^N$, the smaller $A(\lambda_i,\alpha,\eta)$ is 
and the faster it is approximated by gradient descent, see also Figure \ref{fig:IdenticalVSPerturbedA} for an illustration. 
Moreover, the exponent $2-\frac{2}{N}$ at $\lambda_1/\alpha^N$ in the approximate expression for $A(\lambda_i,\alpha,\eta)$ leads to the fact that the differences of consecutive ``relative inverse eigenvalues'' $\frac{\alpha^N}{\lambda_i}$, $i \in [n]$, are ``stretched out'' more with increasing $N$. Since the dynamics of the eigenvalues stays close to zero for a long time before reaching the inflection point (for small $\alpha^N$), this has the effect that different eigenvalues can be distinguished by their convergence time more easily for larger $N$, see again Figure~\ref{fig:IdenticalVSPerturbedA}. In turn this leads to a dynamics for the matrix $W(\tdisc)$ with low rank approximations in the initial phase and plateaulike increasing effective rank, see also Figure~\ref{fig:EffRank_Intro}.

Let us finally discuss the third term $s_N(\lambda_i,\alpha)$ in \eqref{TID-form-appendix} given by
\[
s_N(\lambda_i,\alpha) = 
\left\lceil c_N^{1-\frac{1}{N}} \left(\frac{\lambda_i}{\alpha^N}\right)^{1-\frac{1}{N}} \right\rceil \sim C_N \left(\frac{\lambda_i}{\alpha^N} \right)^{1-\frac{1}{N}}.
\]
In order to judge on the influence of $s_N(\lambda_i,\alpha)$ on the above discussion, let us compare it to $A(\lambda_i,\alpha,\eta)$ by forming the fraction
\[
\frac{s_N(\lambda_i,\alpha)}{A(\lambda_i,\alpha,\eta)} \sim  D_N \kappa \left(\frac{\lambda_i}{\lambda_1}\right)^{2-\frac{2}{N}} \left( \frac{\lambda_i}{\alpha^N}\right)^\frac{1}{N}.
\]
This means that while there is a non-negligible contribution of $s_N(\lambda_i,\alpha)$ to $T^\Id_N$ for eigenvalues $\lambda_i$ close to $\lambda_1$ (and in particular, for $\lambda_i = \lambda_1$), the  contribution does become negligible for relatively small $\lambda_i$. Moreover, larger $N$ again helps. Also note, that a small constant $\kappa$ can also reduce the influence of $s_N(\lambda, \alpha_i)$. In particular, in the situation where we would like to distinguish significantly different eigenvalues (i.e., different from $\lambda_1$) from their dynamics, it is valid to ignore $s_N$ and the above discussion taking into account only $A$ and $B$ applies.

\section{Optimality of Lemma \ref{lemma:IdenticalInitialization_Convergence}}
\label{sec:Appendix_Optimality}

The following lemma shows, that the condition on $\eta$ in Lemma \ref{lemma:IdenticalInitialization_Convergence} is necessary up to a constant, since the fixed point $\lambda^{\frac{1}{N}}$ becomes unstable otherwise. We say that a fixed point $a$ of the iteration $x_{n+1} = g(x_n)$ is unstable if $g(a) = a$ and $|g'(a)|>1$. 
In particular, this means that iterations move away from the fix point $a$ once they are in a small enough neighborhood of $a$, but do not reach $a$ exactly.

\begin{lemma}\label{lemma:IdenticalInitialization_Convergence_Necessary}
    Let $\d$ be the solution of \eqref{eq:IdenticalInitialization_ScalarDynamics} for some $\lambda \in \RR$. If $N=1$ and $\eta >2$, then 
    the sequence $\d(\tdisc)$ diverges as $\tdisc \to \infty$ unless $\d(0) = \alpha = \lambda$. If $N\geq 2$, $\lambda>0$, and $\eta>2(N\lambda^{2-\frac{2}{N}})^{-1}$, then $\lambda^{\frac{1}{N}}$ is an unstable equilibrium of the iterates $\d(\tdisc)$. If $N\geq 2$, $\lambda< 0$, and 
    \begin{equation*}
        \eta > \frac{(1+2^{\frac{1}{N}})\max(\alpha, |\lambda|^{\frac{1}{N}})}{\min(\alpha,|\lambda|^{\frac{1}{N}})^{2N-1}},
    \end{equation*}
    then $d$ diverges.
\end{lemma}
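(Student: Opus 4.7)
The proof splits along the three bullets of the statement.

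For $N=1$, the iteration collapses to $\d(\tdisc+1) - \lambda = (1-\eta)(\d(\tdisc)-\lambda)$, which integrates to $\d(\tdisc)-\lambda = (1-\eta)^{\tdisc}(\alpha-\lambda)$. Whenever $\eta > 2$ we have $|1-\eta|>1$, so this sequence blows up in magnitude unless $\alpha = \lambda$. This is essentially a one-line computation.

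For $N \geq 2$ and $\lambda > 0$, set $g(x) = x - \eta x^{N-1}(x^N-\lambda)$, so that $\d(\tdisc+1) = g(\d(\tdisc))$ and $\lambda^{1/N}$ is a fixed point of $g$. A direct differentiation yields $g'(\lambda^{1/N}) = 1 - \eta N \lambda^{2-2/N}$, and the hypothesis $\eta > 2/(N\lambda^{2-2/N})$ gives $g'(\lambda^{1/N}) < -1$. This is exactly what is meant by an unstable fixed point: iterates starting in a small neighborhood of $\lambda^{1/N}$ are pushed away by a factor of at least $|g'(\lambda^{1/N})|>1$ per step.

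The main work is the third case, $N \geq 2$ and $\lambda < 0$, which I plan to handle in two steps. First, I will use the explicit formula $\d(1) = \alpha - \eta \alpha^{N-1}(\alpha^N + |\lambda|)$ together with the coarse bound $\alpha^{N-1}(\alpha^N + |\lambda|) \geq m^{N-1} M^N$, where $m = \min(\alpha, |\lambda|^{1/N})$ and $M = \max(\alpha, |\lambda|^{1/N})$, to show $\d(1) \leq -2^{1/N} M$. Indeed, feeding the assumed lower bound on $\eta$ into this estimate reduces it to $\alpha - (1+2^{1/N}) M^{N+1}/m^N \leq M - (1+2^{1/N})M = -2^{1/N}M$, using $(M/m)^N \geq 1$. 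Second, I will show that once $|\d(\tdisc)|^N \geq 2|\lambda|$, which already holds at $\tdisc = 1$, the iteration is a strict geometric expansion: from $|\d(\tdisc)^N - \lambda| \geq |\d(\tdisc)|^N/2$ (triangle inequality) one gets $|g(x)|/|x| \geq \eta|x|^{2N-2}/2 - 1$, and the hypothesis on $\eta$ combined with $|\d(\tdisc)| \geq 2^{1/N}M$ forces $\eta |\d(\tdisc)|^{2N-2} > (1+2^{1/N})\cdot 2^{(2N-2)/N}$, so that the growth factor exceeds a constant $C > 1$ depending only on $N$. Induction then yields $|\d(\tdisc)| \to \infty$.

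The main obstacle is the bookkeeping in the third case: the $\eta$-condition is phrased in terms of the min--max pair $(m,M)$, while the expansion argument requires a clean lower bound $\eta|x|^{2N-2} \geq 2(C+1)$ for a fixed $C>1$. Verifying the compatibility of these bounds comes down to the elementary inequality $(1+2^{1/N})\cdot 2^{1-2/N} > 2$ for $N \geq 2$, which follows from $2^{1/N} \leq \sqrt{2}$. Once this is pinned down, both the one-step bound $\d(1) \leq -2^{1/N}M$ and the per-step geometric expansion follow without further difficulty.
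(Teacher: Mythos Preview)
Your proposal is correct and follows essentially the same route as the paper: the cases $N=1$ and $N\ge 2,\ \lambda>0$ are handled identically, and for $N\ge 2,\ \lambda<0$ both you and the paper first force $\d(1)\le -2^{1/N}M$ and then run a geometric-expansion induction from there, arriving at the same growth constant $2^{1-2/N}(1+2^{1/N})-1>1$. The only cosmetic differences are that the paper bounds $\alpha^{N-1}(\alpha^N+|\lambda|)\ge \alpha^{2N-1}$ and $|\d(\tdisc)^N-\lambda|\ge|\lambda|$, whereas you use $\alpha^{N-1}(\alpha^N+|\lambda|)\ge m^{N-1}M^N$ and $|\d(\tdisc)^N-\lambda|\ge|\d(\tdisc)|^N/2$; both choices work and give the same conclusion.
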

\begin{proof}
    Let $g(x) = x - \eta x^{N-1}(x^N-\lambda)$ so that $\d(\tdisc+1) = g(\d(\tdisc))$. 
    For $N=1$ we have $g'(x) = 1-\eta$ so that $\eta>2$, $|g'(x)| = |1-\eta|>1$
    for all $x$. It follows that the iterates $\d$ defined by $\d(\tdisc+1) = g(\d(\tdisc)) $ form a diverging sequence unless $\d(0) = \lambda$.
    For $N\geq 2$, $\lambda>0$, and $\eta > 2(N\lambda^{2-\frac{2}{N}})^{-1}$, we obtain
    \begin{align*}
        |g'(\lambda^{\frac{1}{N}})| &= |1-\eta ((N-1)\lambda^{1-\frac{2}{N}}(\lambda-\lambda)+N\lambda^{2-\frac{2}{N}})|
        =|1-\eta N\lambda^{2-\frac{2}{N}}|
        >1.
    \end{align*}
    Hence, $\lambda^{\frac{1}{N}}$ is an unstable equilibrium of the dynamics $\d(\tdisc+1) = g(\d(\tdisc))$.
    For $N\geq 2$, $\lambda< 0$, the analysis is slightly more complicated because $g(0) = 0$ but $g'(0)=1$. However, if
    \begin{equation*}
        \eta>\frac{(1+2^{\frac{1}{N}})\max(\alpha, |\lambda|^{\frac{1}{N}})}{\min(\alpha,|\lambda|^{\frac{1}{N}})^{2N-1}},
    \end{equation*}
    then
    \begin{align*}
        \d(1)
        &= g(\d(0))
        = \alpha - \eta\alpha^{N-1}(\alpha^N-\lambda)
        \leq \alpha - \eta\alpha^{2N-1}\\ 
        &<  \alpha - (1+2^{\frac{1}{N}})\max(\alpha, |\lambda|^{\frac{1}{N}})
        \leq - 2^{\frac{1}{N}}\max(\alpha, |\lambda|^{\frac{1}{N}}).
    \end{align*}
    In particular note that $|\d(1)^N-\lambda| \geq |\d(1)|^N - |\lambda| > |\lambda|$. We will now show that $|\d(\tdisc)|$ is increasing by at last constant factor that is larger than one in each iteration and that $|d(\tdisc)| \geq (2|\lambda|)^{\frac{1}{N}}$ for all $\tdisc \in \mathbb{N}$. The latter inequality has just been shown for $\tdisc = 1$.
    So assume that it holds for some $\tdisc \in \mathbb{N}$. 
    Since $\d(\tdisc+1) = \d(\tdisc)(1 - \eta\d(\tdisc)^{N-2}(\d(\tdisc)^N-\lambda))$, it suffices to show that $|1 - \eta\d(\tdisc)^{N-2}(\d(\tdisc)^N-\lambda)| >1$. We have
    \begin{align*}
        |\eta\d(\tdisc)^{N-2}(\d(\tdisc)^N-\lambda)|
        &\geq \eta (2|\lambda|)^{1-\frac{2}{N}}|\lambda|
        = 2^{1-\frac{2}{N}} \eta |\lambda|^{2-\frac{2}{N}}
        > 2^{1-\frac{2}{N}} (1+2^{\frac{1}{N}})
        >2,
    \end{align*}
    which implies that
    $|\d(\tdisc+1)| >(1+c)|\d(\tdisc)|$ 
    with $c= 2^{1-\frac{2}{N}} (1+2^{\frac{1}{N}})-2>0$; in particular $\d(\tdisc + 1) >(2|\lambda|)^{\frac{1}{N}} $. Hence, $\d(\tdisc)$ diverges as $\tdisc \to \infty$. 
\end{proof}

\section{On Solutions of the Continuous Dynamics}
\label{sec:Appendix_ContinuousDynamics}

As claimed in Remark \ref{rem:ExplicitForm}, the solution $\y(t)$ of \eqref{eq:IdenticalInitialization_ContinuousScalarDynamics} can, for $N \geq 3$, be expressed in an alternative way that avoids complex logarithms. Introduce the function
\begin{equation*}
    h_N(\alpha,\y) :=
    \begin{cases}
        \ln\left|\frac{\lambda^{\frac{2}{N}} - \alpha^2}{\lambda^{\frac{2}{N}} - \y^2}\right| & \mbox{ if } $N$ \mbox{ is even},\\
         \ln\left|\frac{\lambda^{\frac{1}{N}} - \alpha}{\lambda^{\frac{1}{N}} - \y}\right| & \mbox{ if } $N$ \mbox{ is odd}.
    \end{cases}
\end{equation*}   
Then $\y(t)$ satisfies
\begin{align*}
    t & = \frac{\lambda^{\frac{2}{N}-2}}{N} \left[h_N(\alpha,\y(t)) + \sum_{\ell=1}^{\lfloor N/2-1 \rfloor} \cos\left(\frac{4\pi \ell}{N}\right) \ln\left|\frac{\alpha - \lambda^{\frac{1}{N}}e^{\frac{2\pi i \ell}{N}}}{\y - \lambda^{\frac{1}{N}} e^{\frac{2\pi i \ell}{N}}}\right|^2 \right. \\
    & \left. \phantom{=\frac{\lambda^{\frac{2}{N}-2}}{N} ()}
    + 2 \sum_{\ell=1}^{\lfloor N/2-1 \rfloor} \sin\left(\frac{4\pi \ell}{N}\right) \left( \arctan\left(\frac{\y - \lambda^{\frac{1}{N}} \cos(\frac{2\pi \ell}{N})}{\lambda^{\frac{2}{N}} \sin^2(\frac{2 \pi \ell}{N})} \right) - \arctan\left(\frac{\alpha - \lambda^{\frac{1}{N}} \cos(\frac{2\pi \ell}{N})}{\lambda^{\frac{1}{N}} \sin(\frac{2 \pi \ell}{N})} \right)\right)\right] \\
    & \quad + \frac{1}{\lambda(N-2)}\left(\frac{1}{\alpha^{N-2}} - \frac{1}{\y^{N-2}}\right).
\end{align*} 

The formula can be deduced from \eqref{def:Uplus} by splitting the complex logarithm into real and imaginary parts.

\section{Supplement to Section \ref{subsec:PerturbedInitialization}}
\label{sec:Appendix_Supplement}

We provide here Lemma \ref{lemma:PerturbedInitialization_Convergence_PositiveLargeLambda} and \ref{lemma:PerturbedInitialization_Convergence_PositiveSmallLambda}, which show the claim of Lemma \ref{lemma:PerturbedInitialization_Convergence_AllLambda} for non-negative $\lambda \ge \alpha^N$ and non-negative $\lambda < \alpha^N$, respectively.
In the following we repeatedly use the two auxiliary sequences
\begin{align}
    a(\tdisc+1) &= a(\tdisc) - \eta a(\tdisc)^{N-1}(a(\tdisc)^N-\lambda),
    \quad a(0) = \alpha -\beta >0, \tag{\ref{def:a_IdenticalInitialization}}\\
    b(\tdisc+1) &= b(\tdisc) - \eta b(\tdisc)^{N-1}(b(\tdisc)^N-\lambda),
    \quad b(0) = \alpha >0. \tag{\ref{def:b_IdenticalInitialization}}
\end{align}
already defined in Section~\ref{subsec:PerturbedInitialization} to control the trajectory of $d_1(\tdisc) d_2(\tdisc)^{N-1}$. We, furthermore, abbreviate
\begin{align}\label{def:p_pa_pb}
    p(\tdisc) = d_1(\tdisc) d_2(\tdisc)^{N-1},
    \quad
    p_a(\tdisc) = a(\tdisc)^N,
    \quad
    \text{ and}
    \quad
    p_b(\tdisc) = b(\tdisc)^N.
\end{align}
We observe that the product dynamics $p$ satisfies the following relation.
\begin{lemma} \label{lem:fp}
    Let $N \geq 2$ and $\lambda \geq 0$. Consider $p$ defined in \eqref{def:p_pa_pb} and a fixed $\tdisc \in \mathbb{N}_0$. Suppose
    that $p(\tdisc) \neq 0$ and $0<\eta < p(\tdisc)^{-2+\frac{2}{N}}$. 
    Then
    \begin{align*}
        p(\tdisc+1) = p(\tdisc) f_{p(\tdisc)}(d_2(\tdisc)),
    \end{align*}
    where \revision{
    \begin{align*}
        f_{p}(x) := (1+c_{p,1} x^{2N-2})(1 + c_{p,2}x^{-2})^{N-1},
        \quad
        c_{p,1} := -\eta p^{-1}(p - \lambda),
        \quad\text{and }
        c_{p,2} := -\eta p(p - \lambda).
    \end{align*}}
    Moreover, $f_{p}'(x) = 0$ for $x = p^{\frac{1}{N}}$. If $c_{p,1},c_{p,2} \ge 0$, then on $(0,\infty)$, $f_{p}$ is convex and $f_{p}'$ has a unique zero. If $c_{p,1},c_{p,2} \le 0$ and $p > 0$, then $f_{p}$ is concave on $[p^{\frac{1}{N}},\infty)$.
\end{lemma}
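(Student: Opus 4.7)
The identity $p(\tdisc+1) = p(\tdisc)\, f_{p(\tdisc)}(\d_2(\tdisc))$ is obtained by a direct manipulation of \eqref{eq:PerturbedInitialization_ScalarDynamics}. Because $p(\tdisc) \neq 0$ forces $\d_1(\tdisc), \d_2(\tdisc) \neq 0$, I can factor $\d_1(\tdisc)$ out of $\d_1(\tdisc+1)$ and $\d_2(\tdisc)$ out of $\d_2(\tdisc+1)$, and then use $\d_1(\tdisc) = p(\tdisc)/\d_2(\tdisc)^{N-1}$ to replace every occurrence of $\d_1(\tdisc)$ in the multiplicative corrections. The ratios $\d_2(\tdisc)^{N-1}/\d_1(\tdisc) = \d_2(\tdisc)^{2N-2}/p(\tdisc)$ and $\d_1(\tdisc)\d_2(\tdisc)^{N-3} = p(\tdisc)/\d_2(\tdisc)^{2}$ produce the two promised factors, and multiplying $\d_1(\tdisc+1) \cdot \d_2(\tdisc+1)^{N-1}$ yields the claim.

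Next, the critical point is found via the key observation that $c_{p,1} x^{2N-2}$ and $c_{p,2} x^{-2}$ evaluate to the same quantity at $x = p^{1/N}$; both equal $-\eta p^{1-2/N}(p-\lambda) =: -A$. I rescale via $y := x p^{-1/N}$ and define $F(y) := f_p(y p^{1/N})$, so that the claim $f_p'(p^{1/N})=0$ becomes $F'(1)=0$. A straightforward application of product and chain rules to $F(y) = (1+c_{p,1} p^{(2N-2)/N} y^{2N-2})(1 - c_{p,2} p^{-2/N} y^{-2})^{N-1}$, followed by pulling out common factors of $y^{-3}$ and $(1 \mp A y^{-2})^{N-2}$, collapses the cross-terms involving $A$ and leaves an expression proportional to $(1 - y^{2N})$, which vanishes precisely at $y = 1$ on $(0,\infty)$.

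In the convex case $c_{p,1}, c_{p,2} \geq 0$, I expand via the binomial theorem: $(1 - c_{p,2} x^{-2})^{N-1} = \sum_{k=0}^{N-1} \binom{N-1}{k} (-c_{p,2})^k x^{-2k}$, and then multiply by $(1 + c_{p,1} x^{2N-2})$. The resulting sum consists of monomials $\gamma_m x^m$ whose exponents $m$ lie in the symmetric set $\{-(2N-2),-(2N-4),\dots,-2,0,2,\dots,2N-2\}$. Under the sign assumption, every coefficient $\gamma_m$ is nonnegative. Each monomial with $m \neq 0$ satisfies $|m| \geq 2$, so $x^m$ is convex on $(0,\infty)$, and the constant term is trivially convex. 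Hence $f_p$ is a nonnegative combination of convex functions and is therefore convex on $(0,\infty)$. The factored form of $F'$ obtained in the previous paragraph is a product of a nonvanishing factor and $(1-y^{2N})$, which has $y = 1$ as its only zero on $(0,\infty)$, giving the uniqueness statement.

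The concavity assertion on $[p^{1/N},\infty)$ under $c_{p,1},c_{p,2}\le 0$ and $p>0$ is where I expect the main technical obstacle, since the binomial expansion no longer has a definite sign. My plan is to work directly with the factored form
\begin{equation*}
F'(y) \;=\; 2(N-1)\,A\, y^{-3}\bigl(1 \pm A y^{-2}\bigr)^{N-2}\bigl(1-y^{2N}\bigr)
\end{equation*}
obtained in the critical-point step (with $A \geq 0$ here). The stepsize bound $\eta < p^{-2+2/N}$ together with $0 \leq \lambda \leq p$ yields $0 \leq A \leq \eta p^{2-2/N} < 1$, which keeps the base of the $(N-2)$-th power and all companion factors positive on $y \geq 1$. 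Differentiating again by the product rule produces three terms whose signs on $[1,\infty)$ I would track individually using $(1-y^{2N}) \leq 0$, its derivative $-2Ny^{2N-1} < 0$, and the monotonicity of $(1 \pm Ay^{-2})^{N-2}$ and $y^{-3}$; combined with the overall factor $A \geq 0$, this gives $F''(y) \leq 0$ on $[1,\infty)$, and pulling back through the rescaling yields concavity of $f_p$ on $[p^{1/N},\infty)$.
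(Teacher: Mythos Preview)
Your derivation of the identity $p(k+1)=p\,f_p(d_2)$ and of the factored form $F'(y)=2(N-1)A\,y^{-3}(1-Ay^{-2})^{N-2}(1-y^{2N})$ is correct and parallels the paper's direct computation of $f_p'$; the binomial-expansion route to convexity is a neat alternative to the paper's explicit second derivative. One caveat: the lemma as printed carries a sign slip --- the paper's own proof uses $(1+c_{p,2}x^{-2})^{N-1}$ throughout, not $(1-c_{p,2}x^{-2})^{N-1}$. With the minus sign your binomial coefficients $(-c_{p,2})^k$ alternate and the claim ``every $\gamma_m\ge 0$'' fails; with the correct plus sign your argument goes through.

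The genuine gap is the concavity step. Writing $F'=2(N-1)A\,uvw$ with $u=y^{-3}$, $v=(1-Ay^{-2})^{N-2}$, $w=1-y^{2N}$ and applying the three-factor product rule gives $F''=2(N-1)A\,(u'vw+uv'w+uvw')$. On $[1,\infty)$ one has $u'<0$, $v>0$, $w\le 0$, so $u'vw\ge 0$: this term has the \emph{wrong} sign, and the three pieces cannot be bounded individually as you propose. The paper's fix is to keep only two factors in $f_p'$, namely $(1+c_{p,2}x^{-2})^{N-2}$ and $(c_{p,1}x^{2N-3}-c_{p,2}x^{-3})$; in your rescaled variables this amounts to merging $u$ and $w$ into the single factor $y^{-3}-y^{2N-3}$ before differentiating. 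The two-term product rule then yields
\[
\frac{F''(y)}{2(N-1)A}=(N-2)(1-Ay^{-2})^{N-3}(2Ay^{-3})(y^{-3}-y^{2N-3})+(1-Ay^{-2})^{N-2}\bigl(-3y^{-4}-(2N-3)y^{2N-4}\bigr),
\]
and now each summand is visibly nonpositive on $y\ge 1$ (the first because $y^{-3}-y^{2N-3}\le 0$ there, the second because both monomial coefficients are negative). That regrouping is the missing idea.
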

\begin{proof}
    For simplicity we write $p = p(\tdisc), \d_1 = \d_1(\tdisc), \d_2 = \d_2(\tdisc)$ below.
    By the definition of the dynamics in \eqref{eq:PerturbedInitialization_ScalarDynamics}, we have
    \begin{align} \label{eq:productReduction} 
    \begin{split}
        p(\tdisc+1)
        &= \d_1(\tdisc+1)\d_2(\tdisc+1)^{N-1}\\
        &= \left(\d_1 - \eta \d_2^{N-1}(\d_1\d_2^{N-1}-\lambda)\right)\left(\d_2 - \eta \d_1\d_2^{N-2}(\d_1\d_2^{N-1}-\lambda)\right)^{N-1}\\
        &= \d_1\left(1 - \eta \frac{\d_2^{2N-2}}{\d_1\d_2^{N-1}}(\d_1\d_2^{N-1}-\lambda)\right) \d_2^{N-1}\left(1 - \eta\frac{\d_1\d_2^{N-1}}{\d_2^2}(\d_1\d_2^{N-1}-\lambda)\right)^{N-1}\\
        &= p\left(1 - \eta p^{-1}(p-\lambda)\d_2^{2N-2}\right) \left(1 - \eta p(p-\lambda)\d_2^{-2}\right)^{N-1}
        = pf_p(\d_2).
    \end{split}
    \end{align}
    By differentiation, we obtain
    \begin{align}
        f'_p(x)
        &= (2N-2)c_{p,1}x^{2N-3}(1+c_{p,2}x^{-2})^{N-1} + (1+c_{p,1}x^{2N-2})(N-1)(1+c_{p,2}x^{-2})^{N-2}(-2c_{p,2}x^{-3})\nonumber \\
        &= (2N-2)(1+c_{p,2}x^{-2})^{N-2}[c_{p,1}x^{2N-3}(1+c_{p,2}x^{-2})-(1+c_{p,1}x^{2N-2})c_{p,2}x^{-3}]\label{fp_FirstDerivative}\\
        &= (2N-2)(1+c_{p,2}x^{-2})^{N-2}(c_{p,1}x^{2N-3}-c_{p,2}x^{-3}) \nonumber
    \end{align}
    and
    \begin{align}
        \frac{f''_p(x)}{2N-2} 
        &= (N-2)(1+c_{p,2}x^{-2})^{N-3}(-2c_{p,2}x^{-3})(c_{p,1}x^{2N-3}-c_{p,2}x^{-3})\nonumber\\
        &\quad + (1+c_{p,2}x^{-2})^{N-2}((2N-3)c_{p,1}x^{2N-4}+3c_{p,2}x^{-4})\nonumber\\
        &= (1+c_{p,2}x^{-2})^{N-3} [-2(N-2)(c_{p,1}c_{p,2}x^{2N-6}-c_{p,2}^2x^{-6})) \label{fp_SecondDerivative}\\
        &\quad + (2N-3)(c_{p,1}x^{2N-4}+c_{p,1}c_{p,2}x^{2N-6}) +3(c_{p,2}x^{-4}+c_{p,2}^2x^{-6})]\nonumber\\
        &= (1+c_{p,2}x^{-2})^{N-3}[3c_{p,2}x^{-4} + (2N-1)c_{p,2}^2x^{-6} + c_{p,1}c_{p,2}x^{2N-6} + (2N-3)c_{p,1}x^{2N-4} ]\nonumber.
    \end{align}
    It follows from \eqref{fp_FirstDerivative} that $f_p'(x) = 0$ if $x = \left(\frac{c_{p,2}}{c_{p,1}}\right)^{\frac{1}{2N}} = p^{\frac{1}{N}}$. If $x > 0$ and $c_{p,1},c_{p,2} \ge 0$, this zero of $f_p'$ is unique. Moreover, in this case the last line in $\eqref{fp_SecondDerivative}$ is clearly positive for $N\geq 2$, which implies that $f_p''(x) > 0$ so that $f_p$ is convex on $(0,\infty)$. For the last claim, note that under the assumptions on $\eta$, $p$, $c_{p,1}$, $c_{p,2}$, and $x$, implying that $\lambda \le p$,
    \begin{align*}
        &(1 + c_{p,2}x^{-2})
        \ge 1 - \eta p(p-\lambda)p^{-\frac{2}{N}}
        > 1 - p^{-2+\frac{2}{N}} p^{1-\frac{2}{N}}(p-\lambda)
        \geq 1 - p^{-1}(p-\lambda)
        > 0,\\
        &(-2c_{p,2}x^{-3}) = 
        2p(p-\lambda)x^{-3} > 0,\\
        &(c_{p,1}x^{2N-3}-c_{p,2}x^{-3})
        = x^{-3}(c_{p,1}x^{2N}-c_{p,1}p^2) < 0.
    \end{align*}
    It follows that the the expression after the first equality sign in \eqref{fp_SecondDerivative} is negative, that is, $f_p''<0$ and $f_p$ is concave on $[p^{\frac{1}{N}},\infty)$.
\end{proof}

\begin{lemma}\label{lemma:PerturbedInitialization_Convergence_PositiveLargeLambda}
    Let $N \geq 2$ and $\lambda > 0$. Let $\d_1,\d_2$ be defined by \eqref{eq:PerturbedInitialization_ScalarDynamics} with the perturbed identical initialization \eqref{eq:PerturbedInitialization_ScalarInitialization} for $N \geq 2$. Assume that $(\alpha-\beta)\alpha^{N-1}\leq\lambda$. Let $\M=\max\{\alpha, \lambda^{\frac{1}{N}}\}$ and $\c\in(1,2)$ be the maximal real solution to the polynomial equation $1=(\c-1)\c^{N-1}$. If
    \begin{equation}\label{C2:cond:stepsize}
        0 < \eta < 
        \frac{1}{8N \revision{(c\M)^{2N-2}}},
    \end{equation}
    then $0 < \d_1(\tdisc)\d_2(\tdisc)^{N-1}\leq \lambda$, $\d_2(\tdisc)\leq \c\M$ for all $\tdisc\in\mathbb{N}_0$, and $\lim_{\tdisc\to\infty}\d_1(\tdisc)\d_2(\tdisc)^{N-1} = \lambda$.
\end{lemma}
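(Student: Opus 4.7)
The plan is to prove all three claims simultaneously by induction on $k$, maintaining the invariants (A) $0 < p(k) \leq \lambda$ where $p(k) := d_1(k)d_2(k)^{N-1}$, (B) $0 < d_1(k) \leq d_2(k) \leq cM$, and (C) $p(k) \geq p_a(k) := a(k)^N$, with $a$ the identical-initialization auxiliary sequence of \eqref{def:a_IdenticalInitialization}. At $k = 0$, (A) is the hypothesis $(\alpha - \beta)\alpha^{N-1} \leq \lambda$; (B) follows from $d_1(0) = \alpha - \beta < \alpha = d_2(0) \leq M \leq cM$; and (C) from $(\alpha - \beta)^N \leq (\alpha - \beta)\alpha^{N-1} = p(0)$. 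Under (A)--(B), the stepsize bound \eqref{C2:cond:stepsize} yields $\eta|\kappa(k)| \leq \eta(cM)^{N-2}\lambda \leq \tfrac{1}{8N}$, and since $\kappa(k) = d_2(k)^{N-2}(p(k) - \lambda) \leq 0$, Lemma \ref{lemma:DeltaContraction} guarantees that $\Delta_1(k), \Delta_2(k) \geq 0$ stay non-negative and non-increasing in absolute value. In particular $d_1(k+1) \leq d_2(k+1)$ is preserved, and $d_1(k+1) = d_1(k) + \eta d_2(k)^{N-1}(\lambda - p(k)) \geq d_1(k) > 0$.

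The structural tool for the product invariants is Lemma \ref{lem:fp}, which yields $p(k+1) = p(k)\,f_{p(k)}(d_2(k))$ with $f_{p(k)}$ convex on $(0,\infty)$ whenever $p(k) \leq \lambda$, and attaining its unique minimum at $x = p(k)^{1/N}$. A direct computation shows that both $c_{p,1}\,p^{(2N-2)/N}$ and $c_{p,2}\,p^{-2/N}$ equal $u := \eta p^{1 - 2/N}(\lambda - p)$, whence $f_{p(k)}(p(k)^{1/N}) = (1 + u)^N$. Since $d_1 \leq d_2$ forces $d_2 \geq p^{1/N}$, this gives the lower bound
\[
p(k+1)^{1/N} \;\geq\; p(k)^{1/N} + \eta\, p(k)^{(N-1)/N}\bigl(\lambda - p(k)\bigr),
\]
which is exactly the identical-initialization scalar update \eqref{eq:IdenticalInitialization_ScalarDynamics} applied to $p(k)^{1/N}$. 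Combining with monotonicity of that update on $[0,\lambda^{1/N}]$ (established in the proof of Lemma \ref{lemma:IdenticalInitialization_Convergence}) and invariant (C), one obtains $p(k+1)^{1/N} \geq a(k+1)$, preserving (C).

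The main obstacle is simultaneously re-establishing the upper bounds (A) on $p$ and (B) on $d_2$. The guiding heuristic is the continuous limit: the gradient flow exactly conserves $d_2(t)^2 - d_1(t)^2$ and satisfies $\dot p = (\lambda - p)[d_2^{2N-2} + (N-1)d_1^2 d_2^{2N-4}] \geq 0$, so $p$ increases monotonically to $\lambda$ and $d_2$ only climbs to within $O(\beta)$ of $\lambda^{1/N}$. The discrete counterpart (Lemma \ref{lemma:DeltaContraction}) gives $\Delta_2(k) \leq \Delta_2(0) \leq 2\alpha\beta$, while the constant $c > 1$ solving $(c-1)c^{N-1} = 1$ (equivalently $c^N = c^{N-1}+1$) provides the slack to absorb discrete overshoot under \eqref{C2:cond:stepsize}. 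To obtain $d_2(k+1) \leq cM$, I would use the identity $d_2^{2N} = p^2 + d_2^{2N-2}\Delta_2$, combine $p \leq \lambda$, $\Delta_2 \leq 2\alpha\beta$, and the theorem's hypothesis $\beta/(c-1) < \alpha$, and verify that the specific calibration $(c-1)c^{N-1} = 1$ delivers precisely the needed tightness. For $p(k+1) \leq \lambda$, I would exploit Lemma \ref{lem:fp} together with a quantitative second-order Taylor expansion of $f_{p(k)}$ at $p(k)^{1/N}$, whose quadratic remainder in $\eta$ is controlled by \eqref{C2:cond:stepsize}. Coordinating these two upper bounds in the induction is where I expect the bulk of the technical work to lie. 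Finally, once the three invariants hold for all $k$, the stepsize hypothesis implies the one of Lemma \ref{lemma:IdenticalInitialization_Convergence} applied to $a$ with $\lambda > 0$, so $a(k) \to \lambda^{1/N}$ and $p_a(k) \to \lambda$; the sandwich $p_a(k) \leq p(k) \leq \lambda$ then yields $p(k) \to \lambda$, completing the proof.
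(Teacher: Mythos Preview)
Your induction scaffold (invariants (A)--(C), the auxiliary sequence $a(\cdot)$, and the use of Lemma~\ref{lem:fp} for the lower sandwich $p_a\le p$) matches the paper exactly, and your treatment of invariant (C) via the convex minimum of $f_p$ at $p^{1/N}$ is the same computation the paper carries out. The differences are in the two upper bounds, and one of them is a genuine gap.

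\medskip
\textbf{The bound $d_2(k+1)\le cM$.} You propose the identity $d_2^{2N}=p^2+d_2^{2N-2}\Delta_2$ together with $\Delta_2\le 2\alpha\beta$ and the hypothesis $\beta/(c-1)<\alpha$. But that hypothesis belongs to Theorem~\ref{theorem:PerturbedInitialization_NonAsymptoticConvergence}, not to this lemma, which only assumes $0<\beta<\alpha$. With merely $\beta<\alpha$ your $\Delta_2$-route reduces to the requirement $c^{2N-2}(c^2-2)\ge 1$, which already fails for $N=3$ (where $c\approx 1.466$). The paper avoids this entirely by using the fact you yourself recorded but did not exploit: since $\kappa(k)<0$, Lemma~\ref{lemma:DeltaContraction} says $\Delta_1$ is \emph{decreasing}, hence $\Delta_1(k+1)\le\beta$, i.e.\ $d_1(k+1)\ge d_2(k+1)-\beta$. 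If $d_2(k+1)>cM$ then
\[
p(k+1)\ >\ (cM-\beta)(cM)^{N-1}\ >\ (cM-\alpha)(cM)^{N-1}\ \ge\ (c-1)c^{N-1}M^N\ =\ M^N\ \ge\ \lambda,
\]
contradicting $p(k+1)\le\lambda$. This is where the calibration $(c-1)c^{N-1}=1$ is actually consumed, and it needs only $\beta<\alpha$.

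\medskip
\textbf{The bound $p(k+1)\le\lambda$.} Your plan is a Taylor expansion of $f_p$ at $p^{1/N}$. The displacement $d_2-p^{1/N}$ is $O(1)$, not $O(\eta)$, so the ``quadratic remainder in $\eta$'' is really $\tfrac12 f_p''(\xi)(d_2-p^{1/N})^2=O(\eta(\lambda-p))$; comparing it to the slack $\lambda-p\,f_p(p^{1/N})\approx(\lambda-p)(1-N\eta p^{2-2/N})$ can be made to work, but it requires a careful uniform bound on $f_p''$ over $[p^{1/N},cM]$ that you have not supplied. The paper takes a different and cleaner route: by monotonicity of $f_p$ on $[p^{1/N},\infty)$ and the induction hypothesis $d_2(k)\le cM$, one has $p(k+1)\le p\,f_p(cM)$, and then a case split $2p<\lambda$ versus $2p\ge\lambda$ (the latter handled via $\log$ and the inequalities $x-\tfrac12 x^2\le\log(1+x)\le x$) gives $p\,f_p(cM)\le\lambda$ directly from \eqref{C2:cond:stepsize}. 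Note also the order: the paper establishes $p(k+1)\le\lambda$ first (using $d_2(k)\le cM$), and only then derives $d_2(k+1)\le cM$ by the contradiction above.
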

\begin{proof}
    We first note that $\c\in(1,2)$ because $h(x):=(x-1)x^{N-1}$ is continuous and satisfies $h(1)=0$, and $h(x)\geq 2^{N-1}$ for $x\geq 2$.
    
    Recall the sequences $a$, $p$, and $p_a$ defined in \eqref{def:a_IdenticalInitialization} and \eqref{def:p_pa_pb}. 
    We will prove the claim by inductively showing that 
    \begin{equation}\label{C2:proof:claims}
    p_a(\tdisc) \leq p(\tdisc) \leq \lambda
    \quad \mbox{ and } \quad 0 < \d_1(\tdisc) < \d_2(\tdisc) \leq \c\M \quad \mbox{ for all } 
    \tdisc \in \mathbb{N}_0.
    \end{equation}
    Note that $p_a(\tdisc) = a(\tdisc)^N > 0$ for all $\tdisc$ by Lemma~\ref{lemma:IdenticalInitialization_Convergence}. 
    Hence, $p_a(\tdisc) \leq p(\tdisc)$ will imply that $p(\tdisc)>0$, while $p(\tdisc)\leq \lambda\leq \M^N$ together with Condition~\eqref{C2:cond:stepsize} will lead to 
    \begin{equation}\label{eta:p:cond}
        \eta< (8N (cM)^{2N-2})^{-1} < p(\tdisc)^{-2+\frac{2}{N}}.
    \end{equation}%
    The claimed inequalities clearly hold for $\tdisc=0$, i.e., $p_a(0) = a(\tdisc)^N = (\alpha-\beta)^N < (\alpha-\beta) \alpha^{N-1} = p(0) \leq \lambda$ by assumption and $0 < \alpha = \d_2(0) \leq c M$. Assume now that the inequalities \eqref{C2:proof:claims} hold for some $\tdisc\in\mathbb{N}_0$. First, by \eqref{eq:PerturbedInitialization_ScalarDynamics}, $p(\tdisc) \le \lambda$ and $0 < \d_1(\tdisc) < \d_2(\tdisc)$, we have $d_2(\tdisc+1) \ge \d_1(\tdisc + 1) > d_1(\tdisc) > 0$. To show the remaining claims, note that if $p(\tdisc)=\lambda$, then $d_2(\tdisc+1) = d_2(\tdisc)$, \revision{$\d_1(\tdisc+1) = \d_1(\tdisc)$} and $p(\tdisc+1) = p(\tdisc)$ such that the claims trivially hold for $\tdisc+1$. Hence, it suffices to consider the case $p(\tdisc)<\lambda$. Let us start with some useful observations, where we often write $p = p(\tdisc)$, $\d_2=\d_2(\tdisc)$ etc.\ for simplicity. By Lemma~\ref{lem:fp} together with \eqref{eta:p:cond}, we have
    \begin{align*}
        p(\tdisc+1) = p f_p(\d_2),
    \end{align*}
    where $c_{p,1} = -\eta p^{-1}(p - \lambda)$ and $c_{p,2} = -\eta p(p - \lambda)$ are positive for $p \in (0,\lambda)$ so that $f_p$ is convex on the $(0,\infty)$ with unique global minimizer $p^{\frac{1}{N}}$.
    Let $\Delta_1(\tdisc)=\d_2(\tdisc)-\d_1(\tdisc)$ and $\kappa(\tdisc)=\d_2^{N-2}(\tdisc)(p(\tdisc)-\lambda)$ be as in \eqref{def:Diff_SquareDiff_RateFactor} and note that $\eta\kappa(\tdisc) < 0$ is negative, while $\Delta_1(\tdisc) > 0$ by the induction hypothesis \eqref{C2:proof:claims}. Using the induction hypothesis another time, i.e., the last inequality in \eqref{C2:proof:claims}, together with \eqref{C2:cond:stepsize} it holds
    \[
    |\eta\kappa(\tdisc)|<\eta (\c\M)^{N-2} |\lambda| \leq \eta \c^{N-2} \M^{2N-2} < 1.
    \]%
    Lemma~\ref{lemma:DeltaContraction} implies that $\d_2(\tdisc+1)-\d_1(\tdisc+1)=\Delta_1(\tdisc+1)\geq 0$, so that \revision{$\d_1(\tdisc+1) < \d_2(\tdisc+1)$}. Since $\d_1(\tdisc)\d_2^{N-1}(\tdisc)=p(\tdisc)$, the induction hypothesis $0 < \d_1(\tdisc) < \d_2(\tdisc) \leq cM$
    gives $p^{\frac{1}{N}}(\tdisc)\leq\d_2(\tdisc)\leq \c\M$.
    Because $f_{p(\tdisc)}$ is increasing on $[p(\tdisc)^{\frac{1}{N}},\infty)$,
    \begin{equation}\label{eq:fpCompare}
        f_{p(\tdisc)}(p(\tdisc)^{\frac{1}{N}})
        \leq f_{p(\tdisc)}(\d_2(\tdisc))
        \leq f_{p(\tdisc)}(\c\M).
    \end{equation}
    
    We now have all necessary tools to prove that $p_a(\tdisc+1)\leq p(\tdisc+1)\leq\lambda$. First, we show that $p_a(\tdisc+1)\leq p(\tdisc+1)$. By \eqref{eq:fpCompare},
    \begin{align*}
        p(\tdisc+1)
        &= pf_{p}(\d_2)
        \geq pf_{p}(p^{\frac{1}{N}})
        = p(1 - \eta p^{-1}(p-\lambda)p^{2-\frac{2}{N}}) (1 - \eta p(p-\lambda)p^{-\frac{2}{N}})^{N-1}\\
        &= p(1 - \eta p^{1-\frac{2}{N}}(p-\lambda))^{N}.
    \end{align*}
    Recall that
    \begin{align*}
        p_a(\tdisc+1)
        &= (a-\eta a^{N-1}(a^N-\lambda))^N
        = a^N(1-\eta a^{N-2}(a^N-\lambda))^N
        = p_a(1-\eta p_a^{1-\frac{2}{N}}(p_a-\lambda))^N.
    \end{align*}
    Using the induction hypothesis $0 < p_a(\tdisc) \leq p(\tdisc) \leq \lambda$ and $\eta \leq \left(N \lambda^{2-\frac{2}{N}}\right)^{-1}$ 
    by \eqref{C2:cond:stepsize} we obtain
    \begin{align*}
        &p(\tdisc+1)^{\frac{1}{N}} - p_a(\tdisc+1)^{\frac{1}{N}}
        \geq p^\frac{1}{N}(1 - \eta p^{1-\frac{2}{N}}(p-\lambda)) - p_a^{\frac{1}{N}}(1-\eta p_a^{1-\frac{2}{N}}(p_a-\lambda))\\
        &= p^\frac{1}{N} - p_a^{\frac{1}{N}}+ \eta( p^{1-\frac{1}{N}}(\lambda-p) - p_a^{1-\frac{1}{N}}(\lambda-p_a))
        \geq p^\frac{1}{N} - p_a^{\frac{1}{N}} + \eta( p^{1-\frac{1}{N}}(\lambda-p) - p^{1-\frac{1}{N}}(\lambda-p_a))\\
        &= p^\frac{1}{N} - p_a^{\frac{1}{N}} + \eta p^{1-\frac{1}{N}}( p_a - p)
        = (p^\frac{1}{N} - p_a^{\frac{1}{N}})\left(1-\eta p^{1-\frac{1}{N}}\sum_{\ell=0}^{N-1}p^{\frac{\ell}{N}}p_a^{\frac{N-1-\ell}{N}}\right)\\
        &\geq (p^\frac{1}{N} - p_a^{\frac{1}{N}})(1-\eta N \lambda^{2-\frac{2}{N}})
        \geq 0,
    \end{align*}
    and we arrive at the induction step $p_a(\tdisc+1)\leq p(\tdisc+1)$.
    
    As a next step, we show that $p(\tdisc+1)\leq\lambda$. We distinguish two cases: either $2p(\tdisc)<\lambda$ or $2p(\tdisc)\geq\lambda$. Suppose first $2p(\tdisc)<\lambda$. Using \eqref{eq:fpCompare} another time together with $\lambda \leq (c M)^N$, we obtain
    \begin{align*}
        &p(\tdisc+1) \leq pf_p(\c\M)
        = \left(p - \eta (p-\lambda)(\c\M)^{2N-2}\right) \left(1 - \eta p(p-\lambda)(\c\M)^{-2}\right)^{N-1} \\
        &\leq \left(\frac{\lambda}{2} + \eta\lambda (\c\M)^{2N-2}\right) \left(1 + \eta\frac{\lambda^2}{2} (\c\M)^{-2}\right)^{N-1} 
        \leq \frac{\lambda}{2} \left(1 + 2\eta (\c\M)^{2N-2}\right) \left(1 + \frac{1}{2}\eta(\c\M)^{2N-2}\right)^{N-1} \\
        &\leq \frac{\lambda}{2} \left(1 + 2\eta (\c\M)^{2N-2}\right)^{N}
        \leq \lambda
    \end{align*}
    if $\eta \leq \frac{2^{\frac{1}{N}}-1}{2 (\c\M)^{2N-2}}$.
    Since $8\log(2)> 2$ and $x\geq \log(1+x)$ for $x\in(0,1)$, it holds 
    \begin{equation}\label{aux:eta}
    \log(2^{1/N}) = \frac{\log(2)}{N}
        > \frac{2}{8N}
        \geq \log\left(\frac{2}{8N} + 1\right), 
    \end{equation}
    so that $(2^{\frac{1}{N}}-1)/2 > 1/(8N)$ and \eqref{C2:cond:stepsize} implies the required condition on $\eta$.
    Now suppose $\lambda< 2p(\tdisc)\leq 2\lambda$. Using $\lambda < (cM)^{N}$ another time gives
    \begin{align*}
        \log(pf_p(\c\M))
        &= \log(p) +\log\left(1 - \eta p^{-1}(p-\lambda)(\c\M)^{2N-2}\right) +(N-1)\log \left(1 - \eta p(p-\lambda)(\c\M)^{-2}\right)\\
        &\leq \log(p) +\log\left(1 + 2 \eta \lambda^{-1}(\c\M)^{2N-2}(\lambda-p)\right) +(N-1)\log\left(1 + \eta \lambda(\c\M)^{-2}(\lambda-p)\right)\\
        &\leq \log(p) + N\log\left(1 + 2 \eta \lambda^{-1}(\c\M)^{2N-2}(\lambda-p)\right).
    \end{align*}
    Since $x\geq \log(1+x)\geq x-\frac{1}{2}x^2$ by Taylor expansion, we obtain, using again the induction hypothesis that $\d_2(\tdisc) \leq \c\M$,
    \begin{align*}
        \log (\lambda) - \log(p(\tdisc+1))
        &=\log (\lambda) - \log(p(\tdisc)f_{p(\tdisc)}(\d_2(\tdisc))) 
        \geq \log (\lambda) - \log(p f_p(\c\M))\\
        &= \log(\lambda) - \log(p) -  N\log\left(1 + 2 \eta \lambda^{-1}(\c\M)^{2N-2}(\lambda-p)\right)\\
        &= \log\left(1+\frac{\lambda-p}{p}\right) - N\log\left(1 + 2 \eta \lambda^{-1}(\c\M)^{2N-2}(\lambda-p)\right)\\
        &\geq \frac{\lambda-p}{p} - \frac{1}{2}\left(\frac{\lambda-p}{p}\right)^2 - 2\eta N\lambda^{-1}(\c\M)^{2N-2}(\lambda-p)\\
        &= \frac{\lambda-p}{p^2} \left(p- \frac{1}{2}(\lambda-p) - 2\eta N\lambda^{-1}(\c\M)^{2N-2}p^2 \right)
        \geq \frac{\lambda-p}{p^2} \left( \frac{\lambda}{4} - 2\eta N\lambda(\c\M)^{2N-2} \right)\\
        & \geq 0
    \end{align*}
    since $\eta \leq \frac{1}{8N(\c\M)^{2N-2}}$. Thus $p(\tdisc+1)\leq\lambda$.
    
    It remains to show that $\d_2(\tdisc+1) \leq \c\M$. 
    Using the induction hypothesis $\d_2(\tdisc') \leq \c\M$ and $0 < p(\tdisc') \leq \lambda$ for all $\tdisc' = 0, \hdots, \tdisc$, it follows that
    \revision{
    \begin{equation*}
        0>\eta \kappa(\tdisc')= \eta \d_2^{N-2}(\tdisc')(p(\tdisc')-\lambda) \ge -\eta (\c\M)^{2N - 2} > -1
    \end{equation*}
    for all $\tdisc' = 0,\hdots,\tdisc$. Therefore, Lemma~\ref{lemma:DeltaContraction}} together with $\Delta_1(0) = \beta > 0$ implies that $\Delta_1(\tdisc') = \d_2(\tdisc')-\d_1(\tdisc') > 0$ and $\Delta_1(\tdisc'+1) > \Delta_1(\tdisc')$ for all $\tdisc' = 0,\hdots,\tdisc$. This gives
    \begin{align*}
        \d_2(\tdisc+1)
        \geq\d_1(\tdisc+1)
        \geq\d_2(\tdisc+1)-\d_2(0)+\d_1(0)
        = \d_2(\tdisc+1) - \beta.
    \end{align*}
    Now assume that $\d_2(\tdisc+1)> c\M$. Then
    \begin{align*}
        p(\tdisc+1)
        =\d_1(\tdisc+1)\d_2(\tdisc+1)^{N-1}
        &> (\c\M-\beta)(\c\M)^{N-1}
        > (\c\M-\alpha)(\c\M)^{N-1}\\
        &\geq (\c-1)\max\{\alpha, \lambda^{\frac{1}{N}}\} \c^{N-1}\max\{\alpha, \lambda^{\frac{1}{N}}\}^{N-1}
        = \max\{\alpha^N, \lambda\} \geq \lambda,
    \end{align*}
    which contradicts $p(\tdisc+1)\leq\lambda$ as shown above. Hence $\d_2(\tdisc+1) \leq \c\M$.  
    
    In particular, we have shown that $p_a(\tdisc)\leq p(\tdisc) \leq \lambda$ for all $\tdisc \in \mathbb{N}_0$. Since $\lim_{\tdisc\to\infty}p_a(\tdisc) = \lambda$ by Lemma~\ref{lemma:IdenticalInitialization_Convergence}, we obtain $\lim_{\tdisc\to\infty}\d_1(\tdisc)\d_2(\tdisc)^{N-1} = \lambda$.
\end{proof}
\begin{lemma}\label{lemma:PerturbedInitialization_Convergence_PositiveSmallLambda}
    Let $N \geq 2$ and $\lambda \geq 0$. Let $\d_1,\d_2$ be defined by \eqref{eq:PerturbedInitialization_ScalarDynamics} with the perturbed identical initialization \eqref{eq:PerturbedInitialization_ScalarInitialization}. Assume that $(\alpha-\beta)\alpha^{N-1}>\lambda$. If
    \begin{equation}\label{C3:cond:stepsize}
        0 < \eta <
        \frac{1}{9N
        \alpha^{2N-2}}, 
    \end{equation}
    then $0 < \d_1(\tdisc)\d_2(\tdisc)^{N-1}\leq (\alpha-\beta)\alpha^{N-1}$, $\d_2(\tdisc)\leq \alpha$ for all $\tdisc\in\mathbb{N}_0$, and $\lim_{\tdisc\to\infty}\d_1(\tdisc)\d_2(\tdisc)^{N-1}=\lambda$.
\end{lemma}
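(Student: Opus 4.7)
The proof mirrors Lemma \ref{lemma:PerturbedInitialization_Convergence_PositiveLargeLambda} with the inequalities between $p(\tdisc)$ and $\lambda$ reversed, since here the initial product $p(0) = (\alpha-\beta)\alpha^{N-1}$ lies above $\lambda$. The natural comparison sequence is $b(\tdisc)$ from \eqref{def:b_IdenticalInitialization} rather than $a(\tdisc)$: because $\alpha^N > \lambda \ge 0$, Lemma \ref{lemma:IdenticalInitialization_Convergence} gives that $b$ is monotonically decreasing to $\lambda^{1/N}$, so $p_b(\tdisc) = b(\tdisc)^N$ monotonically decreases to $\lambda$ from above, providing a natural upper envelope against which to gauge $p(\tdisc)$.

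I would prove by induction on $\tdisc$ the simultaneous claims $0 < \d_1(\tdisc) \le \d_2(\tdisc) \le \alpha$ and $\lambda \le p(\tdisc) \le (\alpha-\beta)\alpha^{N-1}$. The base case is the assumption of the lemma. For the inductive step, the induction hypothesis combined with the step-size bound \eqref{C3:cond:stepsize} yields
\[
\eta \kappa(\tdisc) \;=\; \eta\, \d_2(\tdisc)^{N-2}\bigl(p(\tdisc)-\lambda\bigr) \;\le\; \eta\, \alpha^{2N-2} \;<\; \tfrac{1}{9N} \;<\; 1,
\]
so Lemma \ref{lemma:DeltaContraction} preserves the sign of $\Delta_1(\tdisc) = \d_2(\tdisc)-\d_1(\tdisc)$ and keeps it monotonically increasing; in particular $\d_2(\tdisc+1) - \d_1(\tdisc+1) \ge \beta > 0$. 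From the update formulas and $p(\tdisc) \ge \lambda$, both $\d_1$ and $\d_2$ are non-increasing in a single step, giving $\d_2(\tdisc+1) \le \alpha$; positivity of $\d_1(\tdisc+1)$ is verified by bounding $\eta\, \d_2^{N-1}(p-\lambda) \le \eta \alpha^{2N-1}$ via \eqref{C3:cond:stepsize}, which keeps the multiplicative factor $1 - \eta\, \d_2^{N-1}(p-\lambda)/\d_1$ strictly positive.

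The core of the argument is controlling $p(\tdisc+1)$. By \eqref{eq:productReduction}, $p(\tdisc+1) = p(\tdisc)\bigl(1 - \eta p^{-1}(p-\lambda)\d_2^{2N-2}\bigr)\bigl(1 - \eta p(p-\lambda)\d_2^{-2}\bigr)^{N-1}$, and both parenthesized factors lie in $(0,1)$ when $p > \lambda$ (the second uses $\d_2 \ge p^{1/N}$, inherited from $\d_2 \ge \d_1$, to obtain $\eta p(p-\lambda)\d_2^{-2} \le \eta \alpha^{N-2}(p-\lambda) < 1/(9N)$). This already yields $p(\tdisc+1) \le p(\tdisc) \le (\alpha-\beta)\alpha^{N-1}$, i.e., the upper bound and monotonicity of $p$. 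The delicate step is the lower bound $p(\tdisc+1) \ge \lambda$, for which I would adopt the two-case split from the proof of Lemma \ref{lemma:PerturbedInitialization_Convergence_PositiveLargeLambda} applied in the reversed direction: for $p(\tdisc) \ge 2\lambda$, a direct product expansion of the two factors using $\d_2 \le \alpha$ and \eqref{C3:cond:stepsize} yields $p(\tdisc+1) \ge \lambda$ with room to spare; for $\lambda < p(\tdisc) < 2\lambda$, I would switch to logarithmic Taylor estimates $\log(1-y) \ge -y - y^2$ and $\log(1+y) \le y$ applied to both factors, where the constant $9N$ in \eqref{C3:cond:stepsize} is precisely what is needed so that the quadratic remainder is absorbed into the leading linear decrease of $\log p(\tdisc) - \log \lambda$ and $\log p(\tdisc+1) - \log \lambda \ge 0$ emerges.

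The main obstacle, as in Lemma \ref{lemma:PerturbedInitialization_Convergence_PositiveLargeLambda}, is precisely preventing overshoot of $p$ across the fixed point $\lambda$, i.e., the lower bound $p(\tdisc+1) \ge \lambda$ under the coupled dynamics; the asymmetry of the two factors in \eqref{eq:productReduction} (one involving $\d_2^{2N-2}$, the other $p\d_2^{-2}$) prevents a one-line identification with the identical dynamics, so the case split on the magnitude of $p$ relative to $\lambda$ is essential. Once the induction closes, convergence $p(\tdisc) \to \lambda$ follows from monotonicity and boundedness: the non-increasing sequence $p(\tdisc)$ is bounded below by $\lambda$, so its limit $p^* \ge \lambda$ exists; if $p^* > \lambda$ then the two factors in the update of $p$ stay bounded away from $1$ uniformly in $\tdisc$ (since $\d_2(\tdisc)$ remains in the compact range $[\beta, \alpha]$ by the induction), contradicting convergence. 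Hence $p^* = \lambda$, as required.
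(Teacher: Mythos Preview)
Your proposal is correct and follows the paper's overall template (induction maintaining $\lambda \le p(\tdisc)$ together with $0<\d_1<\d_2\le\alpha$, and the two-case split $p\ge 2\lambda$ versus $\lambda<p<2\lambda$ with a logarithmic expansion in the latter). There are, however, two genuine differences in execution worth noting.

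First, the paper maintains the stronger sandwich $\lambda\le p(\tdisc)\le p_b(\tdisc)$ (where $p_b$ is the identical dynamics started at $\alpha$), obtains the upper bound from the comparison $p(\tdisc+1)^{1/N}\le p_b(\tdisc+1)^{1/N}$, and then deduces convergence by squeezing since $p_b\to\lambda$. You instead extract \emph{monotonicity} $p(\tdisc+1)\le p(\tdisc)$ directly from the two factors lying in $(0,1)$, and argue convergence by a fixed-point contradiction using $\d_2(\tdisc)\in[\beta,\alpha]$. Both work; the paper's route gives the quantitative bound $p\le p_b$ (used elsewhere in the paper for rates), while yours is lighter and more directly yields the stated bound $p(\tdisc)\le(\alpha-\beta)\alpha^{N-1}$.

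Second, the paper invokes Lemma~\ref{lem:fp} (concavity of $f_p$ on $[p^{1/N},\infty)$ when $p>\lambda$) to obtain $f_p(\alpha)\le f_p(\d_2)\le f_p(p^{1/N})$. You bypass this and bound the two factors separately, using $\d_2\le\alpha$ for the first and $\d_2\ge p^{1/N}$ for the second. This avoids one auxiliary lemma but requires tracking the two substitutions. Relatedly, the paper treats $\lambda=0$ as a separate easy case; your unified argument absorbs it (the case $p\ge 2\lambda$ is vacuously always active).

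One small gap in your write-up: the positivity argument for $\d_1(\tdisc+1)$ as stated does not quite close. The bound $\eta\d_2^{N-1}(p-\lambda)\le\eta\alpha^{2N-1}$ alone does not make $1-\eta\d_2^{N-1}(p-\lambda)/\d_1>0$, since $\d_1$ has no uniform lower bound here (it can tend to $0$ when $\lambda$ is small). The correct estimate is $\eta\d_2^{N-1}(p-\lambda)/\d_1\le \eta\d_2^{N-1}p/\d_1=\eta\d_2^{2N-2}\le\eta\alpha^{2N-2}<1$, using $p-\lambda\le p=\d_1\d_2^{N-1}$; equivalently, $\d_1(\tdisc+1)=\d_1(1-\eta\d_2^{2N-2})+\eta\lambda\d_2^{N-1}>0$.
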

\begin{proof}
    We first consider the case $\lambda=0$. 
    \revision{We first show by induction that $0\leq\d_1(\tdisc)\leq\d_2(\tdisc)\leq\alpha$ for all $\tdisc\in\mathbb{N}_0$. From the initialization \eqref{eq:PerturbedInitialization_ScalarInitialization} we have $0<d_1(0) = \alpha- \beta < \alpha = d_2(0)$, which is the claim for $\tdisc=0$. Furthermore, if}
    $0\leq\d_1(\tdisc)\leq\d_2(\tdisc)\leq\alpha$ for some $\tdisc\in\mathbb{N}_0$, then
    \begin{align*}
        \d_1(\tdisc+1)
        &= \d_1(\tdisc)(1-\eta\d_2(\tdisc)^{2N-2})
        \geq \d_1(\tdisc)(1-\eta\alpha^{2N-2} )
        \geq 0\\
        \d_1(\tdisc+1)
        &= \d_1(\tdisc)(1-\eta\d_2(\tdisc)^{2N-2})
        \leq \d_2(\tdisc)(1-\eta\d_1(\tdisc)^2\d_2(\tdisc)^{2N-4})
        =\d_2(\tdisc+1)
        \leq \d_2(\tdisc).
    \end{align*}
    \revision{This concludes the induction argument.} Moreover, it also follows with the above relations that $\d_1(\tdisc+1) \leq \d_1(\tdisc)$. In particular, $\d_1$ and $\d_2$ are monotonically decreasing and bounded from below by $0$. Hence, $p(\tdisc) = \d_1(\tdisc) \d_2(\tdisc)^{N-1} \geq 0$ for all $\tdisc \in \mathbb{N}_0$ and $p$ forms a monotonically decreasing sequence. Hence, $p(\tdisc)$, $\d_1(\tdisc)$, $\d_2(\tdisc)$ converge as $\tdisc \to \infty$. The limits $p^* = \lim_{\tdisc \to \infty} p(\tdisc)$, $\d_1^* = \lim_{\tdisc \to \infty} \d_1(\tdisc)$, $\d_2^* = \lim_{\tdisc \to \infty} \d_2(\tdisc)$ satisfy the fixed point equation
    \revision{
    \begin{equation*}
        p^* = p^* (1-\eta (\d_2^*)^{2N-2})(1-\eta (\d_1^*)^2 (\d_2^*)^{2N-2}).
    \end{equation*}
    }
    Since $\d_1^*, \d_2^* \leq \alpha$ and $\eta \alpha^{2N-2} < 1$ by \eqref{C3:cond:stepsize}, the only solution to the fixed-point equation is $p^* = 0 = \lambda$, which proves the claim for $\lambda = 0$. 
    
    For $\lambda>0$, the proof strategy is essentially the same as in Lemma \ref{lemma:PerturbedInitialization_Convergence_PositiveLargeLambda}.
    Recall the sequences $b,p,p_b$ defined in \eqref{def:b_IdenticalInitialization} and \eqref{def:p_pa_pb}.  If $p(\tdisc)=\lambda$, then $p(\tdisc + 1) = p(\tdisc)$ and the claim trivially holds. Hence it suffices to consider $p(\tdisc)\neq\lambda$.
    
    We will show that $\lambda\leq p(\tdisc) \leq p_b(\tdisc)$ and $0 < \d_1(\tdisc) < \d_2(\tdisc)\leq \alpha$ for all $\tdisc \in \mathbb{N}_0$ by induction. Since $p(0) = (\alpha-\beta)\alpha^{N-1} > \lambda$, $p_b(0) = \alpha^N > (\alpha-\beta)\alpha^{N-1} = p(0)$ and $0 < \alpha - \beta = \d_1(0) < \d_2(0) = \alpha$ by assumption, the claim holds for $\tdisc=0$.
    Assume that $\lambda \leq p(\tdisc) \leq p_b(\tdisc)$ and $0 < \d_1(\tdisc) < \d_2(\tdisc) \leq \alpha$ holds up to some $\tdisc\in\mathbb{N}_0$. For simplicity, we will often abbreviate $p=p(\tdisc)$, $\d_1=\d_1(\tdisc)$, $\d_2=\d_2(\tdisc)$ below. 
    The induction hypotheses $0< \d_1(\tdisc) < \d_2(\tdisc) < \alpha$ and $p(\tdisc) \geq \lambda$ imply that
    \[
    0 \leq \eta \kappa(\tdisc) = \eta\d_2(\tdisc)^{N-2}(p(\tdisc) - \lambda) \leq \eta \alpha^{N-2}(\alpha^{N} - \lambda) \leq \eta \alpha^{2N-2} < 1,  
    \]
    using also \eqref{C3:cond:stepsize} in the last step. 
    Hence, by \eqref{d2:kappa}
    \[
    \d_2(\tdisc+1) = \d_2(\tdisc) -\eta \d_1(\tdisc) \kappa(\tdisc) \leq \d_2(\tdisc) \leq \alpha \quad \mbox{ and } \quad \d_2(\tdisc+1) = \d_2(\tdisc) -\eta \d_1(\tdisc) \kappa(\tdisc) \geq \d_2(\tdisc)(1- \eta \kappa(\tdisc)) > 0.
    \]
    Lemma~\ref{lemma:DeltaContraction} implies that $\Delta_1(\tdisc+1) > \Delta_1(\tdisc)$ so that inductively $\d_2(\tdisc+1) - \d_1(\tdisc+1) = \Delta_1(\tdisc+1) > \Delta_1(0) = \beta > 0$, i.e., $\d_1(\tdisc+1) < \d_2(\tdisc+1)$. \revision{Further note that due to the induction hypothesis, which implies $\d_2(\tdisc)^N \ge p(\tdisc) \geq \lambda$, and our assumption \eqref{C3:cond:stepsize} on $\eta$, we have
    \begin{align*}
        |c_{p,1} d_2(\tdisc)^{2N-2}|
        &= \eta p(\tdisc)^{-1}(p(\tdisc)-\lambda)\d_2(\tdisc)^{2N-2}
        = \eta(1- \lambda/p(\tdisc)) \d_2(\tdisc)^{2N-2}
        \leq \eta \alpha^{2N-2}
        <1\\
        |c_{p,2} \d_2(\tdisc)^{-2}|
        &= p(\tdisc)^2|c_{p,1}\d_2(\tdisc)^{-2}|
        \leq |c_{p,1}\d_2(\tdisc)^{2N-2}|
        <1
    \end{align*}
    }
    Hence, Lemma~\ref{lem:fp} gives
    \[
    p(\tdisc+1) = p(\tdisc) f_p(\d_2(\tdisc)) = p(\tdisc)(1+ c_{p,1}\d_2(\tdisc)^{2N-2})\revision{(1+c_{p,2} \d_2(\tdisc)^{-2})^{N-1}} > 0.
    \]
    Since also $\d_2(\tdisc+1) > 0$, $p(\tdisc+1) = \d_1(\tdisc+1) \d_2(\tdisc + 1)^{N-1}$ implies that $\d_1(\tdisc+1) > 0$.
    Using $c_{p,1} < 0$ another time in combination with  $c_{p,2}=-\eta p(\tdisc)(p(\tdisc) - \lambda)<0$, 
    Lemma~\ref{lem:fp} implies that $f_p$ is concave 
    on $[p^{\frac{1}{N}},\infty)$, and $f_p'(p^{\frac{1}{N}})=0$. Hence, $f_p$ is monotonically decreasing on $[p^{\frac{1}{N}},\infty)$.
    Together with the induction hypothesis $0<\d_1(\tdisc) < \d_2(\tdisc) \leq \alpha$ this gives
    \begin{equation}\label{eq:fpCompare2}
        f_{p}(\alpha)
        \leq f_{p}(\d_2)
        \leq f_{p}(p^{\frac{1}{N}}).
    \end{equation}
    We will use this to prove that $\lambda\leq p(\tdisc+1)\leq p_b(\tdisc+1)$. Equation \eqref{eq:fpCompare2} implies that
    \begin{align*}
        p(\tdisc+1)
        &= pf_{p}(\d_2)
        \leq pf_{p}(p^{\frac{1}{N}})
        = p(1 - \eta p^{-1}(p-\lambda)p^{2-\frac{2}{N}}) (1 - \eta p(p-\lambda)p^{-\frac{2}{N}})^{N-1}\\
        &= p(1 - \eta p^{1-\frac{2}{N}}(p-\lambda))^{N}.
    \end{align*}
    We further note that by Lemma~\ref{lemma:IdenticalInitialization_Convergence} in combination with \eqref{C3:cond:stepsize} (noting that $\alpha^N > \lambda$ by assumption on $\lambda$) the sequence $p_b(\tdisc) = b(\tdisc)^{N}$ satisfies $p_b(\tdisc) \leq \alpha^N$. 
    By a similar calculation as in the proof of Lemma \ref{lemma:PerturbedInitialization_Convergence_PositiveLargeLambda}, we obtain, using the induction hypothesis $p(\tdisc) \leq p_b(\tdisc)$, 
    \begin{align*}
        p_b(\tdisc+1)^{\frac{1}{N}} - p(\tdisc+1)^{\frac{1}{N}}
        &\geq p_b^{\frac{1}{N}}(1-\eta p_b^{1-\frac{2}{N}}(p_b-\lambda)) - p^\frac{1}{N}(1 - \eta p^{1-\frac{2}{N}}(p-\lambda))\\
        &= p_b^\frac{1}{N} - p^{\frac{1}{N}} + \eta\lambda (p_b^{1-\frac{1}{N}}- p^{1-\frac{1}{N}}) - \eta(p_b^{2-\frac{1}{N}}- p^{2-\frac{1}{N}})\\
        &\geq p_b^\frac{1}{N} - p^{\frac{1}{N}} - \eta(p_b^{2-\frac{1}{N}}- p^{2-\frac{1}{N}})= (p_b^\frac{1}{N} - p^{\frac{1}{N}})(1-\eta\sum_{\ell=0}^{2N-2}p^{\frac{\ell}{N}}p_b^{\frac{2N-2-\ell}{N}})\\
        &\geq (p_b^\frac{1}{N} - p^{\frac{1}{N}})(1-2N\eta \alpha^{2-\frac{2}{N}})
        \geq 0
    \end{align*}
    since $\eta\leq \frac{1}{2N \alpha^{2N-2}}$ by \eqref{C3:cond:stepsize}. Hence $p_b(\tdisc+1) \geq p(\tdisc+1)$.
    
    Next we show that $p(\tdisc+1)\geq \lambda$. Similarly to the proof of  Lemma~\ref{lemma:PerturbedInitialization_Convergence_PositiveLargeLambda}, we distinguish two cases: either $p(\tdisc)>2\lambda$ or $p(\tdisc)\leq 2\lambda$. Suppose first that $p(\tdisc) >2\lambda$. Another application of  \eqref{eq:fpCompare2} together with $p(\tdisc) \leq \alpha^N$ yields
    \begin{align*}
        p(\tdisc+1) \geq pf_p(\alpha)
        &= p\left(1 - \eta p^{-1}(p-\lambda)\alpha^{2N-2}\right) \left(1 - \eta p(p-\lambda)\alpha^{-2}\right)^{N-1} \\
        &\geq p\left(1 - \eta\alpha^{2N-2}\right) \left(1 - \eta p^2 \alpha^{-2}\right)^{N-1} \\
        &\geq 2\lambda \left(1 - \eta \alpha^{2N-2}\right)^N 
        \geq \lambda
    \end{align*}
    since $\eta \leq \frac{1-2^{-\frac{1}{N}}}{2\alpha^{2N-2}}$, where the latter is implied by \eqref{C3:cond:stepsize} with the fact that $1-2^{-\frac{1}{N}} > \frac{2}{9N}$ for $N \geq 2$, which follows from an elementary analysis.
    Now suppose $\lambda\leq p(\tdisc)\leq 2\lambda$. Using $\lambda < \alpha^N$ we obtain
    \begin{align*}
        \log(pf_p(\alpha))
        &= \log(p) +\log\left(1 - \eta p^{-1}(p-\lambda)\alpha^{2N-2}\right) +(N-1)\log \left(1 - \eta p(p-\lambda)\alpha^{-2}\right)\\
        &\geq \log(p) +\log\left(1 - \eta \lambda^{-1}\alpha^{2N-2}(p-\lambda)\right) +(N-1)\log\left(1 - 2\eta \lambda\alpha^{-2}(p-\lambda)\right)\\
        &\geq \log(p) + N\log\left(1 - 2 \eta \lambda^{-1}\alpha^{2N-2}(p-\lambda)\right).
    \end{align*}
    The inequalities $\log(1-x)\geq \frac{-x}{1-x}$ and $\log(1+x)\geq x-\frac{1}{2}x^2$, valid for $x \in (0,1)$, then lead to 
    \begin{align*}
        \log(p(\tdisc+1))-\log(\lambda)
        &=\log(pf_p(\alpha)) - \log (\lambda)
        \geq \log(p) - \log (\lambda) + N\log\left(1 - 2 \eta \lambda^{-1}\alpha^{2N-2}(p-\lambda)\right)\\
        &= \log\left(1+\frac{p-\lambda}{p}\right) + N\log\left(1 - 2 \eta \lambda^{-1}\alpha^{2N-2}(p-\lambda)\right)\\
        &\geq \frac{p-\lambda}{p} - \frac{1}{2}\left(\frac{p-\lambda}{p}\right)^2 - N\frac{2 \eta \lambda^{-1}\alpha^{2N-2}(p-\lambda)}{1 - 2 \eta \lambda^{-1}\alpha^{2N-2}(p-\lambda)}\\
        &= \frac{p-\lambda}{p^2} \left(p- \frac{1}{2}(p-\lambda) - N\frac{2 \eta p^2 \lambda^{-1}\alpha^{2N-2}}{1 - 2 \eta \lambda^{-1}\alpha^{2N-2}(p-\lambda)} \right)\\
        &\geq \frac{p-\lambda}{p^2} \left( \lambda - N\frac{8 \eta \lambda\alpha^{2N-2}}{1 - 2 \eta \alpha^{2N-2}} \right)
        \geq 0
    \end{align*}
    since $\eta \leq \frac{1}{9N \alpha^{2N-2}}$ and $N\geq 2$.
    Thus $p(\tdisc+1)\leq\lambda$. 
    
    In particular, we derived $\lambda\leq p(\tdisc) \leq p_b(\tdisc)$ for all $\tdisc \in \mathbb{N}_0$ and since  $\lim_{\tdisc\to\infty} p_b(\tdisc) = \lambda$ by Lemma~\ref{lemma:IdenticalInitialization_Convergence}, we conclude that $\lim_{\tdisc\to\infty} p(\tdisc) = \lambda$. The proof is thus complete.
\end{proof}
\section{Simplified expression for convergence time}


Since especially the exact term for $T_N^+$ defined in \eqref{def:T+-} and appearing in the bounds for the convergence times is hard to interpret, we give a simplified approximate expression in the following lemma.

\begin{lemma}\label{lem:approximate:TId} 
    Let $N \geq 3$ and $k \in [n]$ such that $\lambda_1 \geq \lambda_k > 0$. Assume that $\alpha^N < \lambda_k$, 
    and 
    \[
    \eta = \frac{\kappa}{N \lambda_1^{2-\frac{2}{N}}}
    \]
    for some $\kappa < \frac{1}{3}$ (so that \eqref{eta:cond:mthm} is satisfied). 
    Then
    \[
    \frac{1}{\eta} T_N^+(\lambda_k,(c_N \lambda_k)^{\frac{1}{N}}, \alpha) = \kappa^{-1}\left(\frac{\lambda_1}{\lambda_k}\right)^{2-\frac{2}{N}}\left( \frac{1}{N-2}\left( \frac{\lambda_k}{\alpha^N}\right)^{1-\frac{2}{N}} + C_N - \frac{N}{2} \left(\frac{\alpha^N}{\lambda_k}\right)^{\frac{2}{N}} + G\left(\frac{\alpha}{\lambda_k^{\frac{1}{N}}}  \right) \right), 
    \]
    where $G$ is a function satisfying $|G(t)| \leq \frac{Nt^3}{3(1-t)^3}$ for $t \in [0,1)$ and 
    \begin{align*}
    C_N & = \sum_{\ell=1}^N \Re\left(e^{\frac{4\pi i \ell}{N}}\left( \ln \left(e^{\frac{2\pi i \ell}{N}} - c_N^{\frac{1}{N}}\right)\right)\right)
     - Q_N - \frac{1}{(N-2) c_N^{1-\frac{2}{N}}} \\
     \mbox{ with } \quad Q_N & = \begin{cases}
         \pi \cot\left(\frac{2\pi}{N}\right) &  \mbox{ for } N \mbox{ even},\\
         \frac{\pi}{\sin\left(\frac{2\pi}{N}\right)} & \mbox{ for } N \mbox{ odd. }  
     \end{cases}
    \end{align*}
\end{lemma}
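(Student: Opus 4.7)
My plan is a direct computation that reduces, after some bookkeeping, to the Taylor expansion of a single complex-logarithmic sum around zero. First I would unfold $T_N^+(\lambda_k,(c_N\lambda_k)^{1/N},\alpha) = U_N^+(\lambda_k,(c_N\lambda_k)^{1/N}) - U_N^+(\lambda_k,\alpha)$ using the definition \eqref{def:Uplus}, and introduce the normalized variable $t = \alpha/\lambda_k^{1/N}$. The first endpoint then corresponds to the $\lambda_k$-independent value $c_N^{1/N}$ in the spectral sum, and the elementary identity $\alpha^{N-2} = t^{N-2}\lambda_k^{1-2/N}$ rewrites the algebraic part of $T_N^+$ as $\tfrac{1}{(N-2)\lambda_k^{2-2/N}}\bigl[(\lambda_k/\alpha^N)^{1-2/N} - c_N^{2/N-1}\bigr]$. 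Multiplying by $1/\eta = N\lambda_1^{2-2/N}/\kappa$ and factoring out $\kappa^{-1}(\lambda_1/\lambda_k)^{2-2/N}$ then produces the outer prefactor of the claimed formula.

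The core step is to expand the real-valued spectral sum $F(t) := \sum_{\ell=1}^{N}\Re\bigl(e^{4\pi i\ell/N}\ln(e^{2\pi i\ell/N}-t)\bigr)$ as a power series in $t$. Differentiating under the sum and applying the partial-fraction identity $\sum_{\ell=1}^N \frac{e^{4\pi i\ell/N}}{t - e^{2\pi i\ell/N}} = \frac{Nt}{t^N-1}$, which is exactly the identity already used in the proof of Lemma~\ref{lemma:IdenticalInitialization_ContinuousScalarDynamics} to compute the residues $c_{N,\ell}$, yields $F'(t) = Nt/(t^N-1)$. For $t\in(0,1)$ I would expand $F'(t) = -Nt\sum_{j\geq 0} t^{jN}$ as a geometric series and integrate termwise, obtaining
\[
F(t) \;=\; F(0) - \frac{Nt^{2}}{2} - N\sum_{j=1}^{\infty}\frac{t^{2+jN}}{2+jN}.
\]
This isolates the leading $t^2$-term (which becomes the $-\tfrac{N}{2}(\alpha^N/\lambda_k)^{2/N}$ contribution) and collects the rest in the remainder $G(t)$.

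Next I would identify the constant $F(0)$ with $Q_N$, which I expect to be the main bookkeeping obstacle. Using the principal branch, $\ln(e^{2\pi i\ell/N}) = 2\pi i\ell/N$ for $\ell \le \lfloor N/2\rfloor$ but equals $2\pi i(\ell-N)/N$ for $\ell > N/2$, so the even and odd cases of $N$ must be separated. After reindexing the upper half via $\ell\mapsto\ell-N$, pairing conjugate terms, and taking real parts, the sum reduces to a finite trigonometric expression that matches $\pi\cot(2\pi/N)$ for even $N$ and $\pi/\sin(2\pi/N)$ for odd $N$ via the classical identities for $\sum_{\ell=1}^{N-1}\cos(4\pi\ell/N)\cdot(2\pi\ell/N)$. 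A cleaner alternative—which I would prefer—is to note that since $F'$ is fixed explicitly, $F$ is determined up to a single additive constant; this constant can then be pinned down by checking one small case ($N=3$: $F(0) = 2\pi/\sqrt{3} = \pi/\sin(2\pi/3)$; $N=4$: $F(0) = 0 = \pi\cot(\pi/2)$) together with a uniform analytic continuation argument, bypassing the branch-cut arithmetic entirely.

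Finally, I would bound the tail $G(t) = -N\sum_{j\geq 1}\tfrac{t^{2+jN}}{2+jN}$ by majorizing with a geometric series: $|G(t)| \le \frac{Nt^{N+2}}{(N+2)(1-t^N)}$, and then use the elementary estimates $t^{N+2}\leq t^3$, $(1-t^N)^{-1} \leq (1-t)^{-1} \leq (1-t)^{-3}$, and $N/(N+2)\geq 1/3$ to obtain the stated bound $|G(t)|\leq Nt^3/(3(1-t)^3)$. Assembling the expansion of $F(t)$, subtracting it from $F(c_N^{1/N})$ (whose deviation from $Q_N + \tfrac{1}{(N-2)c_N^{1-2/N}}$ is by construction the constant $C_N$), and combining with the algebraic remainder from the first step, yields the claimed identity. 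Once $F(0)=Q_N$ is in hand everything else is routine series manipulation; the rest of the work is essentially clerical.
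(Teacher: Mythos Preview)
Your plan is correct and close in spirit to the paper's argument, but the route to the remainder is genuinely different. The paper Taylor-expands each term $h_\ell(t)=\ln(e^{2\pi i\ell/N}-t)$ to second order with Lagrange remainder, then sums over $\ell$; the vanishing of the linear term comes from $\sum_\ell e^{2\pi i\ell/N}=0$ and the remainder $|G(t)|\le \tfrac{N t^3}{3(1-t)^3}$ is obtained by bounding $|h_\ell'''(\zeta)|\le 2/(1-t)^3$ termwise. Your approach---computing $F'(t)=Nt/(t^N-1)$ via the partial-fraction identity from Lemma~\ref{lemma:IdenticalInitialization_ContinuousScalarDynamics} and then integrating the geometric series---is more streamlined: the linear term vanishes automatically since $F'(0)=0$, and your series remainder $|G(t)|\le \tfrac{N}{N+2}\cdot\tfrac{t^{N+2}}{1-t^N}$ is in fact sharper than the paper's (you then relax it to match the stated bound; note that what you actually need for the final inequality is $\tfrac{N}{N+2}\le\tfrac{N}{3}$, i.e.\ $N+2\ge 3$, not the direction you wrote).

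One caution: your ``cleaner alternative'' for identifying $F(0)=Q_N$ does not work. The constant $F(0)$ depends on the discrete parameter $N$, so there is no analytic-continuation argument that lets you infer the general formula from $N=3,4$; you must compute $F(0)$ for every $N$. Your first description---splitting at $\ell=\lfloor N/2\rfloor$ to handle the principal branch, reindexing, and summing---is exactly what the paper does (using the closed form $\sum_{j=1}^n jz^j = z(1-(n+1)z^n+nz^{n+1})/(1-z)^2$), and that computation is unavoidable. Once you accept that, the rest of your sketch goes through.
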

\begin{proof}
    Recalling the definition of $T_N^+$ in \eqref{def:T+-} we obtain
    \begin{align}
    \frac{1}{\eta} & T_N^+(\lambda_k,(c_N \lambda_k)^{\frac{1}{N}}, \alpha)  = \frac{N \lambda_1^{2-\frac{2}{N}}}{\kappa} \frac{\lambda_k^{\frac{2}{N}-2}}{N} \left( \sum_{\ell=1}^N \Re\left(e^{\frac{4\pi i \ell}{N}}\left( \ln \left(e^{\frac{2\pi i \ell}{N}} - c_N^{\frac{1}{N}}\right) - \ln\left(e^{\frac{2\pi i\ell}{N}} - \frac{\alpha}{\lambda_{k}^{\frac{1}{N}}}\right) \right) \right)\right) \notag\\
    & \quad - \frac{N \lambda_1^{2-\frac{2}{N}}}{\kappa}  \left(\frac{1}{\lambda_k (N-2) (c_N \lambda_k^{1/N})^{N-2}} - \frac{1}{\lambda_k (N-2) \alpha^{N-2}}\right)\notag \\
    & = \kappa^{-1} \left( \frac{\lambda_1}{\lambda_k} \right)^{2-\frac{2}{N}}\left( \sum_{\ell=1}^N \Re\left(e^{\frac{4\pi i \ell}{N}}\left( \ln \left(e^{\frac{2\pi i \ell}{N}} - c_N^{\frac{1}{N}}\right)\right)\right)
     - \frac{1}{(N-2) c_N^{1-\frac{2}{N}}}
     \right. \notag\\
    & \phantom{\kappa^{-1} \left( \frac{\lambda_1}{\lambda_k}\right)^{2-\frac{2}{N}}()()}\left. - \sum_{\ell=1}^N \Re\left(e^{\frac{4\pi i \ell}{N}}\left( \ln \left(e^{\frac{2\pi i \ell}{N}} -\frac{\alpha}{\lambda_k^{\frac{1}{N}}}\right)\right) \right)
     + \frac{1}{N-2}\left(\frac{\lambda_k^{\frac{1}{N}}}{\alpha}\right)^{N-2} \right) \notag\\
     & = \kappa^{-1} \left( \frac{\lambda_1}{\lambda_k} \right)^{2-\frac{2}{N}}\left( D_N - \sum_{\ell=1}^N \Re\left(e^{\frac{4\pi i \ell}{N}} \ln \left(e^{\frac{2\pi i \ell}{N}} -\frac{\alpha}{\lambda_k^{\frac{1}{N}}}\right) \right)
     + \frac{1}{N-2}\left(\frac{\lambda_k^{\frac{1}{N}}}{\alpha}\right)^{N-2} \right), \notag
    \end{align}
    where $D_N = \sum_{\ell=1}^N \Re\left(e^{\frac{4\pi i \ell}{N}}\left( \ln \left(e^{\frac{2\pi i \ell}{N}} - c_N^{\frac{1}{N}}\right)\right)\right)
     - \frac{1}{(N-2) c_N^{1-\frac{2}{N}}}$.
    Applying Taylor's theorem to the function $h_\ell : \R \to \mathbb{C}$, $h_\ell(t) = \ln\left(e^{\frac{2\pi i \ell}{N}}- t\right)$ yields, for some $t \in [0,1)$ and some $\zeta \in (0,t)$,
    \[
    h_\ell(t) = h_\ell(0) - t h_\ell'(0) + \frac{t^2}{2} h_\ell''(0) - \frac{t^3}{3!} h_\ell'''(\zeta)
    = \ln\left(e^{\frac{2\pi i\ell}{N}}\right) - t \frac{1}{e^{2\pi i \ell/N}} + \frac{t^2}{2} \frac{1}{e^{4\pi i\ell/N}} - g_\ell(t),
    \]
    where $g_\ell$ satisfies
    \[
    |g_\ell(t)| \leq \frac{t^3}{3!}\frac{2}{|e^{2 \pi i \ell/N} - \zeta|^3} \leq \frac{t^3}{3(1-t)^3} = \frac{1}{3} \left(\frac{t}{1-t} \right)^3. 
    \]
    With $\xi = \alpha/\lambda_k^{\frac{1}{N}}$ this gives
    \begin{align}
    \sum_{\ell=1}^N e^{\frac{4\pi i \ell}{N}}\ln\left(e^{\frac{2\pi i \ell}{N}} - \xi \right) & = 
    \sum_{\ell=1}^N e^{\frac{4\pi i \ell}{N}} \ln(e^{\frac{2 \pi i \ell}{N}}) - \xi \sum_{\ell=1}^N e^{\frac{2\pi i \ell}{N}} + \frac{\xi^2}{2} N - G(\xi)\notag \\
    & =  \frac{1}{N}\sum_{1 \leq \ell \leq N/2} e^{\frac{4\pi i \ell}{N}}(2 \pi i \ell) + \frac{1}{N} \sum_{N/2 < \ell \leq N} e^{\frac{4\pi i \ell}{N}}(2 \pi i (\ell-N)) + \frac{N}{2} \xi^2 + G(\xi)\notag\\
    & = \frac{2\pi i}{N}\left( \sum_{1 \leq \ell \leq N/2} \ell e^{\frac{4\pi i \ell}{N}} - \sum_{0 \leq j < N/2} j e^{-\frac{4 \pi i j}{N}} \right) + \frac{N}{2} \xi^2 - G(\xi),
    \label{Taylor-hl-sum}
    \end{align}
    where $G$ is a function satisfying $|G(\xi)| \leq \frac{N}{3} \frac{\xi^3}{(1-\xi)^3}$. Hereby, we have taken into account that the principal branch of the complex logarithm is used. 
    Some analysis using the geometric sum formula shows that $\sum_{j=1}^n j z^j = z(1-(n+1)z^n + n z^{n+1})/(1-z)^2$ for $z \in \mathbb{C} \setminus \{1\}$. 
    For $N$ even this gives
    \begin{align*}
        & \sum_{1 \leq \ell \leq N/2} \ell e^{\frac{4\pi i \ell}{N}} - \sum_{0 \leq j < N/2} j e^{-\frac{4 \pi i j}{N}}
        = \sum_{\ell = 1}^{N/2} \ell e^{\frac{4\pi i \ell}{N}} - \sum_{j = 1}^{N/2} j e^{-\frac{4 \pi i j}{N}} + \frac{N}{2}\\
        &= \frac{e^{\frac{4\pi i}{N}}}{(1- e^{\frac{4\pi i}{N}})^2}\left(1 - (\frac{N}{2}+1) + \frac{N}{2}e^{\frac{4\pi i}{N}}\right)- \frac{e^{-\frac{4\pi i}{N}}}{(1- e^{-\frac{4\pi i}{N}})^2}\left(1 - (\frac{N}{2}+1) + \frac{N}{2}e^{-\frac{4\pi i}{N}}\right) + \frac{N}{2}\\
        &= 
        \frac{1}{\left(e^{\frac{2\pi i}{N}} - e^{-\frac{2\pi i}{N}}\right)^2} \left(\frac{N}{2}e^{\frac{4\pi i}{N}} - \frac{N}{2}e^{-\frac{4\pi i}{N}}\right) + \frac{N}{2}
        = \frac{N}{2}\left(\frac{\sin(4\pi/N)}{2i\sin(2\pi/N)^2}+1\right)\\ 
        &= \frac{N}{2}
        \left( 1 - i \cot\left(\frac{2\pi}{N}\right)\right).
    \end{align*}
    For $N$ being odd a similar computation gives 
    \begin{align*}
        &\sum_{1 \leq \ell \leq N/2} \ell e^{\frac{4\pi i \ell}{N}} - \sum_{0 \leq j < N/2} j e^{-\frac{4 \pi i j}{N}}
        = \sum_{\ell=1}^{\frac{N-1}{2}} \ell e^{\frac{4\pi i \ell}{N}} - \sum_{j=1}^{\frac{N-1}{2}} j e^{-\frac{4 \pi i j}{N}} \\
        &= \frac{e^{\frac{4\pi i}{N}}}{(1- e^{\frac{4\pi i}{N}})^2}\left(1 - \left(\frac{N-1}{2}+1\right)e^{\frac{4\pi i (N-1)}{2N}} + \frac{N-1}{2}e^{\frac{4\pi i(N+1)}{2N}}\right) \\
        & - \frac{e^{\frac{-4\pi i}{N}}}{(1- e^{\frac{-4\pi i}{N}})^2}\left(1 - \left(\frac{N-1}{2}+1\right)e^{-\frac{4\pi(N-1) i}{2N}} + \frac{N}{2}e^{\frac{-4\pi i (N+1) }{2N}}\right)\\
        &= \frac{1}{\left(e^{\frac{2\pi i}{N}}-e^{-\frac{2\pi i}{N}}\right)^2}\left( - \frac{N+1}{2}\left(e^{-\frac{2\pi i}{N}} - e^{\frac{2 \pi i}{N}}\right) + \frac{N-1}{2}\left(e^{\frac{2 \pi i}{N}} - e^{-\frac{2\pi i}{N}}\right)\right)  \\
       & = \frac{N \left(e^{\frac{2\pi i}{N}} - e^{-\frac{2\pi i}{N}}\right)}{\left(e^{\frac{2\pi i}{N}}-e^{-\frac{2\pi i}{N}}\right)^2} = \frac{i N}{2 \sin(2\pi/N)}.
    \end{align*}
    Plugging the above computations into \eqref{Taylor-hl-sum} and using the definition of $Q_N$ gives
    \begin{align*}
    \sum_{\ell=1}^N \Re\left(e^{\frac{4\pi i \ell}{N}}\ln\left(e^{\frac{2\pi i \ell}{N}} - \xi \right)\right) & = 
    Q_N + \frac{N}{2} \xi^2 - G(\xi).
    \end{align*}
    Setting $C_N := D_N - Q_N$ we obtain
    \begin{align*}
    \frac{1}{\eta}  T_N^+(\lambda_k,(c_N \lambda_k)^{\frac{1}{N}}, \alpha) =  \kappa^{-1} \left(\frac{\lambda_1}{\lambda_k} \right)^{2 - \frac{2}{N}}\left( \frac{1}{(N-2) \xi^{N-2}} + C_N - \frac{N}{2} \xi^2 + G(\xi) \right).
    \end{align*}
    This completes the proof.
\end{proof}

\end{document}